\title{PAC Best Arm Identification Under a Deadline}
\author{%
  Brijen Thananjeyan \hspace{0.2in} Kirthevasan Kandasamy \hspace{0.2in} Ion Stoica \hspace{0.2in} Michael I. Jordan\\
  \textbf{Ken Goldberg} \hspace{0.2in} \textbf{Joseph E. Gonzalez}\\
  University of California, Berkeley
  
}
\begin{document}

\maketitle

\begin{abstract}
We study $(\epsilon, \delta)$-PAC best arm identification, where a decision-maker must identify an $\epsilon$-optimal arm with probability at least $1 - \delta$, while minimizing the number of arm pulls (samples). Most of the work on this topic is in the sequential setting, where there is no constraint on the \emph{time} taken to identify such an arm; this allows the decision-maker to pull one arm at a time. In this work, the decision-maker is given a deadline of $T$ rounds, where, on each round, it can adaptively choose which arms to pull and how many times to pull them; this distinguishes the number of decisions made (i.e., time or number of rounds) from the number of samples acquired (cost). Such situations occur in clinical trials, where one may need to identify a promising treatment under a deadline while minimizing the number of test subjects, or in simulation-based studies run on the cloud, where we can elastically scale up or down the number of virtual machines to conduct as many experiments as we wish, but need to pay for the resource-time used. As the decision-maker can only make $T$ decisions, she may need to pull some arms excessively relative to a sequential algorithm in order to perform well on all possible problems. We formalize this added difficulty with two hardness results that indicate that unlike sequential settings, the ability to adapt to the problem difficulty is constrained by the finite deadline. We propose \algname ~(\algabbr{}), a novel algorithm for this setting and bound its sample complexity, showing that \algabbr{} is optimal with respect to both hardness results. We present simulations evaluating \algabbr{} in this setting, where it outperforms baselines by several orders of magnitude.
\end{abstract}

\section{Introduction}
\label{sec:intro}


\newcommand{\insertFigIllus}{
\begin{figure}[t!]
    \centering
    \includegraphics[width=4.01in]{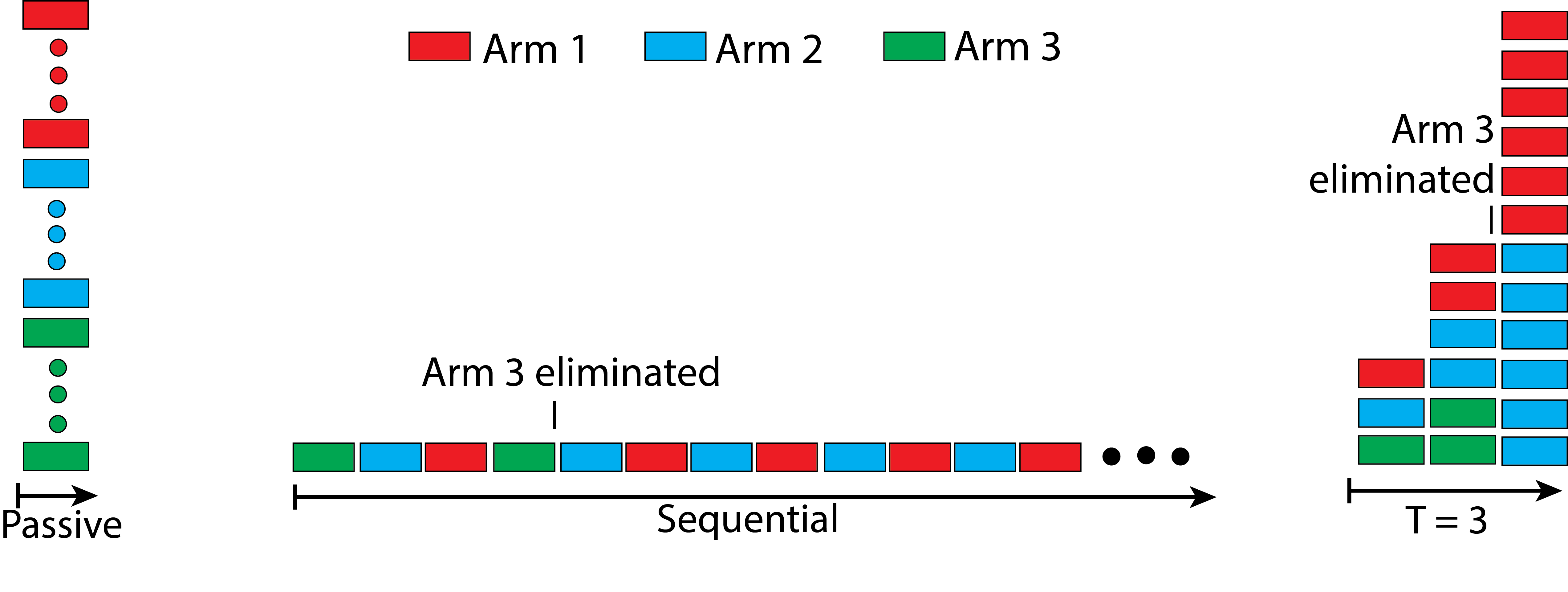}
\vspace{-0.15in}
    \caption{\small
An $(\epsilon, \delta)$-PAC BAI problem on a bandit model with three arms where
$\mu_1>\mu_2 \gg \mu_3$, i.e., the third arm is `easy' as it can be differentiated with a few
samples.
\emph{Left:}. When $T = 1$, all arms must be pulled many ($\bigOtilde(n\epsilon^{-2})$) times
so as to ensure we do well even on the hardest problems.
This can result in over-pulling the third arm.
\emph{Middle:} 
When $T=\infty$, the third arm can be eliminated after a minimal amount of pulls;
however, this can take a long time (number of rounds).
\emph{Right:}
When $T<\infty$, the number of pulls for arm $3$ is smaller than $T=1$ but larger than $T=\infty$.
However, it finishes under the deadline.
\label{fig:settingillus}}
\vspace{-0.15in}
\end{figure}
}

In best arm identification (BAI), a decision-maker draws samples from a \emph{bandit model}
$\nu = (\nu_1,\dots,\nu_n)$ of $n$ arms, where,
upon pulling arm $i$, it receives a stochastic reward drawn from a real-valued
$\sigma$ sub-Gaussian distribution $\nu_i$, with mean $\mui$.
The goal is to identify the best or at least a
good arm, i.e., an arm with large $\mui$.
In the $(\epsilon,\delta)$-PAC (probably approximately correct) version of this problem,
we wish to identify an $\epsilon$--optimal
arm with probability at least $1-\delta$,
 while minimizing the  number of arm pulls (i.e. number of samples or cost).
Most of the literature on this topic is restricted to
the sequential setting, where the decision-maker adaptively draws samples
one at a time.
In this work, we instead study PAC BAI when the decision-maker needs to complete
its experimentation under a deadline of $T$ rounds.
In order to be able to fulfill the $(\epsilon,\delta)$ performance requirement,
executing multiple arm pulls per round is allowed, where this decision can be made adaptively
based on past information.
This setting has recently received attention in the
literature~\citep{agarwal2017learning,jin2019efficient}, driven by emerging opportunities
and challenges in various applications. We list some below:

\textbf{Clinical trials:}
Consider identifying, among $n$ candidates, a good vaccine for a viral disease.
Each arm pull consists of administering the vaccine to a test subject
and monitoring the subject's health for a period of time, say two months.
Given  $(\epsilon, \delta)$ performance requirements,
we wish to identify the candidate in under a year (i.e. $T=6$),
while reducing the number of arm pulls for ethical reasons and
to reduce cost.
Similar use cases arise when conducting experiments to identify a good candidate
in drug discovery or in materials science, where high-throughput experimental platforms
can be used to simultaneously
conduct a large number of experiments in parallel, but the number of experiments (samples)
needs to be minimized to reduce the cost of reagents~\citep{dave2020autonomous,broach1996high}.

\insertFigIllus

\textbf{Configuration tuning on the cloud:}
BAI is used for configuration tuning tasks in
statistical model selection, simulation-based scientific
studies, and optimizing real-time
systems~\citep{li2017hyperband,tegmark06lrgs,venkataraman2016ernest,liaw2019hypersched,%
kandasamy2020tuning}. 
Here, each arm pull consists of running a computer-based experiment using a single
resource (e.g., a CPU or GPU).
While past work assumes a single resource or a fixed number of parallel resources,
with the advent of cloud computing we have the opportunity to elastically
scale up or down the resources we use for arm pulls,
while paying for the total resources used~\citep{misra2021rubberband}.
We wish to minimize the number of arm pulls in order to minimize our payments to the cloud provider.


This setting can be viewed as an intermediary between \emph{passive}
(completely non-adaptive, i.e., $T=1$)
and \emph{sequential} (completely adaptive, i.e., $T=\infty$) BAI.
See Fig.~\ref{fig:settingillus} for an illustration.
In the passive case, we have no option but to pull each arm
$\epsilon^{-2}$ times, since, in the hardest case, all sub-optimal arms could be
arbitrarily close to being $\epsilon$ away from the optimal arm\footnote{
Recall that
$\bigThetatilde(\Delta^{-2})$ samples are both sufficient and necessary to
distinguish between two $1$-sub-Gaussian random variables whose means are $\Delta$
apart with constant
probability~\citep{chernoff1972sequential,kaufmann2016complexity,farrell1964asymptotic}.
}.
This requires $\bigOtilde(n\epsilon^{-2})$ samples regardless of the difficulty of the specific
problem (bandit model) $\nu$.
In the sequential setting, an agent may pull one arm at a time and incorporate
information from previous pulls to decide which arm to pull next.
In this case,  the sample complexity on a bandit model $\nu$ is known to be
$\bigOtilde(\Hcompl(\nu))$, where 
$\Hcompl(\nu)$, defined below, characterizes the difficulty of problem
 $\nu$~\citep{kaufmann2016complexity,mannor2004sample}.
We have:
\vspace{-0.025in}
\begin{align*}
\hspace{-0.07in}
\Hcompl(\nu) = \sum_{i=1}^n\Niopt,
\hspace{0.30in}
 \Niopt =
  \begin{cases}
   \epsilon^{-2}        & \text{if } \Delta_i < \epsilon \\
   (\Delta_i + \epsilon)^{-2}    & \text{otherwise}
  \end{cases},
\hspace{0.30in}
 \Delta_i =
  \begin{cases}
   \mu_{[1]} - \mu_i        & \text{if } i \neq [1] \\
   \mu_{[1]} - \mu_{[2]}    & \text{if } i = [1]
  \end{cases}.
\vspace{-0.025in}
\label{eqn:gapsdefn} \numberthis
\end{align*}
Here\footnote{%
Prior results for sequential $(\epsilon,\delta)$-PAC
BAI, use slightly different expressions for $\Hcompl(\nu)$,
e.g. Remark 5 in~\citet{kaufmann2016complexity}.
These expressions are not fundamentally different from~\eqref{eqn:gapsdefn} with the upper
and lower
bounds only differing by constants.
}, the arm with the $j$-th highest mean is denoted by $[j]$;
$\Delta_i$ denotes the gap between the $i$\ssth arm and the best arm, except when $i$ is the
best arm in which case it denotes the gap between itself and the next best arm;
$\Niopt$ quantifies the (order of the) number of samples required from arm $i$ to determine if
it is optimal.
This shows that when compared to the passive setting,
 sequential algorithms can be significantly cheaper in terms of
the number of samples (arm pulls), as they are
able to adapt to problem difficulty.
In particular, when the gap
$\Delta_i$ is large, $\Niopt \ll \epsilon^{-2}$.
Unfortunately, this takes as many rounds as the number of samples which can be prohibitive
for practical use cases.

The $1<T<\infty$ case, which reflects most practical settings for BAI,
was surprisingly only recently studied in the
literature~\citep{jin2019efficient,agarwal2017learning}.
Here, we are allowed to be adaptive, but to a limited extent, and
therefore need to be prudent in how we use this adaptivity.
If we choose to invest few arm pulls per arm in the early rounds in the hope that we can
eliminate some easy (large $\Delta_i$) arms at low cost, we risk not eliminating
arms with intermediate difficulty.
In contrast, if we allocate too many pulls for all arms early on, we may have already spent too much
to eliminate the easy arms.
We show the number of samples required in this setting falls in
between the results for the $T=1$ and $T=\infty$ cases.

\textbf{Our contributions}:
\textbf{\emph{(i)}}
In Section~\ref{sec:setup},
we formalize the problem, and in Section~\ref{sec:method}, we describe \algname{} (\algabbr), a
racing-style algorithm for this problem.
We provide two upper bounds on its sample complexity.
The first  partitions the problem space into progressively harder problems,
and bounds the number of samples required by \algabbr{} uniformly for each partition.
This partitioning, which depends on  $\epsilon$ and $T$,
approaches individual problems as $T\rightarrow \infty$.
Our second upper bound is problem-dependent, showing that no more than
$\bigOtilde(\epsilon^{\nicefrac{-2}{T}}\Hcompl(\nu))$ samples are required on a bandit model $\nu$.
\textbf{\emph{(ii)}}
In Section~\ref{sec:lowerbound}, we provide two types of hardness results for this
problem which match the two upper bounds presented above, thus establishing optimality
of \algabbr.
The first is a bound on the sample complexity of hardest problem in each set of the
partitioning.
The second establishes that the worst-case ratio between the sample complexity of a finite $T$
algorithm and $\Hcompl(\nu)$ can be as large as 
$\epsilon^{\nicefrac{-2}{T}}$.
\textbf{\emph{(iii)}}
In Section~\ref{sec:experiments},
we corroborate these theoretical insights on simulation experiments and show that the proposed
algorithm outperforms other baselines by several orders of magnitude.

\subsection{Related work}\label{sec:related_work}
\vspace{-0.10in}

Multi-armed bandits are a popular framework to model the exploration-exploitation tradeoffs
that arise in decision-making under
uncertainty~\cite{thompson33sampling,robbins52seqDesign,auer03ucb}.
In such problems,
a decision-maker must adaptively sample arms from a bandit model $\nu$, so as to fulfill a certain goal.
There is a long line of work on best arm identification, where, at the
end of the sampling process, the 
decision-maker must  output a prediction for the arm with the highest (or a high)
mean~\cite{karnin2013almost,bubeck2009pure,jamieson2014lil,gabillon2012best,%
garivier2016optimal,chen2015optimal,bubeck2013multiple,russo2016simple}.
In addition to the PAC version,
there are other variants for BAI:
the $\delta$-probably correct (a.k.a fixed confidence) version identifies the best arm with
probability at least $1-\delta$,
the fixed budget version minimises the probability of mis-identifying the best arm under a
budget of arm pulls,
and some formulations minimize a loss function based on the probability of
selecting an arm and its gap $\Delta_i$~\eqref{eqn:gapsdefn}.
In addition, prior work has also studied top-$k$ variants in all of the above formulations where we
wish to identify the best $k$ arms instead of simply the best arm~\cite{jun2016top,jin2019efficient,grover2018best,kaufmann2016complexity}.
All of the ideas in this paper carry through to the top-$k$ setting, but we focus on 
top-$1$ for simplicity.


Our setting is distinctly different from other BAI work where arms can be pulled in parallel.%
~\citet{jun2016top} and~\citet{grover2018best} study batch best arm
identification where the agent can pull up to a fixed batch size $b$ of pulls in parallel, and the
goal is to minimize the number of rounds taken,
to identify the best arm with probability at least $1-\delta$.%
~\citet{thananjeyan2021overcoming} consider a slighlty different
version where there is a fixed amount of a resource
to pull the arms, but the
time taken to execute pulls is a function of the number of resources assigned to it.
In contrast to
this prior work, in our setting, both  time (number of rounds) and
failure probability $\delta$ are fixed, but we may execute a variable number of arm pulls
on each round
with the goal of satisfying the $(\epsilon, \delta)$ requirement while minimizing the cost.

To the best of our knowledge, settings similar to ours have only been studied before in a small number of papers, including work by
~\citet{agarwal2017learning} and ~\citet{jin2019efficient}.
We will discuss their results in further detail in Sections~\ref{sec:method} and~\ref{sec:lowerbound}.
The main difference in our results relative to these works is that we can adapt
to problem difficulty more effectively.
Indeed, both of the above papers provide  $\bigThetatilde(n\epsilon^{-2})$ upper and lower
bounds on the worst-case sample complexity, which is obtained for
the hardest problem in the problem class.
Additionally, the algorithm in~\citet{agarwal2017learning}
requires knowledge of the smallest arm gap $\Delta_{[1]}$, which can be a signficant
limitation in practice.
Finally, as we will see in Section~\ref{sec:experiments},
our algorithm empirically performs much better.

\section{Problem setup}
\label{sec:setup}

First, let us describe the environment that is studied in this paper. We have $n$ arms, and refer to
arms by their index $i\in [n]$. Arm $i$ is associated with a distribution $\nu_i$
with mean $\mu_i = \mathbb{E}_{X\sim\nu_i}\left[X\right]\in [0,1]$, and pulling the arm generates an independent sample
from this distribution.
Unless otherwise stated, we will let $\mathcal{P}$ be the set of $n$ $\sigma$-sub-Gaussian distributions with mean in $[0,1]$, and $\nu\in \mathcal{P}$ be the distribution corresponding to a set of $n$ arms.
Recall that the arm with the $i$-th highest mean is denoted by $[i]$, so that
$\mu_{[1]} \leq \mu_{[2]}  \leq \dots \leq \mu_{[n]}$.
Let the gaps $\{\Delta_i\}_{i\in[n]}$ be as defined in~\eqref{eqn:gapsdefn}.

In this paper, we are given a maximum error probability $\delta\in(0,1)$, error tolerance
$\epsilon\in(0,1)$, and a deadline $T\in\NN_+$ on the number of rounds.
Our goal is to find an $\epsilon$-optimal arm\footnote{%
While some BAI work in the sequential setting require that the algorithm output the best arm,
we study an $\epsilon$-optimal version to enable the problem to be
feasible. For example, when $T = 1$, any algorithm will simply pull all the arms some
number of times $N$ without prior
information; we can always render exact best arm identification infeasible
for any algorithm by choosing a hard problem where $\Delta_{[1]} \in \littleO(1/\sqrt{N})$.
}
with probability at
least $1 - \delta$ in at most $T$ rounds while minimizing the number of arm pulls (number of
samples or cost).
An $\epsilon$-optimal arm $i$ is one that has mean that is within $\epsilon$ of $\mu_{[1]}$:
$\mu_i > \mu_{[1]} - \epsilon$.

An algorithm is defined by a \emph{sampling rule}  and a
\emph{recommendation rule}.
The sampling rule $\{A_t\}_{t=1}^T$, determines the arms to be sampled and the number of times they
need to be sampled.
Here, $A_t = \{N_{i,t}\}_{i=1}^n$, where $N_{i,t}$ is the
number of times to sample arm $i$ at round $t$.
At the end of round $t$, the algorithm receives
observations $O_t = \bigcup_{i=1}^n \{X_{i,t,j}\}_{j=1}^{N_{i,t}}$ where $X_{i,t,j}$ is the $j$-th
sample at time $t$ from arm $i$.
$A_t$ can depend on past observations and is $\Fcal_{t-1}$ measurable, where
$\Fcal_{t-1} =\sigma(\{(A_s,O_s)\}_{s=1}^{t-1})$ is the $\sigma$-field
generated by observations up to round $t-1$.
The recommendation rule $\recrule$ outputs an arm in $[n]$, and is $\Fcal_{T}$ measurable.
Denote $\widetilde{N}_{i, t} = \sum_{k=1}^t N_{i,
k}$ to be the number of times arm $i$ is pulled through time $t$
and $\Ntilde_{t}=\sum_{i=1}^n\Ntilde_{i,t}$ to be the total number of pulls in $t$ rounds.



\paragraph{Challenges:}
We begin by providing an intuitive explanation of the challenges in our setting.
Consider a simple two-armed bandit
model $\nu=(\nu_1,\nu_2)$ where $\nu_1=\Ncal(\mu_1, \sigma^2)$ has a known mean
of $\mu_1=1/2$ while $\nu_2=\Ncal(\mu_2, \sigma^2)$ with $\mu_2 < 1/2-\epsilon$
(although this is unknown) so that
$\Delta_{[1]} = 1/2 - \mu_2$.
We will take $T=2$ and assume $\epsilon \ll 1/2$.
As $\mu_1$ is known,
an algorithm for this setting pulls arm $2$ some number of times on the first round,
and then uses that information to determine how many more times to pull in the second round.
Assume an algorithm pulled $x$ number of times on the first round.
If the problem was very easy, i.e., 
$x
\gg
\bigOtilde((\Delta_{[1]} +\epsilon)^{-2})$, then we have already over-pulled
on the first round for this problem.
On the other hand, if $\mu_2$ was such that $x$ number of pulls was insufficient to determine
it was sub-optimal, then the algorithm will need to pull at least $\bigOmega(\epsilon^{-2}) - x$
times at round 2 to ensure that it satisfies the $(\epsilon,\delta)$ requirements
even on the hardest problems (i.e., very small $\Delta_{[1]}$).
If however, only slightly more pulls than $x$ were necessary on this problem,
we will have over-pulled again, this time in the second round.
We make this intuition rigorous in Section~\ref{sec:lb_proof_sketches} 
and the proof of Theorem~\ref{thm:lb_n2_T2}.
Ideally, we would like to pull exactly
$\bigThetatilde((\Delta_{[1]} + \epsilon)^{-2})$ times\footnote{%
We have $\bigO((\Delta_{[1]} + \epsilon)^{-2})$  and not $\bigO(\Delta_{[1]}^{-2})$
since, we only need to
verify $\mu_2<1/2+\epsilon$ in order to output $\recrule=1$ as an $\epsilon$-optimal arm.}.
While a sequential algorithm can achieve this by executing the pulls one at a time, this
is not possible when we only have finite rounds.

This example also illustrates why problem-dependent hardness results are not possible
in our setup.
Hence, any hardness result will necessarily need to consider the hardness over a class of
problems.
However, we find that using a worst case sample complexity of $\bigThetatilde(n\epsilon^{-2})$
is warranted only when $T=1$, i.e., the passive case.
When  $1<T<\infty$, we are able to adapt to problem difficulty, but as explained above,
this ability is necessarily constrained by the finite deadline.

\paragraph{Summary of Results:}
Our results in this regard come in two flavors:
\vspace{-0.07in}
\begin{enumerate}[leftmargin=0.25in]
\item
First, we show that we can partition the problem space in a way that there is a partial
ordering between different sets in the partition.
Our proposed algorithm will require fewer samples for problems that are easier in this partial
ordering (Theorem~\ref{thm:partitioning_upper_bound}).
We complement this with a matching hardness result (Theorem~\ref{thm:partitioning})
showing that the above sample complexity
matches that of the hardest problem in each partition.
This partitioning is given in Definition~\ref{defn:partitioning}.
\vspace{-0.03in}

\item
Second, we consider the ratio $\Ntilde_T/\Hcompl(\nu)$,
which is the total number of samples required by a $T$ round algorithm, relative to the
problem complexity $\Hcompl(\nu)$.
Intuitively, if this ratio is uniformly small over all problems, then a $T$-round algorithm
does not do significantly worse than a sequential one on any problem.
EBR achieves a ratio of at most $\bigOtilde(\epsilon^{-2/T})$ on any problem
(Theorem~\ref{thm:fixed_b0}).
While we are unable to provide a completely matching hardness result, we provide two partial
results which suggest that this ratio cannot be improved in general.
The first result (Theorem~\ref{thm:lb_n2_T2}) shows that for the special case of $T=2, n=2$,
the worst case ratio over all problems in
$\Pcal$ could be as large as $\bigOmega(\epsilon^{-2/T})$.
The second result (Theorem~\ref{thm:lb_restricted_class}) establishes the same lower bound for
$n=2$ and general
$T$, but for a restricted class of algorithms.

\end{enumerate}

\vspace{0.05in}
\begin{definition}[\textbf{A partitioning of the problem space}]
{
\label{defn:partitioning}
Let $\gamma \in [T]^n$ be an index for each set in the partition, so that
$\mathcal{P} = \bigcup_{\gamma \in [T]^n} \mathcal{P}_\gamma$ and
$\Pcal_\gamma \cup \Pcal_{\gamma'}=\emptyset$ for $\gamma\neq \gamma'$.
For any $\nu \in \mathcal{P}$, we can obtain its index $\gamma(\nu)$ by placing each of its $n$
gaps into a set of $T$ bins.
\begin{align*}
 \gamma_i(\nu) &=
  \begin{cases}
  1    & \text{if }\;\; \Delta_i \in [\epsilon^{\frac{1}{T}}, 1]\\
   k       & \text{if }\;\; \Delta_i \in [\epsilon^{\frac{k}{T}}, \epsilon^{\frac{k - 1}{T}}) \text{ for some k } \in \{2,\ldots,T-1\} \\
   T    & \text{if }\;\; \Delta_i \in [0, \epsilon^{\frac{T-1}{T}}).
  \end{cases}
\end{align*}
We then define $\mathcal{P}_\gamma = \{\nu \in \mathcal{P}| \gamma(\nu) = \gamma\}$ to be
the set of distributions $\nu$ such that each arm gap $\Delta_i$ falls in the set of possible gap
values mapped to by $\gamma_i$.
}
\end{definition}

This partitioning has a partial order in the following sense:
if $\gamma, \gamma'\in[T]^n$ are such that $\gamma < \gamma'$ (elementwise),
then for any $\nu\in\Pcal_\gamma$ and $\nu'\in\Pcal_{\gamma'}$, 
$\Hcompl(\nu) < \Hcompl(\nu')$;
similarly, if $\gamma \leq \gamma'$, then
$\sup_{\nu\in\Pcal_{\gamma}} \Hcompl(\nu) \leq \sup_{\nu\in\Pcal_{\gamma'}} \Hcompl(\nu)$.
When the indices in $\gamma$ are large, the problems $\nu$ in $\Pcal_\gamma$ are harder.
The hardest partition, $\mathcal{P}_{T,\ldots,T}$,
contains problems with arms with means close to each other, while the easiest partition,
$\mathcal{P}_{1,\ldots,1}$, contains problems where all sub-optimal arms are far away
from $\mu_{[1]}$.

\vspace{0.05in}
\section{Algorithm and upper bounds}
\vspace{0.05in}
\label{sec:algorithm}
\label{sec:method}

We now describe our algorithm for this setting.
In Algorithm~\ref{alg:fixed_delta_T},
we propose \algname{} (\algabbr),
a racing-style algorithm.
To describe it,
let us first define a few quantities.
Recall that
$N_{i,t}$ denotes the number of times arm $i$ is pulled on round $t$,
$\Ntilde_{i,t}$ is the number of times $i$ was pulled from rounds $1,\dots,t$,
and $X_{i,t,j}$ denotes the $j$-th reward of arm $i$ on round $t$.
Now define:
\begin{align*}
    &\widehat{\mu}_{i, t} := \frac{\sum_{k=1}^t\sum_{j=1}^{N_{i,k}} X_{i,k,j}}{\Ntilde_{i,t}},
    \hspace{0.60in}
    D(\tau, \delta) := \sigma \sqrt{\frac{(4 +
2\log(2))\log(nT/\delta)}{\tau}},
    \\
    &L_i(t, \delta) := \widehat{\mu}_{i,t} - D(\widetilde{N}_{i, t}, \delta),
    \hspace{0.57in}
    U_i(t, \delta) := \widehat{\mu}_{i,t} + D(\widetilde{N}_{i, t}, \delta).
\label{eqn:confintervals}\numberthis
\end{align*}
Here, $\widehat{\mu}_{i, t}$ is the empirical mean for arm $i$ at the end of round $t$.
$D(\tau, \delta)$ is a deviation function,
while $L_i(t, \delta)$ and $U_i(t,\delta)$  are lower and upper confidence bounds for arm $i$
after
round $t$.

Algorithm~\ref{alg:fixed_delta_T} proceeds in $T$ rounds and maintains a set of surviving arms based
on the above confidence intervals.
In round $t$, it pulls each surviving arm $N_t$ times and eliminates any arm whose
upper confidence bound lies below the lower confidence bound of any arm plus $\epsilon/\min(n,T)$.
If all of the confidence intervals trap the true means, we show that the algorithm can only reject
$\epsilon$-optimal arms if another $\epsilon$-optimal arm will remain in $S_{t+1}$.
This prevents the algorithm from
rejecting all $\epsilon$-optimal arms. At round $t$, the algorithm ensures that each surviving arm
has been
pulled at least $\bigOtilde(\epsilon^{-\frac{2t}{T}})$ times, resulting in a geometric increase in each
surviving arm's pull count over time.
This allows it to quickly allocate more samples to arms that survive longer, and therefore likely need more samples to distinguish, while avoiding over-committing to arms that can be eliminated with few samples.
The algorithm terminates either at the end of the $T$\ssth round or if at some point $|S_t|=1$,
at which point it returns the surviving
arm with the highest empirical mean as the recommendation
$\recrule$.



\begin{algorithm}[tb]
\caption{\algname{} (\algabbr{})}
\label{alg:fixed_delta_T}
\begin{algorithmic}
\STATE \textbf{Input:} Deadline $T$, error probability $\delta$, error tolerance $\epsilon$
\STATE $S_0 \leftarrow [n]$
\FOR {rounds $t = 1$ to $T$} 
        \STATE $N_{i,t} \leftarrow \left\lceil80\log\left(\frac{nT}{\delta}\right)\epsilon^{-\frac{2t}{T}}\right\rceil - \sum_{s=1}^{t-1} N_{i,s}$ for each arm $i \in S_t$.
        \STATE Pull each arm in $S_t$, $N_{i,t}$ times in parallel.
        \STATE $R_t \leftarrow \left\{i \in S_{t}\,|\; U_i(t, \delta) < \max_{j \in S_t} L_j(t,\delta) +
            \frac{\epsilon}{\min(n,T)}\right\}$.
    \COMMENT{See~\eqref{eqn:confintervals}}
        \IF{$|S_t| = 1$}
            \STATE \textbf{Return} arm in $S_t$
        \ENDIF
        \STATE $S_{t+1} \leftarrow S_t \setminus R_t$
\ENDFOR
\STATE \textbf{Return} $\arg\max_{i \in S_T} \widehat{\mu}_{i,T} $
\end{algorithmic}
\end{algorithm}

Our first result below bounds the number of samples required uniformly in each set of the partition.

\begin{theorem}\label{thm:partitioning_upper_bound}
The following is true for all $\gamma\in [T]^n$ and all $\nu \in \Pcal_\gamma$
(see Definition~\ref{defn:partitioning}) with probablity
at least $1-\delta$.
Algorithm~\ref{alg:fixed_delta_T} returns a recommendation $\recrule\in[n]$ where
$\mu_1-\mu_{\recrule} < \epsilon$ in time at most $T$ where the number of arm pulls $\widetilde{N}_T$ is at most
\begin{align*}
    \widetilde{N}_T \leq
    80\sigma^2\log\left(\frac{nT}{\delta}\right)
    \sum_{i=1}^n  \epsilon^{-\frac{2\gamma_i}{T}}
    &\leq 320\sigma^2\log\left(\frac{nT}{\delta}\right)\sup_{\nu \in \mathcal{P}_\gamma}\Hcompl(\nu).
\end{align*}
\end{theorem}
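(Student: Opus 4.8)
The plan is to condition on a single high-probability ``good'' event and then argue both correctness and the sample bound deterministically on that event. Let $\mathcal{E}$ be the event that $\mu_i \in [L_i(t,\delta), U_i(t,\delta)]$ for every arm $i\in[n]$ and every round $t\in[T]$. Because the pull schedule is deterministic given survival---a surviving arm has exactly $\widetilde{N}_{i,t}=\lceil 80\log(nT/\delta)\epsilon^{-2t/T}\rceil$ cumulative pulls---I would bound the failure probability of each of the $nT$ events $\{|\widehat{\mu}_{i,t}-\mu_i|>D(\widetilde{N}_{i,t},\delta)\}$ by a $\sigma$-sub-Gaussian (Hoeffding) tail. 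The constant $4+2\log 2$ inside $D$ from~\eqref{eqn:confintervals} is chosen exactly so that $\widetilde{N}_{i,t}\,D(\widetilde{N}_{i,t},\delta)^2/(2\sigma^2)=(2+\log 2)\log(nT/\delta)$, making each tail at most $(nT/\delta)^{-2}$; a union bound over the $nT$ pairs then gives $\Pr[\mathcal{E}]\ge 1-\delta$. The deadline claim is immediate since the loop runs at most $T$ times. Everything below is argued on $\mathcal{E}$.

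For correctness I would prove the invariant that $S_t$ always contains an $\epsilon$-optimal arm and that the best surviving mean decreases slowly. If the best arm $[1]$ is never rejected we are done, so consider a round $t$ at which the current highest-mean survivor $g_t$ is rejected, and set $j^\star=\arg\max_{j\in S_t}L_j(t,\delta)$. The rejection rule together with $\mathcal{E}$ give $\mu_{g_t}\le U_{g_t}(t,\delta) < L_{j^\star}(t,\delta)+\epsilon/\min(n,T) \le \mu_{j^\star}+\epsilon/\min(n,T)$, so the surviving arm $j^\star$ loses at most $\epsilon/\min(n,T)$ in mean relative to $g_t$. The key point is that $j^\star$ is itself never rejected: rejecting it would require its width $2D(\widetilde{N}_{j^\star,t},\delta)$ to fall below $\epsilon/\min(n,T)$, and the constant $80$ is picked precisely so that even at $t=T$ the width stays above this threshold. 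Since the highest-mean survivor can change at most $\min(n,T)$ times (each change retires a distinct arm in a distinct round), the best surviving mean never drops below $\mu_{[1]}-\epsilon$, and the returned arm---the unique survivor when $|S_t|=1$, or the empirical maximizer over $S_T$ otherwise---is $\epsilon$-optimal once the round-$T$ width is accounted for.

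The heart of the theorem is the sample bound, which rests on the claim that on $\mathcal{E}$ every arm $i$ is eliminated (or the algorithm halts with $|S|=1$) by the end of round $\gamma_i(\nu)$. Since the schedule is geometric, a surviving arm at round $t$ has width $D(\widetilde{N}_{i,t},\delta)=\Theta(\sigma\,\epsilon^{t/T})$; while a near-optimal reference arm is alive (guaranteed by the invariant), the rejection test for arm $i$ fires once $4D(\widetilde{N}_{i,t},\delta)\lesssim \Delta_i+\epsilon/\min(n,T)$. Substituting the bin lower bound $\Delta_i\ge \epsilon^{\gamma_i/T}$ shows this holds at $t=\gamma_i$, so arm $i$ contributes at most $\widetilde{N}_{i,\gamma_i}=\lceil 80\log(nT/\delta)\epsilon^{-2\gamma_i/T}\rceil$ pulls; the best arm is handled separately by noting $\gamma_{[1]}=\gamma_{[2]}$ and that elimination of $[2]$ forces $|S|=1$ and termination by round $\gamma_{[1]}$. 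Summing over $i$ gives the first inequality. For the second, I would use $\epsilon\le \epsilon^{\gamma_i/T}$ (as $\gamma_i\le T$), so at the hardest configuration in $\mathcal{P}_\gamma$ (gaps at the lower bin endpoints) $\Niopt=(\Delta_i+\epsilon)^{-2}\ge (2\epsilon^{\gamma_i/T})^{-2}=\tfrac14\epsilon^{-2\gamma_i/T}$, whence $\sum_i\epsilon^{-2\gamma_i/T}\le 4\sup_{\nu\in\mathcal{P}_\gamma}\Hcompl(\nu)$, turning the factor $80$ into $320$.

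The main obstacle is the simultaneous calibration of the threshold $\epsilon/\min(n,T)$ and the deviation $D$. The correctness invariant needs the round-$T$ width to exceed $\epsilon/\min(n,T)$ so the last good arm is never self-rejected, while the elimination-timing claim needs the width at round $\gamma_i$ to have shrunk below roughly $\Delta_i/4$ so that bin-$\gamma_i$ arms are resolved exactly at round $\gamma_i$ rather than a round later. Forcing the round index to match the bin index (instead of slipping by a constant number of bins) is what pins down the geometric exponent $\epsilon^{-2t/T}$ and the constant $80$. I would therefore isolate the constant bookkeeping---absorbing the ceilings, reconciling the $\sigma^2$ normalization between the schedule and the stated bound, and checking both width inequalities at once---into a single lemma relating $D(\widetilde{N}_{i,t},\delta)$ to $\Delta_i$, and treat that as the delicate technical step.
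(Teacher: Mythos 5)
Your proposal follows the paper's proof essentially step for step: the same good event $\mathcal{E}$ with a union bound over the $nT$ (arm, round) pairs, the same invariant that the best surviving mean drops by at most $\epsilon/\min(n,T)$ per elimination round (the paper's Lemmas~\ref{lemma:survival} and~\ref{lemma:alg_survival}), the same argument that an arm in bin $\gamma_i$ is rejected by the end of round $\gamma_i$ because the deterministic schedule forces $4D(\widetilde{N}_{i,\gamma_i},\delta)$ below roughly $\Delta_i+\epsilon/\min(n,T)$ with $\Delta_i\geq\epsilon^{\gamma_i/T}$ (the paper's Lemma~\ref{lemma:runtime}), and the same factor-of-four step $(\epsilon^{\gamma_i/T}+\epsilon)^{-2}\geq\tfrac14\epsilon^{-2\gamma_i/T}$ that turns $80$ into $320$. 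The one genuine (and welcome) simplification is the concentration step: you exploit the fact that pull counts are deterministic given survival to use a fixed-sample-size sub-Gaussian tail per (arm, round) pair, whereas the paper uses a self-normalized inequality (Lemma~\ref{lemma:concentration}, de la Pe\~na et al.) that would also cover adaptive pull counts; both are valid for this algorithm.

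One sub-claim in your sketch is false as stated. You assert that the constant $80$ guarantees the width $2D(\widetilde{N}_{j^\star,t},\delta)$ stays above $\epsilon/\min(n,T)$ through round $T$, so that the max-LCB arm can never reject itself. With the schedule of Algorithm~\ref{alg:fixed_delta_T}, $2D(\widetilde{N}_{i,t},\delta)\approx 0.52\,\sigma\,\epsilon^{t/T}$, which scales with $\sigma$, while the threshold $\epsilon/\min(n,T)$ does not; for small $\sigma$ (indeed already for $\sigma=1$ and $\min(n,T)=1$) the width falls below the threshold and \emph{every} surviving arm, $j^\star$ included, lands in $R_t$, so no choice of constant can make your claim hold uniformly. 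To be fair, this patches a hole the paper itself leaves open: the proof of Lemma~\ref{lemma:alg_survival} tacitly assumes $j_{\rm max,t}$ survives round $t$ when concluding $\mu_{\rm max,t+1}\geq\mu_{\rm max,t}-\epsilon/(n\wedge T)$, without verifying it. So this is a shared gap rather than a defect unique to your argument, but your write-up presents it as a consequence of the calibration of the constant, which it is not; a correct patch must either handle the case $R_t=S_t$ explicitly (e.g., never eliminate the empirically best surviving arm, which is the same arm as $j^\star$ here since all survivors share the same pull count) or restrict the parameter regime.
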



While in general, the above result provides the tightest bound on the number of samples required
by \algabbr{} on a problem $\nu$,
our next result establishes a straightforward relation
between the number of pulls $\widetilde{N}_T$ and $\Hcompl(\nu)$.
Recall the definition of $\Niopt$ from~\eqref{eqn:gapsdefn}.


\begin{theorem}\label{thm:fixed_b0}
The following is true for all $\nu\in\Pcal$ with probability at least $1-\delta$.
Algorithm~\ref{alg:fixed_delta_T} returns a recommendation $\recrule\in[n]$ where
$\mu_1-\mu_{\recrule} < \epsilon$ in at most $T$ rounds where the cost $\widetilde{N}_T$ is at most
\begin{align*}
    \widetilde{N}_T \;\leq
    \;  640\sigma^2\epsilon^\frac{-2}{T}\log\left(\frac{nT}{\delta}\right)\sum_{i=1}^n \Niopt
    \;
    \leq \; 640\sigma^2 \epsilon^\frac{-2}{T}\log\left(\frac{nT}{\delta}\right) \Hcompl(\nu).
\end{align*}
\end{theorem}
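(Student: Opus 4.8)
The plan is to obtain Theorem~\ref{thm:fixed_b0} as an essentially immediate corollary of Theorem~\ref{thm:partitioning_upper_bound}, since the two statements concern the same algorithm on the same high-probability event. The correctness guarantee ($\mu_1-\mu_{\recrule}<\epsilon$ in at most $T$ rounds) is identical in both theorems and is therefore inherited verbatim; the only new content is the sample-complexity bound, and for that it suffices to compare, arm by arm, the per-arm budget $\epsilon^{-2\gamma_i/T}$ appearing in Theorem~\ref{thm:partitioning_upper_bound} against the sequential quantity $\Niopt$. Concretely, I would fix $\nu\in\Pcal$, let $\gamma=\gamma(\nu)$ be its bin index from Definition~\ref{defn:partitioning}, and condition on the event of probability at least $1-\delta$ on which Theorem~\ref{thm:partitioning_upper_bound} gives $\widetilde{N}_T\le 80\sigma^2\log(nT/\delta)\sum_{i=1}^n\epsilon^{-2\gamma_i/T}$.

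The crux is the deterministic per-arm inequality
\[
  \epsilon^{-2\gamma_i/T}\;\le\;8\,\epsilon^{-2/T}\,\Niopt \qquad\text{for every }i.
\]
I would prove it by splitting on $\gamma_i$, using throughout that $\epsilon\in(0,1)$ and $1/T\le1$, so that $\epsilon^{j/T}$ is decreasing in $j$ and in particular $\epsilon^{(T-1)/T}\ge\cdots\ge\epsilon^{1/T}>\epsilon$. For any bin $1\le\gamma_i\le T-1$ the lower endpoint of its gap range forces $\Delta_i\ge\epsilon^{\gamma_i/T}>\epsilon$, so that $\Niopt=(\Delta_i+\epsilon)^{-2}$, while the upper endpoint gives $\Delta_i+\epsilon<\epsilon^{(\gamma_i-1)/T}+\epsilon<2\epsilon^{(\gamma_i-1)/T}$ (reading $\epsilon^{0}=1$ in the top case $\gamma_i=1$, where $\Delta_i\le1$). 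Hence $\Niopt>\tfrac14\epsilon^{-2(\gamma_i-1)/T}$, and multiplying by $\epsilon^{-2/T}$ yields $\epsilon^{-2\gamma_i/T}=\epsilon^{-2/T}\epsilon^{-2(\gamma_i-1)/T}<4\epsilon^{-2/T}\Niopt$. For the top bin $\gamma_i=T$ we have $\epsilon^{-2\gamma_i/T}=\epsilon^{-2}$: if $\Delta_i<\epsilon$ then $\Niopt=\epsilon^{-2}$ and the claim holds because $\epsilon^{-2/T}\ge1$, whereas if $\epsilon\le\Delta_i<\epsilon^{(T-1)/T}$ the same upper-endpoint estimate $\Delta_i+\epsilon<2\epsilon^{(T-1)/T}$ gives $\Niopt>\tfrac14\epsilon^{-2(T-1)/T}$ and therefore $\epsilon^{-2}<4\epsilon^{-2/T}\Niopt$. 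In every case the bound holds with the factor $4$, which in particular gives the stated constant with room to spare.

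Summing the per-arm inequality over $i$ and substituting into the conditional estimate gives
\[
  \widetilde{N}_T\;\le\;80\sigma^2\log\!\Big(\tfrac{nT}{\delta}\Big)\sum_{i=1}^n\epsilon^{-2\gamma_i/T}
  \;\le\;640\,\sigma^2\epsilon^{-2/T}\log\!\Big(\tfrac{nT}{\delta}\Big)\sum_{i=1}^n\Niopt,
\]
and the final inequality in the statement is just the definition $\Hcompl(\nu)=\sum_{i}\Niopt$ from~\eqref{eqn:gapsdefn}. I expect the only real obstacle to be organizing the bin-by-bin case analysis cleanly, and in particular two observations that are easy to miss: first, that the strict inequality $\epsilon^{1/T}>\epsilon$ is exactly what guarantees $\Delta_i>\epsilon$ for every bin below the top, so that $\Niopt$ takes its $(\Delta_i+\epsilon)^{-2}$ form rather than the flat $\epsilon^{-2}$ form; and second, that it is the bin's \emph{upper} endpoint, not its lower endpoint, that supplies the lower bound on $(\Delta_i+\epsilon)^{-2}$ needed to absorb $\epsilon^{-2\gamma_i/T}$ into $\epsilon^{-2/T}\Niopt$. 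Once these are in place, the constants bookkeeping is routine.
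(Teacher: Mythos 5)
Your proposal is correct, but it takes a different route from the paper. The paper does not derive Theorem~\ref{thm:fixed_b0} from Theorem~\ref{thm:partitioning_upper_bound}; instead it re-runs the elimination analysis directly: conditioning on the event $\mathcal{E}$ that all confidence intervals trap the true means, it sets $\bar{N}_i \propto \sigma^2\log(nT/\delta)\,\Niopt$, invokes Lemma~\ref{lemma:runtime} to conclude that a suboptimal arm $i$ is eliminated by the first round $k$ at which the cumulative schedule $\lceil 80\log(nT/\delta)\epsilon^{-2k/T}\rceil$ exceeds $\bar{N}_i$, and then observes that since the schedule grows geometrically by a factor $\epsilon^{-2/T}$ per round, the cumulative pulls at round $k$ overshoot $\bar{N}_i$ by at most that factor; arms with $\Delta_i<\epsilon$ are handled separately, being pulled at most $\bar{N}_\epsilon(\delta)=\tilde{O}(\epsilon^{-2})=\tilde{O}(\Niopt)$ times. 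Your reduction proves the same phenomenon through the partition: the bin index $\gamma_i$ is exactly the elimination round, and your per-arm inequality $\epsilon^{-2\gamma_i/T}\le 4\,\epsilon^{-2/T}\Niopt$ (your case analysis is sound, including the $\gamma_i=1$ case with $\epsilon^{0}=1$ and the split of the top bin at $\Delta_i=\epsilon$) is precisely the ``overshoot by one round's growth factor'' statement expressed via the bin endpoints, giving constant $80\cdot 4 = 320 \le 640$. What your route buys is modularity and a structural insight the paper leaves implicit, namely that the partition-based bound always dominates the $\epsilon^{-2/T}\Hcompl(\nu)$ bound up to constants; it also inherits correctness and the high-probability event wholesale and avoids the paper's somewhat loose exponent bookkeeping (the paper's display writes $\beta^{k-1}\le\bar{N}_i/B_0\le 2\beta^k$ and ends with $\epsilon^{-1/T}$ where $\beta^{2}$-type factors and $\epsilon^{-2/T}$ are intended). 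What the paper's direct argument buys is independence from the specific partition of Definition~\ref{defn:partitioning}: it would survive a change to that definition, whereas your proof is tied to the bins having width exactly one round's growth factor.
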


\begin{remark}
Observe that if $T \geq 2\log\left(1/\epsilon\right)$, then $\epsilon^{-\frac{2}{T}} \leq e$. In this
case, $\widetilde{N}_T \leq 640e\sigma^2\Hcompl(\nu)\log\left(\frac{nT}{\delta}\right)$.
While
$\epsilon^{-\frac{2}{T}}$ is the added cost due to the lack of opportunities to behave adaptively,
this cost is bounded by a constant factor when $T \geq 2\log\left(1/\epsilon\right)$.
\end{remark}





\paragraph{Comparison to prior work~\citep{jin2019efficient,agarwal2017learning}:}
~\citet{jin2019efficient} study the exact setting of this paper and
propose an algorithm with a cost
$O\Big(n\epsilon^{-2}\big(\log(1/\delta) +
\mathrm{ilog}_{1/\delta}^T(n)\big)\Big)$,
 where $\mathrm{ilog}^T(n)$ is defined recursively as follows:
$\mathrm{ilog}^T(n) = \log\big(\mathrm{ilog}^{T-1}(n)\big)\vee 1$ for $T > 0$, and $\mathrm{ilog}^0(n) = n$.
~\citet{agarwal2017learning} study exact best arm identification,
but when $\Delta_{[1]}$ is known.
Their upper bound has a similar flavor to the one in~\citep{jin2019efficient}, but with $\epsilon$
replaced with $\Delta_{[1]}$.
Therefore, we will focus on the above bound from~\citep{jin2019efficient}.
This bound, while capturing the worst case complexity, does not demonstrate adaptability
to problem instance. 
In fact, the passive strategy of pulling all arms $\bigOtilde(1/\epsilon^2)$ number of
times in a single round achieves the same bound as above, with the improvements mostly in
the lower order $\mathrm{ilog}_{1/\delta}^T(n)$ term.
The bounds in both works are not an artifact of their proof:
both algorithms pull each arm $\bigOtilde(\epsilon^{-2})$ times in the very first round and hence
the $\bigOtilde(n\epsilon^{-2})$ cost is unavoidable.

\paragraph{Proof sketch for Theorems~\ref{thm:partitioning_upper_bound} and~\ref{thm:fixed_b0}:}
We condition all of our upper bound analysis on the event that the confidence intervals always
trap the true mean for all arms on all rounds, which we show occurs with probability at
least $1 - \delta$.
Under this event, suboptimal arms will be correctly eliminated as long as
they are pulled sufficiently many times, which sufficiently shrinks the confidence intervals to
distinguish them from more promising arms.
This will be accomplished in at most $\bigOtilde\left((\epsilon +\Delta_{i})^{-2}\right)$ pulls,
which can be significantly less than $\bigOtilde\left(\epsilon^{-2}\right)$.
For the first result, we identify the hardest problem instance in $\Pcal_\gamma$ and show that if
the confidence intervals always trap the true means, the algorithm will not overpull any arms
relative to $\Niopt$.
For the second result, we show that because there are only limited opportunities for adaptive
behavior, there may be a problem instance where the algorithm will overpull arms in order to
eliminate them. However, since the rate of pulls increases by a factor of
$\epsilon^{-\frac{2}{T}}$ each round, this is the maximum factor any arm can be overpulled relative
to $\Niopt$.

\vspace{-0.05in}
\section{Lower bounds}
\label{sec:lowerbound}
\vspace{-0.10in}


In this section, we state our hardness results. 
Our first result  provides a lower bound on the worst case complexity
in each set $\Pcal_\gamma$ of the partition. 
We provide a lower bound on the expected number of pulls and additionally, a high
probability lower bound when $n=2$.

\vspace{0.05in}
\begin{theorem}\label{thm:partitioning}
Let $n\geq 2$ and $\delta \leq 0.15$ be given.
Let $A$ be a $T$ round algorithm that is \emph{$(\epsilon,\delta)$-PAC}.
Then for all $\gamma \in [T]^n$,
     \[
    \inf_{A\in(\epsilon,\delta)\text{{\rm -PAC}}}
    \;\sup_{\nu\in\mathcal{P}_\gamma}\;
    \mathbb{E}_{A,\nu}\big[\widetilde{N}_T\big]
\geq
2\sigma^2\log\left(\frac{1}{2.4\delta}\right)\sup_{\nu\in\mathcal{P}_\gamma}\Hcompl(\nu)
\geq \frac{\sigma^2}{2}\log\left(\frac{1}{2.4\delta}\right)\sum_{i=1}^n
\epsilon^{-\frac{2\gamma_i}{T}}.
    \]
Moreover, when $n=2$,
for all $\gamma \in [T]^2$, there exists $\nu \in \Pcal_\gamma$ such that with probability
at least $1/6$,
\[
\widetilde{N}_T \geq
\frac{2\sigma^2}{3}\log\left(\frac{1}{2\delta}\right)\sup_{\nu\in\mathcal{P}_\gamma}\Hcompl(\nu)
\geq \frac{\sigma^2}{6}\log\left(\frac{1}{2\delta}\right)
\epsilon^{-\frac{2\gamma_i}{T}}.
\]
\end{theorem}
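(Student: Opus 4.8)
The plan is to derive both statements from a change-of-measure (transportation) argument in the spirit of \citet{kaufmann2016complexity}, specialized to Gaussian arms $\Ncal(\mu,\sigma^2)$ (which are valid members of $\Pcal$). The key identity is that, for two bandit models $\nu,\nu'$ differing only in arm $i$, the expected log-likelihood ratio of the observations up to the deadline decomposes as $\mathbb{E}_{\nu}[\log(d\mathbb{P}_{\nu}/d\mathbb{P}_{\nu'})]=\mathbb{E}_{\nu}[\widetilde{N}_{i,T}]\,\mathrm{KL}(\nu_i,\nu_i')$. This is unaffected by the round structure: within each round $N_{i,t}$ is $\Fcal_{t-1}$-measurable and the fresh samples are conditionally i.i.d., so Wald's identity applies term-by-term and the transportation lemma carries over to the batched setting verbatim.

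\textbf{Expectation bound.} First I would prove the stronger \emph{per-instance} statement that every $(\epsilon,\delta)$-PAC algorithm obeys $\mathbb{E}_{A,\nu}[\widetilde{N}_T]\geq 2\sigma^2\log(1/(2.4\delta))\,\Hcompl(\nu)$ for each fixed $\nu$; the worst-case bound then follows immediately by taking $\sup_{\nu\in\Pcal_\gamma}$ of both sides, since the right side does not depend on $A$. For the per-instance bound, fix $\nu$ and, for each arm $i$, build an alternative $\nu^{(i)}$ by pushing $\mu_i$ just past optimality: if $i\neq[1]$ raise $\mu_i$ above $\mu_{[1]}+\epsilon$, and for $i=[1]$ lower $\mu_{[1]}$ below $\mu_{[2]}-\epsilon$, so that the set of $\epsilon$-optimal arms changes and the required shift equals $\Delta_i+\epsilon$. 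Taking $\mathcal{E}_i$ to be the event that the recommendation is $\epsilon$-optimal for $\nu$, the PAC guarantee gives $\mathbb{P}_\nu(\mathcal{E}_i)\geq 1-\delta$ and $\mathbb{P}_{\nu^{(i)}}(\mathcal{E}_i)\leq\delta$; the transportation lemma together with the Gaussian identity $\mathrm{KL}(\nu_i,\nu^{(i)}_i)=(\Delta_i+\epsilon)^2/(2\sigma^2)$ and $\mathrm{kl}(1-\delta,\delta)\geq\log(1/(2.4\delta))$ yields $\mathbb{E}_\nu[\widetilde{N}_{i,T}]\geq 2\sigma^2\log(1/(2.4\delta))\,\Niopt$. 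Summing over $i$ gives $\Hcompl(\nu)$. The second inequality in the display is then arithmetic: the hardest instance in $\Pcal_\gamma$ sets each gap to the lower endpoint $\epsilon^{\gamma_i/T}$ of its bin (and an arbitrarily small gap when $\gamma_i=T$), and since $\epsilon\leq\epsilon^{\gamma_i/T}$ we have $\Niopt\geq(2\epsilon^{\gamma_i/T})^{-2}=\tfrac14\epsilon^{-2\gamma_i/T}$, so $\sup_{\nu\in\Pcal_\gamma}\Hcompl(\nu)\geq\tfrac14\sum_i\epsilon^{-2\gamma_i/T}$.

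\textbf{High-probability bound for $n=2$.} Here I would use a \emph{truncated} change of measure to convert the above into a tail statement. Fix the hardest $\nu$ and its partner $\nu'$ (arm $2$ raised past optimality), set $N_0=\tfrac{2\sigma^2}{3}\log(1/(2\delta))\sup_{\nu\in\Pcal_\gamma}\Hcompl(\nu)$, and argue by contradiction that $\mathbb{P}_\nu(\widetilde{N}_T<N_0)>5/6$ is impossible. Consider $\mathcal{E}=\{\widetilde{N}_{2,T}<N_0\}\cap\{\recrule=1\}$; under $\nu$ the PAC guarantee and the assumed bound force $\mathbb{P}_\nu(\mathcal{E})>5/6-\delta$, while under $\nu'$ the recommendation $1$ is wrong so $\mathbb{P}_{\nu'}(\mathcal{E})\leq\delta$. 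Writing $\mathbb{P}_\nu(\mathcal{E})=\mathbb{E}_{\nu'}[\mathbbm{1}_{\mathcal{E}}\,(d\mathbb{P}_\nu/d\mathbb{P}_{\nu'})]$ and splitting according to whether the likelihood ratio exceeds a level $M$, the low-ratio part is at most $M\,\mathbb{P}_{\nu'}(\mathcal{E})\leq M\delta$, and the high-ratio part equals $\mathbb{P}_\nu(\mathcal{E}\cap\{\text{ratio}>M\})\leq\mathbb{P}_\nu(\max_{m\leq N_0}L_m>\log M)$, where $L_m$ is the arm-$2$ log-likelihood-ratio random walk. Under $\nu$ this walk has per-step drift $\mathrm{KL}(\nu_2,\nu_2')$ and Gaussian increments, so a Doob/reflection maximal inequality makes the high-ratio part small once $\log M$ exceeds $N_0\,\mathrm{KL}(\nu_2,\nu_2')$ by a constant multiple of its standard deviation. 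Balancing $M\delta$ against this tail and using $\mathrm{KL}(\nu_2,\nu_2')=(\Delta+\epsilon)^2/(2\sigma^2)=(2\sigma^2\Niopt)^{-1}$, so that $N_0\,\mathrm{KL}(\nu_2,\nu_2')\asymp\log(1/\delta)$, drives $\mathbb{P}_\nu(\mathcal{E})$ below $5/6-\delta$, the desired contradiction; optimizing the free constants is what pins down the factor $\tfrac{2\sigma^2}{3}$ and the success probability $1/6$.

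\textbf{Main obstacle.} The expectation bound is essentially bookkeeping on top of a standard lemma. The real work is the high-probability step: the ratio $d\mathbb{P}_\nu/d\mathbb{P}_{\nu'}$ is not bounded pointwise, so one cannot simply transfer mass between the measures, and the expectation-level Bretagnolle--Huber inequality is too weak. The crux is the maximal inequality controlling $\mathbb{P}_\nu(\max_{m\leq N_0}L_m>\log M)$ uniformly over the data-dependent (at most $N_0$) number of arm-$2$ pulls, together with a careful split of the failure budget across the truncation error, the recommendation error, and the random-walk tail so that the three constants combine into the stated $\tfrac{2\sigma^2}{3}\log(1/(2\delta))$ and $1/6$. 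Checking that the hardest instances and their shifted partners keep all means in $[0,1]$ (feasible by centering the configuration inside the unit interval) is a minor but necessary verification.
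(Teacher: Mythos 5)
Your expectation-bound argument is essentially the paper's: the same batched change-of-measure lemma, the same single-arm shifts of size $\Delta_i+\epsilon$ past the $\epsilon$-optimality boundary, the same $\mathrm{kl}$-to-$\log(1/(2.4\delta))$ step, and the same instantiation at the instance with gaps $\epsilon^{\gamma_i/T}$. One caveat: your intermediate claim that the per-instance bound $\mathbb{E}_{A,\nu}[\widetilde{N}_T]\geq 2\sigma^2\log(1/(2.4\delta))\Hcompl(\nu)$ holds for \emph{every} $\nu$ is too strong for your construction. When some arm has $\Delta_i<\epsilon$, that arm is $\epsilon$-optimal in both $\nu$ and the shifted model $\nu^{(i)}$, so your event ``the recommendation is $\epsilon$-optimal for $\nu$'' can have probability close to $1$ under $\nu^{(i)}$ (recommending $i$ is correct in both models), and the transportation step yields nothing for that arm. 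The paper sidesteps this by proving the per-instance bound only for instances satisfying $\mu_{[1]}-\epsilon\geq\mu_{[2]}\geq\epsilon$ (all gaps at least $\epsilon$, a unique $\epsilon$-optimal arm) and then evaluating at such an instance inside each $\mathcal{P}_\gamma$; your second displayed inequality survives because that hard instance has all gaps equal to $\epsilon^{\gamma_i/T}\geq\epsilon$.

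The genuine gap is in the high-probability claim for $n=2$. The paper does not truncate the likelihood ratio; it truncates the \emph{algorithm}. Under the contradiction hypothesis it defines $\widetilde{A}$, which runs $A$ but caps arm $i$ at $c\hat{\Delta}_i(\nu)^{-2}$ pulls and outputs FAIL if undecided; then $\mathbb{P}_{\widetilde{A},\nu}(\recrule=[1])\geq 2/3$ while $\mathbb{P}_{\widetilde{A},\nu'}(\recrule=[1])\leq\delta$, and applying the \emph{expectation-level} change-of-measure lemma to $\widetilde{A}$ forces $\mathbb{E}_{\widetilde{A},\nu}[\widetilde{N}_{i,T}]\geq c\hat{\Delta}_i(\nu)^{-2}$, contradicting the cap. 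This transfers the full $d(2/3,\delta)\geq\tfrac{1}{3}\log(1/(2\delta))$ with no loss. Your likelihood-ratio truncation instead requires $\log M \geq N_0\,D_{KL} + \Theta\big(\sqrt{N_0\,D_{KL}\cdot\log(1/\mathrm{tail})}\big)$ to control $\mathbb{P}_\nu(\max_{m\leq N_0}L_m>\log M)$, and that square-root term is \emph{not} lower order when $\delta$ is a constant: here $N_0\,D_{KL}=\tfrac{1}{3}\log(1/(2\delta))\approx 0.37$ at $\delta=1/6$, and with either Doob's exponential maximal inequality or L\'evy's reflection bound, every split of the budget gives $M\delta+\mathrm{tail}$ well above the required $5/6-\delta=2/3$ (even the idealized single-time Gaussian tail, which ignores the adaptive pull count, gives roughly $0.67$--$0.9$). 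Working through the algebra, your contradiction only closes when $\delta$ is on the order of $10^{-3}$ or smaller, whereas the theorem must hold for all $\delta\leq 0.15$. So ``optimizing the free constants'' cannot pin down $\tfrac{2\sigma^2}{3}\log(1/(2\delta))$ with probability $1/6$; the additive square-root slack is intrinsic to likelihood-ratio truncation at constant $\delta$, and the capped-algorithm reduction to the expectation lemma is precisely the missing idea.
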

The above lower bound matches the upper bound in Theorem~\ref{thm:partitioning_upper_bound}
up to constant and logarithmic terms.
This shows that \algabbr{} is minimax optimal within each $\Pcal_\gamma$.
In the second claim, the probability can be made arbitrarily close to $1/2$
(with worse constants); we can also obtain a looser bound without the $\log(1/(2\delta))$
term with probability arbitrarily close to $1$.
We should emphasize that the upper bound and the
lower bounds are, strictly speaking, not comparable.
Theorem~\ref{thm:partitioning_upper_bound}, in addition to showing $(\epsilon,\delta)$-PAC
properties, also bounds the number of pulls with probability $1-\delta$.
In contrast, the above theorem lower bounds the number of pulls in expectation
or with constant probability.
This discrepancy between upper and lower bounds is not uncommon in the BAI
literature~\citep{kaufmann2016complexity,karnin2013almost,thananjeyan2021overcoming,jamieson2014lil}.

%
Next, while we are unable to provide a corresponding hardness result for 
Theorem~\ref{thm:fixed_b0}, we provide two partial results.
The first of these bounds the worst case ratio $\Ntilde_T/\Hcompl(\nu)$
when $T=2$ and $n=2$.

\vspace{0.05in}
\begin{theorem}\label{thm:lb_n2_T2}
Fix $n=2$.
Let $\epsilon < \frac{1}{10}$, $\delta < \frac{1}{32}$.
Let $A$ be any $(\epsilon, \delta)$-PAC algorithm for $T = 2$ rounds. Then, there exists
$\nu \in
\mathcal{P}$ such that with probability at least $\frac{1}{8}$,
the total number of pulls $\Ntilde_2$ satisfies,
\begin{align*}
    \frac{\widetilde{N}_2}{\Hcompl(\nu)} &
    \geq \frac{\sigma^2}{100}\epsilon^{-1}.
\end{align*}
\end{theorem}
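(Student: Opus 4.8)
The plan is to exploit that the first-round allocation is non-adaptive: since $A_1$ is $\Fcal_0$-measurable and $\Fcal_0$ is trivial, the total first-round pull count $x := N_{1,1} + N_{2,1}$ is a fixed constant chosen by the algorithm before any data is seen. I would split on the size of $x$ relative to a threshold $x^\star$ of order $\sigma^2\epsilon^{-1}$, with the constant to be fixed later. \emph{Case 1 ($x \ge x^\star$):} take the easy instance $\nu_e$ with arm $1 \sim \Ncal(1/2,\sigma^2)$ and arm $2\sim\Ncal(1/4,\sigma^2)$, so $\Hcompl(\nu_e) = 2(1/4+\epsilon)^{-2} = \bigO(1)$; because the first round is executed regardless of observations, $\widetilde N_2 \ge x \ge x^\star$ with probability one, so $\widetilde N_2/\Hcompl(\nu_e)$ exceeds the target deterministically, and this case needs no probabilistic argument. \emph{Case 2 ($x < x^\star$):} I use three Gaussian instances sharing arm $1 \sim \Ncal(1/2,\sigma^2)$ and differing only in arm $2$: the moderate instance $\nu$ with $\mu_2 = 1/2 - \sqrt\epsilon$ (gap $\sqrt\epsilon$, so arm $2$ is not $\epsilon$-optimal, the answer must be arm $1$, and $\Hcompl(\nu) = 2(\sqrt\epsilon+\epsilon)^{-2} \le 2\epsilon^{-1}$); the instance $\nu'$ with $\mu_2 = 1/2 + \epsilon$ (arm $2$ optimal, answer arm $2$); and the tight twin $\nu''$ with $\mu_2 = 1/2 - \epsilon$ (answer arm $1$). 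The crucial structural point is that the second-round allocation $N_{2,2}$, and hence the total arm-$2$ pull count $K := N_{2,1} + N_{2,2}$, is $\Fcal_1$-measurable, being committed from first-round data before the second-round samples arrive.

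The main step, and the one I expect to be the chief obstacle, is to force $K$ to be large in \emph{high probability} under $\nu'$: I want $P_{\nu'}\big(K \ge c_1 \sigma^2\epsilon^{-2}\big)$ bounded below by a constant exceeding $1/2$. Intuitively $\nu'$ and $\nu''$ demand opposite recommendations while their arm-$2$ means are only $2\epsilon$ apart, so fewer than $\asymp \sigma^2\epsilon^{-2}$ arm-$2$ samples cannot separate them. I would make this a per-run statement rather than an in-expectation one through a truncated change of measure: on the $\Fcal_1$-event $\{K < c_1\sigma^2\epsilon^{-2}\}$ the log-likelihood ratio $\log(dP_{\nu''}/dP_{\nu'})$ is a sum of $K$ Gaussian increments of size $\bigO(\epsilon)$ and therefore stays $\bigO(c_1)$ except on a fluctuation event of small probability; transporting the event $\{\recrule = 2\}$ (wrong under $\nu''$) through this bounded ratio and invoking $P_{\nu''}(\recrule = 2) \le \delta$ forces $P_{\nu'}\big(\recrule = 2,\ K < c_1\sigma^2\epsilon^{-2}\big)$ to be small, so $P_{\nu'}(K \ge c_1\sigma^2\epsilon^{-2}) \ge 1 - \delta - (\text{fluctuation})$. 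The delicacy is precisely in bounding the likelihood ratio in high probability rather than passing to expectations (which would destroy the per-run guarantee the theorem asks for); this is where I expect the constants and the hypothesis $\delta < 1/32$ to be consumed.

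It then remains to transfer this to the moderate instance and read off the ratio. Since $\{K \ge c_1\sigma^2\epsilon^{-2}\}$ is $\Fcal_1$-measurable and $\nu,\nu'$ agree on arm $1$, their laws on $\Fcal_1$ differ only through the $N_{2,1} \le x < x^\star$ first-round arm-$2$ samples, so $\mathrm{KL}(P_\nu \Vert P_{\nu'})\big|_{\Fcal_1} = N_{2,1}\frac{(\sqrt\epsilon+\epsilon)^2}{2\sigma^2} < x^\star \frac{2\epsilon}{2\sigma^2}$, which is a small constant once the constant in $x^\star$ is chosen small enough; by Pinsker the total-variation distance is then at most some $\rho < 1/2$. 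Hence $P_\nu(K \ge c_1\sigma^2\epsilon^{-2}) \ge P_{\nu'}(K \ge c_1\sigma^2\epsilon^{-2}) - \rho \ge 1/8$, and on this event $\widetilde N_2 \ge K \ge c_1\sigma^2\epsilon^{-2}$ while $\Hcompl(\nu) \le 2\epsilon^{-1}$, giving $\widetilde N_2/\Hcompl(\nu) \ge \tfrac{c_1}{2}\sigma^2\epsilon^{-1}$. The final bookkeeping is to pick $x^\star$ simultaneously large enough that the Case-1 ratio $x^\star/\Hcompl(\nu_e)$ clears $\tfrac{1}{100}\sigma^2\epsilon^{-1}$ and small enough that $\rho < 1/2$ in Case 2, together with $c_1 \approx 1/4$ coming from the $2\epsilon$-separation; the constraints $\epsilon < 1/10$ and $\delta < 1/32$ leave a valid window that delivers the stated $\frac{\sigma^2}{100}\epsilon^{-1}$ with probability at least $1/8$.
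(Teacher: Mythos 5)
Your proposal is correct in substance and follows essentially the same architecture as the paper's proof: a case split on the deterministic first-round budget at a threshold of order $\sigma^2\epsilon^{-1}$; a constant-gap instance that pays the price deterministically when that budget is large; and, when it is small, a high-probability $\Omega(\sigma^2\epsilon^{-2})$ pull lower bound at an instance with gap of order $\epsilon$, transferred --- using exactly the paper's key observation that the total pull count is committed at the end of round one, hence $\mathcal{F}_1$-measurable --- to a $\sqrt{\epsilon}$-gap instance whose $\Hcompl(\nu)$ is of order $\epsilon^{-1}$. Your $\nu''$, $\nu$, $\nu_e$ are the paper's $\nu^{(1)}$, $\nu^{(2)}$, $\nu^{(3)}$ with the roles of the two arms swapped, and your $\nu'$ is the flipped alternative the paper uses inside its Part 1. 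The differences are tool-level, and two caveats are worth recording. First, for the high-probability pull bound the paper argues by contradiction via a truncated algorithm $\widetilde{A}$ and the in-expectation change-of-measure lemma (Lemma~\ref{lemma:exp_kl_bound}), whereas you control the truncated likelihood ratio directly with a Gaussian tail bound; your route works, but the factor $e^{2c_1 + t}$ multiplying $\delta$ together with the fluctuation term makes your suggested $c_1 \approx 1/4$ untenable at $\delta < 1/32$ (the surviving probability drops to roughly $0.45$, below what the subsequent total-variation loss can absorb); you must take $c_1$ on the order of $1/50$ --- which is precisely the constant the paper's Part 1 produces, and which still clears the requirement $c_1 \geq 1/50$ imposed by the target $\tfrac{\sigma^2}{100}\epsilon^{-1}$ --- for the window you assert to genuinely exist. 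Second, for the transfer step the paper invokes the high-probability Pinsker inequality of Lemma~\ref{lemma:tsybakov}, $\mathbb{P}(\mathcal{E}^c) + \mathbb{Q}(\mathcal{E}) \geq \tfrac{1}{2}e^{-\mathrm{KL}}$, which tolerates first-round KL as large as $\log 2$ while needing only a $7/8$ guarantee from the pull bound; your plain Pinsker/TV argument instead needs the pull bound to hold with probability much closer to $1$ and the first-round KL bounded well below a constant, a trade that still closes given the corrected $c_1$, but with noticeably less slack between your Case 1 constraint ($c^\star \geq 0.32$) and your Case 2 constraint on the TV distance.
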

If the ratio $\Ntilde_T/\Hcompl(\nu)$ is uniformly small across all problems in $\Pcal$,
it means that a $T$-round algorithm does not do
significantly worse than its sequential counterpart on any problem.
While Theorem~\ref{thm:fixed_b0} upper bounds this ratio by
$\tilde{O}\big(\epsilon^{-\frac{2}{T}}\big)$ on all problems, the lower bound shows that,
at least for this simple setting,
this ratio could be as large on some problems, up to constant and logarithmic factors.

We generalize the previous result for general $T$,
but for a restricted class of algorithms.
Let $\widetilde{\Acal}$ be the class of $(\epsilon,
\delta)$-PAC algorithms that run for $T$ rounds and choose values
$\{\{Q_{i,s}\}_{s\in[T]}\}_{i\in[n]}$ ahead of time.
Then, they \emph{adaptively} choose on which rounds to
pull each arm $i \in [n]$, but if it pulls arm $i$ on round $t$, it must pull it $Q_{i, t}$ times.
That is, $N_{i,t} \in \{0, Q_{i,t}\}$.
We have the following result.

\vspace{0.05in}
\begin{theorem}\label{thm:lb_restricted_class}
Fix $n=2$ and $T$.
Let $n= 2$ and let $\delta \leq \frac{1}{6}$, $\epsilon \leq 2^{-(T + 1)}$ be given
and let $A \in \widetilde{\alg}$.
Then, there exists $\nu \in \mathcal{P}$ such that with probability
at least $\frac{1}{6}$,
\begin{align*}
   \frac{\widetilde{N}_T}{\Hcompl(\nu)} &\geq \frac{\sigma^2}{192}\log\left(\frac{1}{2\delta}\right)\epsilon^{-\frac{2}{T}}.
\end{align*}

\end{theorem}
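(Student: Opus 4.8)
The plan is to exhibit a single suboptimal arm whose required number of pulls is forced, by the rigidity of the predetermined schedule, a full factor $\epsilon^{-2/T}$ above the information-theoretically necessary amount, and to conclude $\widetilde{N}_T \ge \widetilde{N}_{2,T}$. Since $A \in \widetilde{\Acal}$, the per-round counts $Q_{2,1},\dots,Q_{2,T}$ for arm $2$ are fixed in advance and (by the racing structure) arm $2$ is pulled on a data-dependent prefix of rounds before being dropped; consequently the only cumulative pull totals arm $2$ can attain are the prefix sums $M_t := \sum_{s=1}^t Q_{2,s}$, a set of at most $T+1$ values $0 = M_0 < M_1 < \cdots < M_T$. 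I would first record the structural constraint that to be $(\epsilon,\delta)$-PAC on the hardest instances (gap $\to 0$) the algorithm must be able to acquire $\bigOmega(\sigma^2\epsilon^{-2}\log(1/(2\delta)))$ samples of arm $2$, so $M_T \ge \tau(\epsilon)$, where for a gap $\Delta$ I set $\tau(\Delta) := \tfrac{\sigma^2}{2\Delta^2}\log(1/(2\delta))$, the order of the pulls needed to tell $\mu_2=\tfrac12-\Delta$ from $\mu_2=\tfrac12+\Delta$ with confidence $1-2\delta$.

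Next I would run a geometric pigeonhole on these at most $T$ attainable levels. As $\Delta$ ranges over $(\epsilon,\tfrac12]$, $\tau(\Delta)$ sweeps a multiplicative range $[\tau(\tfrac12),\tau(\epsilon)]$ of size $\asymp \epsilon^{-2}$; I partition it into $T$ sub-intervals each of multiplicative width at least $\tfrac14\epsilon^{-2/T}$ (this is where $\epsilon \le 2^{-(T+1)}$ enters, guaranteeing the range is wide enough for $T$ such sub-intervals). Since $M_T$ sits at or above the top of the range, at most the $T-1$ sums $M_1,\dots,M_{T-1}$ can lie strictly inside it, so at least one sub-interval contains no $M_t$. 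Choosing $\Delta^\star$ so that $\tau(\Delta^\star)$ lands in this empty sub-interval, just above the largest attainable sum $M_{t^\star}$ below it, forces $M_{t^\star+1} \ge \tfrac14\epsilon^{-2/T}\,\tau(\Delta^\star)$, and the placement guarantees $\Delta^\star>\epsilon$. I then take the instance $\nu$ with $\mu_1=\tfrac12,\ \mu_2=\tfrac12-\Delta^\star$, for which $\Hcompl(\nu)=2(\Delta^\star+\epsilon)^{-2}\le 2(\Delta^\star)^{-2}=\tfrac{4\tau(\Delta^\star)}{\sigma^2\log(1/(2\delta))}$.

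The core is a high-probability forcing step: I claim that under $\nu$ the algorithm is still pulling arm $2$ on round $t^\star+1$ with probability at least $1/6$, so that $\widetilde{N}_T \ge \widetilde{N}_{2,T} \ge M_{t^\star+1}$ on that event (here the prefix structure is essential: staying active at round $t^\star+1$ means having pulled on all of rounds $1,\dots,t^\star+1$, hence cumulative exactly $M_{t^\star+1}$). Suppose instead arm $2$ is dropped by round $t^\star$ — equivalently $\widetilde{N}_{2,T}\le M_{t^\star}<\tau(\Delta^\star)$ — with probability exceeding $5/6$. Consider the alternative $\nu'$ with $\mu_2=\tfrac12+\Delta^\star$, under which arm $2$ is the unique $\epsilon$-optimal arm, so PAC forces $\recrule=2$ with probability $\ge 1-\delta$. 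On the event $B=\{\text{arm }2\text{ dropped by round }t^\star,\ \recrule=1\}$ at most $M_{t^\star}\le\tau(\Delta^\star)$ samples of arm $2$ are seen, so the log-likelihood ratio $\Lambda=\log(d\mathbb{P}_\nu/d\mathbb{P}_{\nu'})$ is a sum of at most $\tau(\Delta^\star)$ Gaussian increments with $\mathbb{E}_\nu[\Lambda]\le \tau(\Delta^\star)\cdot\tfrac{2(\Delta^\star)^2}{\sigma^2}=\log(1/(2\delta))$ by the choice of $\Delta^\star$. A change of measure restricted to $B$ (controlling $\Lambda$ on this low-pull event by Markov/concentration and applying a Bretagnolle--Huber-type inequality) transfers $\mathbb{P}_\nu(B)\ge \tfrac56-\delta$ into $\mathbb{P}_{\nu'}(\recrule=1)\gtrsim 2\delta>\delta$, contradicting PAC-ness on $\nu'$; hence the early-stopping probability is at most $5/6$.

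Finally I would assemble the pieces: on the forcing event, $\widetilde{N}_T \ge M_{t^\star+1}\ge\tfrac14\epsilon^{-2/T}\tau(\Delta^\star)$ while $\Hcompl(\nu)\le 4\tau(\Delta^\star)/(\sigma^2\log(1/(2\delta)))$, giving $\widetilde{N}_T/\Hcompl(\nu)\ge \tfrac{1}{16}\sigma^2\log(1/(2\delta))\epsilon^{-2/T}$, which after tracking the constant losses in the change-of-measure step yields the stated $\tfrac{\sigma^2}{192}\log(1/(2\delta))\epsilon^{-2/T}$ with probability at least $1/6$. I expect the main obstacle to be precisely this high-probability forcing step: unlike the expectation-level transportation lemma, I must convert ``few pulls'' into a constant-probability lower bound while keeping the restricted log-likelihood ratio controlled, which is exactly why $\Delta^\star$ must be tuned so that $\tau(\Delta^\star)\,\mathrm{KL}(\nu_2\|\nu_2')\approx\log(1/(2\delta))$; simultaneously ensuring $\Delta^\star>\epsilon$ (so the cheap regime $(\Delta+\epsilon)^{-2}$ governs $\Hcompl$) and that $\tau(\Delta^\star)$ lands in the empty pigeonhole interval is what requires the hypothesis $\epsilon\le 2^{-(T+1)}$ and careful bookkeeping of constants.
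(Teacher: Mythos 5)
Your proposal contains a genuine gap at its combinatorial core: the reduction to prefix sums. You assert that ``by the racing structure'' arm $2$ is pulled on a data-dependent \emph{prefix} of rounds, so that its attainable cumulative pull counts are the $T+1$ prefix sums $M_0 < M_1 < \cdots < M_T$. But the class $\widetilde{\Acal}$ carries no such structure: an algorithm in $\widetilde{\Acal}$ fixes the values $Q_{i,t}$ in advance and then \emph{adaptively chooses on which rounds to pull each arm}, with $N_{i,t} \in \{0, Q_{i,t}\}$; it may pull arm $2$ on rounds $1$, $3$, and $7$ and skip the rest. Consequently the attainable totals are the sums over \emph{arbitrary subsets} of $\{Q_{2,t}\}_{t=1}^T$ --- up to $2^T$ values, not $T+1$ --- and your pigeonhole (at most $T-1$ values falling inside a range split into $T$ multiplicative cells) collapses. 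The naive repair also fails: pigeonholing against $2^T$ subset sums would force cells of multiplicative width only $(\epsilon^{-2})^{2^{-T}}$, far too small to yield $\epsilon^{-2/T}$, and a schedule like $Q_{2,t} \propto 2^t$ generates up to $2^T$ distinct subset sums, so no counting argument alone leaves a cell empty. What rescues the theorem is that a schedule cannot simultaneously have small overshoot at every target, a minimum pull size, and total $\sum_t Q_{2,t} \gtrsim \epsilon^{-2}$: this is exactly the content of the paper's Lemma~\ref{lemma:scheduling2}, proved by contradiction --- if every target in $[R_{\mathcal{Q}}, N_{\mathcal{Q}}]$ could be overshot by a factor below $\frac{1}{8}(N_{\mathcal{Q}}/R_{\mathcal{Q}})^{1/T}$, then each sorted value satisfies $Q^{(t)} < \beta \sum_{s<t} Q^{(s)}$ with $\beta = \frac{1}{4}(N_{\mathcal{Q}}/R_{\mathcal{Q}})^{1/T}$, which inductively forces $Q^{(s)} \leq R_{\mathcal{Q}}(\beta+1)^s$ and hence a total sum strictly below $N_{\mathcal{Q}}$, a contradiction. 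Your proof, as written, establishes the theorem only for the strict subclass of prefix-racing algorithms, not for all of $\widetilde{\Acal}$.

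The remaining ingredients of your plan do align with the paper's: your high-probability forcing step (few pulls under $\nu$ plus PAC-ness under the flipped model $\nu'$ yields a contradiction via change of measure and an alternative truncated algorithm) is the paper's Lemma~\ref{lemma:suboptimal_elim_cost_prob}, and your requirement $M_T \gtrsim \sigma^2 \epsilon^{-2}\log(1/(2\delta))$ is the paper's constraint $\widetilde{Q}_{i,T} > N_0$. So the missing idea is precisely the subset-sum scheduling lemma; once the prefix-sum pigeonhole is replaced by an argument of that type, the rest of your outline goes through with the stated constants.
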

Once again, we find that this result matches Theorem~\ref{thm:fixed_b0}
up to constant and logarithmic factors.
Note that \algabbr{} is in the class $\widetilde{\Acal}$.
The two previous high probability lower bounds can be easily converted into results in terms of the
expected number of pulls $\mathbb{E}_{A,\nu}[\widetilde{N}_T]$, as in the following
corollary.

\vspace{0.05in}
\begin{corollary}[Corollary to Theorems~\ref{thm:lb_n2_T2} and~\ref{thm:lb_restricted_class}]
\label{corollary:expectation_lb}
Let $\mathcal{P}$ be the class of $2$-armed bandit models with $\sigma$ sub-Gaussian rewards. 
Then,
\textbf{(i)}
    Under the assumptions of Theorem~\ref{thm:lb_n2_T2},
    \begin{align*}
        \inf_{A\in(\epsilon,\delta)\text{{\rm -PAC}}}\;\sup_{\nu\in\mathcal{P}}\;\frac{\mathbb{E}_{A,\nu}\big[\widetilde{N}_2\big]}{\Hcompl(\nu)}
&\geq \frac{\sigma^2}{800}\epsilon^{-1}.
    \end{align*}
\textbf{(ii)}
    Under the assumptions of Theorem~\ref{thm:lb_restricted_class},
    \begin{align*}
    \inf_{A \in \widetilde{\alg}}\;
    \sup_{\nu \in \mathcal{P}}\; \frac{\mathbb{E}_{A, \nu}\big[\widetilde{N}_T\big]}{\Hcompl(\nu)} &\geq \frac{\sigma^2}{144}\log\left(\frac{1}{2\delta}\right)\epsilon^{-\frac{2}{T}}.
    \end{align*}
\end{corollary}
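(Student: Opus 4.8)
The plan is to obtain each expectation bound from the corresponding high-probability bound by a one-line truncation (``reverse Markov'') argument, and then to dispatch the $\inf_A\sup_\nu$ quantifiers using the precise logical form of Theorems~\ref{thm:lb_n2_T2} and~\ref{thm:lb_restricted_class}. The elementary fact I would rely on is that for any nonnegative random variable $X$ and any $c>0$, $\mathbb{E}[X]\geq \mathbb{E}[X\,\mathbbm{1}\{X\geq c\}]\geq c\,\mathbb{P}(X\geq c)$. Since the total number of pulls is nonnegative and, for a \emph{fixed} bandit model $\nu$, the quantity $\Hcompl(\nu)$ is deterministic, I would apply this with $X=\widetilde{N}_T$ and $c$ equal to the stated threshold times $\Hcompl(\nu)$, then divide through by $\Hcompl(\nu)$.

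For part \textbf{(i)}, fix any $(\epsilon,\delta)$-PAC algorithm $A$ with $T=2$. Theorem~\ref{thm:lb_n2_T2} supplies a witness $\nu\in\mathcal{P}$, possibly depending on $A$, with $\mathbb{P}_{A,\nu}\!\big(\widetilde{N}_2\geq \tfrac{\sigma^2}{100}\epsilon^{-1}\Hcompl(\nu)\big)\geq \tfrac18$. The truncation bound then yields $\mathbb{E}_{A,\nu}[\widetilde{N}_2]\geq \tfrac18\cdot\tfrac{\sigma^2}{100}\epsilon^{-1}\Hcompl(\nu)$, i.e. $\mathbb{E}_{A,\nu}[\widetilde{N}_2]/\Hcompl(\nu)\geq \tfrac{\sigma^2}{800}\epsilon^{-1}$. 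Because the witness realizes (a lower bound on) the supremum over $\nu$, this gives $\sup_{\nu}\mathbb{E}_{A,\nu}[\widetilde{N}_2]/\Hcompl(\nu)\geq \tfrac{\sigma^2}{800}\epsilon^{-1}$ for \emph{every} such $A$; as the right-hand side does not depend on $A$, taking $\inf_A$ preserves it. Part \textbf{(ii)} has the same structure, applied to $A\in\widetilde{\Acal}$ using Theorem~\ref{thm:lb_restricted_class} with its probability $\tfrac16$.

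The main obstacle is not conceptual but arithmetic, and it is genuine for part \textbf{(ii)}. The witness $\nu$ in each theorem is allowed to depend on the algorithm, so the argument must pass to $\sup_\nu$ before taking $\inf_A$; this is sound precisely because the final numerical bound is uniform in $A$. More delicately, naive reverse Markov on the \emph{stated} high-probability bound of Theorem~\ref{thm:lb_restricted_class} gives only $\tfrac16\cdot\tfrac{1}{192}=\tfrac{1}{1152}$, which is weaker than the claimed $\tfrac{1}{144}$ (indeed $\tfrac{1}{144}>\tfrac{1}{192}$, so \emph{no} probability in $[0,1]$ could produce it by truncation alone). The resolution I would use is that the expected-pulls bound is the more primitive object: the change-of-measure / Bretagnolle--Huber step underlying Theorem~\ref{thm:lb_restricted_class} directly lower-bounds $\mathbb{E}_{A,\nu}[\widetilde{N}_T]$, and the conversion to a high-probability statement about $\widetilde{N}_T$ is what discards a constant factor. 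Hence for part \textbf{(ii)} I would read off the expectation bound at that intermediate stage rather than mechanically truncating the final high-probability claim, which is what recovers the sharper constant $\tfrac{1}{144}$; part \textbf{(i)}, whose constant $\tfrac{1}{800}=\tfrac18\cdot\tfrac{1}{100}$ matches truncation exactly, needs no such refinement.
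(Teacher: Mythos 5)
Your route --- reverse Markov/truncation plus passing to $\sup_\nu$ before $\inf_A$ --- is exactly the conversion the paper intends: the paper never writes a proof of Corollary~\ref{corollary:expectation_lb}, saying only that the high-probability bounds can be ``easily converted,'' and your quantifier handling (the witness $\nu$ may depend on $A$, but the final bound is uniform in $A$) is sound. Part \textbf{(i)} is complete and correct: $\frac{1}{8}\cdot\frac{\sigma^2}{100}\epsilon^{-1}=\frac{\sigma^2}{800}\epsilon^{-1}$ matches the stated constant. Your observation about part \textbf{(ii)} is also correct and worth stating plainly: since $\frac{1}{144}>\frac{1}{192}$, no truncation of the high-probability bound in Theorem~\ref{thm:lb_restricted_class} at any probability level can produce the stated constant, and what truncation actually yields is $\frac{1}{6}\cdot\frac{\sigma^2}{192}\log\left(\frac{1}{2\delta}\right)\epsilon^{-2/T}=\frac{\sigma^2}{1152}\log\left(\frac{1}{2\delta}\right)\epsilon^{-2/T}$.

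The genuine gap is your proposed repair of part \textbf{(ii)}. Reading off an expectation bound ``at the intermediate change-of-measure stage'' does not recover $\frac{\sigma^2}{144}$, because the change-of-measure step (Lemma~\ref{lemma:exp_kl_bound} applied to the PAC events) gives only $\mathbb{E}_{A,\nu}\big[\widetilde{N}_{i,T}\big]\geq 2\sigma^2\log\left(\frac{1}{2.4\delta}\right)(\Delta_i+\epsilon)^{-2}=\sigma^2\log\left(\frac{1}{2.4\delta}\right)\Hcompl(\nu)$, a bound with no $\epsilon^{-2/T}$ factor at all. In Theorem~\ref{thm:lb_restricted_class}, the factor $\epsilon^{-2/T}$ enters solely through the scheduling gap (Lemma~\ref{lemma:scheduling2}): for an adversarially chosen $\nu$, the realizable values of $\widetilde{N}_{i,T}$ jump from at most $P_{i,\nu}$ to at least $P_{{\rm min},i,\nu}\geq\frac{1}{32}\epsilon^{-2/T}P_{i,\nu}$. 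Converting that dichotomy into an expectation bound requires a lower bound on $\mathbb{P}_{A,\nu}\big(\widetilde{N}_{i,T}>P_{i,\nu}\big)$, and the only such bound available is the $\frac{1}{6}$ of Lemma~\ref{lemma:suboptimal_elim_cost_prob} --- precisely the factor you were trying to avoid. The unconditioned expectation bound cannot substitute for it: an algorithm could satisfy that bound while exceeding $P_{{\rm min},i,\nu}$ with probability only of order $P_{i,\nu}/P_{{\rm min},i,\nu}$, so by itself it forces no $\epsilon^{-2/T}$ blowup. Consequently your argument proves part \textbf{(ii)} with constant $\frac{\sigma^2}{1152}$, not $\frac{\sigma^2}{144}$; since the paper contains no derivation of its constant anywhere, the honest conclusion is that the stated $\frac{\sigma^2}{144}$ is an unsupported (apparently erroneous) constant --- which changes nothing conceptually, as $\frac{\sigma^2}{1152}$ still matches Theorem~\ref{thm:fixed_b0} up to constants --- rather than something recoverable by a sharper reading of the proof, and it should be flagged as such instead of asserted without a computation.
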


\paragraph{Comparison to prior work:}
~\citet{kalyanakrishnan2012pac}, who study the sequential setting, establish a 
$\bigOmegatilde(n/\epsilon^2)$ worst case complexity for $(\epsilon,\delta)$-PAC BAI.
\citet{agarwal2017learning} provide a
$\bigOmega\big(\frac{n}{\Delta_{[1]}^2 T^4} \mathrm{ilog}^T(n)\big)$ lower bound on the
worst case complexity for their problem of finding the best arm with known $\Delta_{[1]}$
in $T$ rounds.
However, this bound is better than $n/\Delta_{[1]}^2$ only for $T$ such that
$T^4\in \littleO({\rm ilog}(T))$, which severely limits its applicability.
Crucially, these results do not capture the main advantage
adaptivity has to offer: the ability to adapt to problem difficulty.
While our lower bounds also consider the the worst case over a class of problems,
we do so either over smaller classes depending on $T$, 
or study the worst case ratio relative to the problem's complexity term $\Hcompl(\nu)$.

\vspace{-0.05in}
\subsection{Proof sketches for Theorems~\ref{thm:partitioning},~\ref{thm:lb_n2_T2},
and \ref{thm:lb_restricted_class}}
\label{sec:lb_proof_sketches}

An important ingredient in most lower bound analyses is a change of measure lemma.
We use one provided in~\citet{kaufmann2016complexity} (While their lemma was given for the
sequential setting, it is straightforward to establish a similar result for $T<\infty$).
However, as this lemma only allows us to upper bound the expected number of arm pulls,
it is not sufficient for our purposes.
In particular, the proofs of Theorems~\ref{thm:lb_n2_T2} and~\ref{thm:lb_restricted_class}
rely on showing that the number of pulls will be large for some arms with constant probability.
Therefore, we first show that on a two-armed problem $\Ntilde_{i,T} \geq
\bigOmegatilde((\Delta_i+\epsilon)^{-2})$ with constant probability.
Our proof, which uses the change of measure lemma, shows by contradiction that if the number of
pulls is small, an alternative algorithm which does not execute as many arm pulls will do well
on the problem.
We will refer to this result as HPCM (high probability change of measure).
Unfortunately, an HPCM result for general $n$ appears difficult and hence
Theorems~\ref{thm:lb_n2_T2} and~\ref{thm:lb_restricted_class} are stated for 2-armed problems.
A result for general $n$ can be obtained  using the same intuitions we outline below, but with
an HPCM for arbitrary $n$.

The proof of Theorem~\ref{thm:partitioning} simply applies the change of measure
lemma and HPCM to the hardest problem in each
subclass, i.e., when all gaps are as small as possible.
The main novelty in this proof, when taken in conjunction with
Theorem~\ref{thm:partitioning_upper_bound}, is in the design of the partitioning
(Definition~\ref{defn:partitioning}).
For Theorem~\ref{thm:lb_restricted_class}, we use HPCM to first argue that any algorithm
in $\Acal$ must have $\sum_{t}Q_{i,t}\in \bigOmegatilde(\epsilon^{-2})$ for all
$i\in[2]$. 
Then, we show that any sequence of $Q_{i,t}$ values which satisfy this constraint will have
a gap of at least $\bigOmegatilde(\epsilon^{-2/T})$ between the possible values that can be generated by summing any subset of these values.
A problem $\nu$ with difficulty falling in this gap will pay this additional cost, as the only way to
pull arms sufficiently to be $(\epsilon, \delta)$-PAC will be to overpull arms by at least this amount.

For Theorem~\ref{thm:lb_n2_T2}, 
we use HPCM to show that when the arms are exactly
$\epsilon$ away, $\bigOmega(\epsilon^{-2})$ pulls are necessary.
We then use the high probability version of Pinsker's inequality to show that if an arm only pulls
$\bigO(\epsilon^{-1})$ times on the first round, then with constant probability,
 it may not be able to distinguish
between a hard problem  where a sub-optimal arm is $\epsilon$ away, and an easier
problem where the sub-optimal arm is
$\sqrt{\epsilon}$ away.
Under this event, it has to pull at least $\bigOmega(\epsilon^{-2})$ times.
Therefore, if an algorithm pulls less than $\bigO(\epsilon^{-1})$ times, then it has large
$\Ntilde_T/\Hcompl(\nu)$ ratio
on problems where the sub-optimal arm is $\sqrt{\epsilon}$ away.
If however, it pulls more than $\bigO(\epsilon^{-1})$ times in the
first round, it has a poor ratio on problems where $\Delta_{[1]}$ is very
large, so that $\Hcompl(\nu)$ is a constant.

\vspace{-0.05in}
\section{Simulations}
\vspace{-0.05in}
\label{sec:experiments}

We evaluate \algabbr{} on a set of simulation experiments against a set of
baselines. The purpose of these experiments is to study whether \algname{} is able to effectively
reduce its cost as the number of rounds or error tolerance are increased, as suggested by the
theoretical results.

\begin{figure}[t!]
    \centering
    \begin{subfigure}
        \centering
        \includegraphics[width=2.2in]{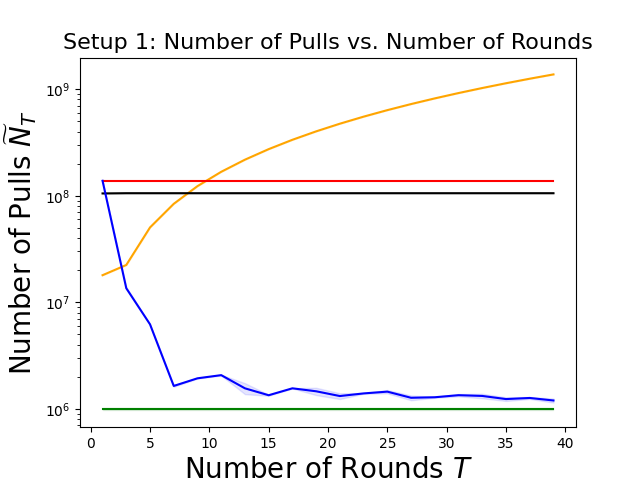}
    \end{subfigure}%
    ~ 
    \begin{subfigure}
        \centering
        \includegraphics[width=2.2in]{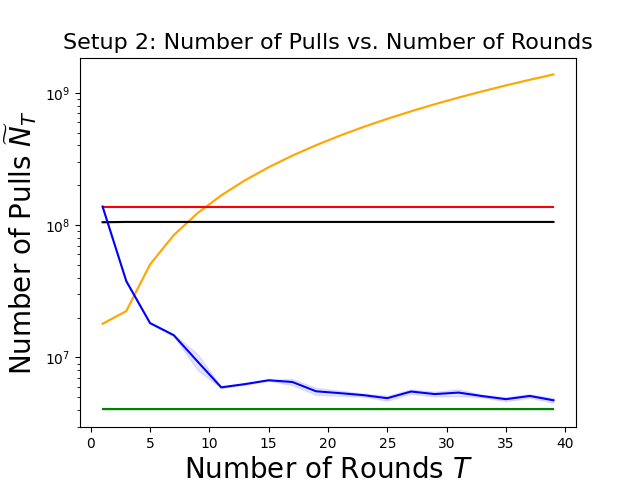}
    \end{subfigure}
    \begin{subfigure}
        \centering
        \includegraphics[width=2.2in]{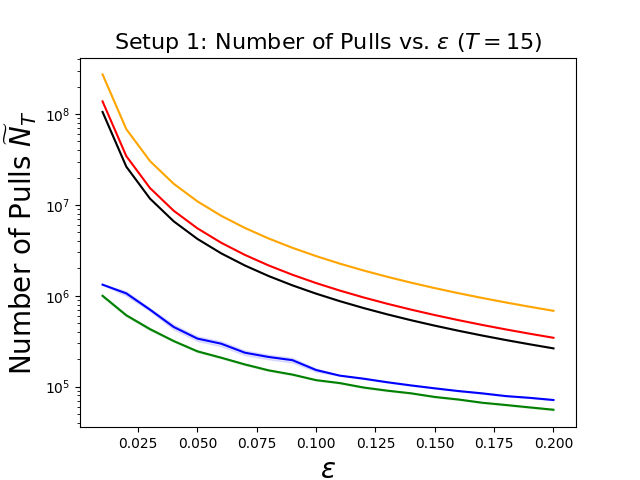}
    \end{subfigure}
    \begin{subfigure}
        \centering
        \includegraphics[width=2.2in]{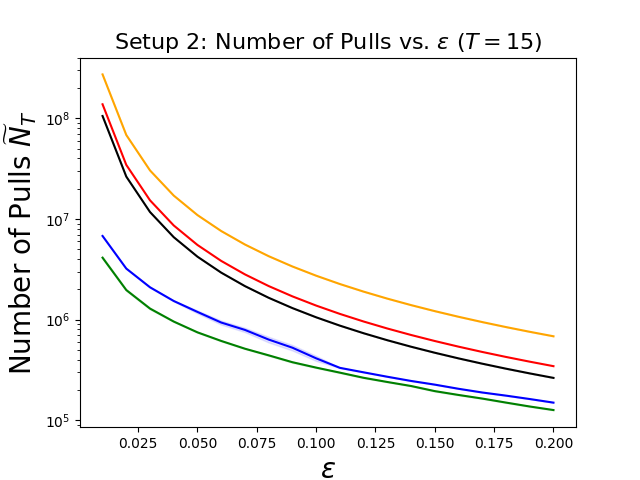}
    \end{subfigure}
    \begin{subfigure}
        \centering
        \includegraphics[width=3.5in]{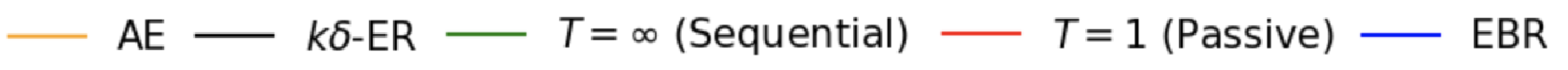}
    \end{subfigure}
    \caption{\textbf{Simulation experiments:}
In the top row, we have shown how the number of pulls varies with the deadline $T$
when $\epsilon = 0.01$ is fixed on the two setups.
In the bottom row we show how the number of pulls varies with the error tolerance $\epsilon$
when $T=15$ is fixed.
Each point on the curves was obtained by averaging over 100 runs and error bars indicate
one standard error.
}
\label{fig:experiments}
\vspace{-0.10in}
\end{figure}

\textbf{Baselines:}
We compare \algname{} to the following baseline $(\epsilon, \delta)$-PAC algorithms:
\vspace{-0.05in}
\begin{enumerate}[leftmargin=0.4in]
    \item \textbf{Top-$k$ $\delta$-Elimination with Limited Rounds ($k-\delta$ER):} an elimination-style algorithm proposed for this setting in~\cite{jin2019efficient}.
\vspace{-0.02in}
    \item \textbf{Aggressive Elimination(AE)}: an elimination-style algorithm proposed by~\citet{agarwal2017learning} in a
similar setting, but where $\Delta_{[1]}$ is known. We provide the algorithm $\epsilon$ instead
of $\Delta_{[1]}$, which is typically difficult to know in practice.
\vspace{-0.02in}
    \item \textbf{$T = \infty$ (Sequential):} a sequential algorithm that pulls arms
sequentially, and has elimination condition as \algabbr{}. We use the
deviation function from~\citet{jun2016top} to construct confidence bounds, which does not
increase with $T$, but is initially more conservative than ours. Naively setting $T = \infty$ with
the deviation function in \algabbr{}, results in unbounded confidence intervals.
    \item \textbf{$T = 1$ (Passive):} a passive algorithm that pulls all arms
    $\bar{N}_\epsilon(\delta) = \frac{80\sigma^2}{\epsilon^2}\log\left(\frac{n}{\delta}\right)$
times in a single round, then outputs the arm with the highest empirical mean
    As per our confidence intervals, $\bar{N}_\epsilon(\delta)$ samples are sufficient to
     identify an $\epsilon$-optimal arm in the passive setting.
\end{enumerate}

\textbf{Experimental setup:}
We provide two types of experiments.
In the first, we fix $\epsilon = 0.01$, and vary the deadline $T$.
In the second, we fix $T = 15$ and vary $\epsilon$.  
In all experiments, we use $\delta=0.01$.
We consider two bandit models, with $n = 100$ arms and Bernoulli rewards, designed as follows.
\begin{enumerate}[leftmargin=0.4in]
\vspace{-0.05in}
    \item Setup 1 (evenly spaced arms): The arm means form an arithmetic sequence from $0.1$ to $0.9$.
\vspace{-0.03in}
    \item Setup 2 (all arms are close): The arm means form an arithmetic sequence from $0.65$ to $0.9$.
\end{enumerate}
The experimental setups subject the algorithms to a variety of conditions, where all, some, or a few
arms will need to be pulled many times before a decision can be made with high confidence.

\textbf{Results:}
While the algorithms are only required to identify an $\epsilon$-optimal arm with probability at
least $1 - \delta$, we find that they all identify such an arm in every trial.
This is consistent with
prior work which suggest that these confidence intervals can be conservative in
practice~\cite{thananjeyan2021overcoming,kandasamy2015high,wang2017batched}.
They can be tuned for better empirical performance, but we will not
delve into this in this work.
Therefore, we do not report success rate, and instead
focus on the cost, i.e the number of samples, in Figure~\ref{fig:experiments}.
We find that \algabbr{} is able to outperform baselines on all tasks, except
for the sequential oracle algorithm, which has infinite time budget. In the sweep over number of
rounds, \algabbr{} quickly reduces the number of pulls it requires by several orders of magnitude
when given additional time.
In the sweep over $\epsilon$, \algabbr{} consistently
stays within an order of magnitude of the sequential algorithm and is at least an order of magnitude
more efficient than the passive \algabbr{} and $k-\delta$ER.
In our experiments, we found that AE does not eliminate arms until the
last few rounds, after which it aggressively does so; therefore,
increasing its number of rounds actually
increases its number of pulls. While $k\delta$-ER aggressively eliminates arms immediately, the number of pulls is dominated by the first round,
where $\tilde{O}\left(\frac{n}{\epsilon^2}\right)$ pulls are executed. So, even as $T$ is increased,
it does not perform much better.


\vspace{-0.05in}
\section{Conclusion}\label{sec:conclusion}
\vspace{-0.05in}

We study PAC BAI under a deadline where a decision-maker must
identify an $\epsilon$-optimal arm while minimizing the number of pulls, or cost, to do so.
When the deadline is short, 
the ability to behave adaptively is limited;
our upper and lower bounds tightly quantify this phenomenon.
Our proposed algorithm, \algabbr{}, has several optimality properties and outperforms
baselines in our simulations.
One
avenue for future work is to generalize Theorem~\ref{thm:lb_n2_T2} to work for general $n$ and $T$.
Another interesting area of future work is studying BAI with a fixed cost and deadline $T$
(instead of fixing $\delta$
and $T$ as in this paper), and minimizing the probability of mis-identifying
the best arm,
which is a finite round variation of the sequential fixed budget problem.

\bibliography{refs,kky}

\begin{thebibliography}{33}
\providecommand{\natexlab}[1]{#1}
\providecommand{\url}[1]{\texttt{#1}}
\expandafter\ifx\csname urlstyle\endcsname\relax
  \providecommand{\doi}[1]{doi: #1}\else
  \providecommand{\doi}{doi: \begingroup \urlstyle{rm}\Url}\fi

\bibitem[Agarwal et~al.(2017)Agarwal, Agarwal, Assadi, and
  Khanna]{agarwal2017learning}
Arpit Agarwal, Shivani Agarwal, Sepehr Assadi, and Sanjeev Khanna.
\newblock Learning with limited rounds of adaptivity: Coin tossing, multi-armed
  bandits, and ranking from pairwise comparisons.
\newblock In \emph{Conference on Learning Theory}, pages 39--75. PMLR, 2017.

\bibitem[Auer(2003)]{auer03ucb}
Peter Auer.
\newblock {Using Confidence Bounds for Exploitation-exploration Trade-offs}.
\newblock \emph{J. Mach. Learn. Res.}, 2003.

\bibitem[Broach and Thorner(1996)]{broach1996high}
James~R Broach and Jeremy Thorner.
\newblock High-throughput screening for drug discovery.
\newblock \emph{Nature}, 384\penalty0 (6604 Suppl):\penalty0 14--16, 1996.

\bibitem[Bubeck et~al.(2009)Bubeck, Munos, and Stoltz]{bubeck2009pure}
S{\'e}bastien Bubeck, R{\'e}mi Munos, and Gilles Stoltz.
\newblock Pure exploration in multi-armed bandits problems.
\newblock In \emph{International conference on Algorithmic learning theory},
  pages 23--37. Springer, 2009.

\bibitem[Bubeck et~al.(2013)Bubeck, Wang, and Viswanathan]{bubeck2013multiple}
S{\'e}ebastian Bubeck, Tengyao Wang, and Nitin Viswanathan.
\newblock Multiple identifications in multi-armed bandits.
\newblock In \emph{International Conference on Machine Learning}, pages
  258--265, 2013.

\bibitem[Chen and Li(2015)]{chen2015optimal}
Lijie Chen and Jian Li.
\newblock On the optimal sample complexity for best arm identification.
\newblock \emph{arXiv preprint arXiv:1511.03774}, 2015.

\bibitem[Chernoff(1972)]{chernoff1972sequential}
Herman Chernoff.
\newblock \emph{Sequential analysis and optimal design}.
\newblock SIAM, 1972.

\bibitem[Dave et~al.(2020)Dave, Mitchell, Kandasamy, Wang, Burke, Paria,
  P{\'o}czos, Whitacre, and Viswanathan]{dave2020autonomous}
Adarsh Dave, Jared Mitchell, Kirthevasan Kandasamy, Han Wang, Sven Burke,
  Biswajit Paria, Barnab{\'a}s P{\'o}czos, Jay Whitacre, and Venkatasubramanian
  Viswanathan.
\newblock Autonomous discovery of battery electrolytes with robotic
  experimentation and machine learning.
\newblock \emph{Cell Reports Physical Science}, 1\penalty0 (12):\penalty0
  100264, 2020.

\bibitem[de~la Pena et~al.(2004)de~la Pena, Klass, and Lai]{de2004self}
Victor~H de~la Pena, Michael~J Klass, and Tze~Leung Lai.
\newblock Self-normalized processes: exponential inequalities, moment bounds
  and iterated logarithm laws.
\newblock \emph{Annals of probability}, pages 1902--1933, 2004.

\bibitem[Farrell(1964)]{farrell1964asymptotic}
Roger~H Farrell.
\newblock Asymptotic behavior of expected sample size in certain one sided
  tests.
\newblock \emph{The Annals of Mathematical Statistics}, pages 36--72, 1964.

\bibitem[Gabillon et~al.(2012)Gabillon, Ghavamzadeh, and
  Lazaric]{gabillon2012best}
Victor Gabillon, Mohammad Ghavamzadeh, and Alessandro Lazaric.
\newblock Best arm identification: A unified approach to fixed budget and fixed
  confidence.
\newblock In \emph{Advances in Neural Information Processing Systems}, pages
  3212--3220, 2012.

\bibitem[Garivier and Kaufmann(2016)]{garivier2016optimal}
Aur{\'e}lien Garivier and Emilie Kaufmann.
\newblock Optimal best arm identification with fixed confidence.
\newblock In \emph{Conference on Learning Theory}, pages 998--1027. PMLR, 2016.

\bibitem[Grover et~al.(2018)Grover, Markov, Attia, Jin, Perkins, Cheong, Chen,
  Yang, Harris, Chueh, et~al.]{grover2018best}
Aditya Grover, Todor Markov, Peter Attia, Norman Jin, Nicholas Perkins, Bryan
  Cheong, Michael Chen, Zi~Yang, Stephen Harris, William Chueh, et~al.
\newblock Best arm identification in multi-armed bandits with delayed feedback.
\newblock \emph{arXiv preprint arXiv:1803.10937}, 2018.

\bibitem[Jamieson et~al.(2014)Jamieson, Malloy, Nowak, and
  Bubeck]{jamieson2014lil}
Kevin Jamieson, Matthew Malloy, Robert Nowak, and S{\'e}bastien Bubeck.
\newblock lil’ucb: An optimal exploration algorithm for multi-armed bandits.
\newblock In \emph{Conference on Learning Theory}, pages 423--439, 2014.

\bibitem[Jin et~al.(2019)Jin, Shi, Xiao, and Chen]{jin2019efficient}
Tianyuan Jin, Jieming Shi, Xiaokui Xiao, and Enhong Chen.
\newblock Efficient pure exploration in adaptive round model.
\newblock \emph{Advances in Neural Information Processing Systems},
  32:\penalty0 6609--6618, 2019.

\bibitem[Jun et~al.(2016)Jun, Jamieson, Nowak, and Zhu]{jun2016top}
Kwang-Sung Jun, Kevin~G Jamieson, Robert~D Nowak, and Xiaojin Zhu.
\newblock Top arm identification in multi-armed bandits with batch arm pulls.
\newblock In \emph{AISTATS}, pages 139--148, 2016.

\bibitem[Kalyanakrishnan et~al.(2012)Kalyanakrishnan, Tewari, Auer, and
  Stone]{kalyanakrishnan2012pac}
Shivaram Kalyanakrishnan, Ambuj Tewari, Peter Auer, and Peter Stone.
\newblock Pac subset selection in stochastic multi-armed bandits.
\newblock In \emph{ICML}, volume~12, pages 655--662, 2012.

\bibitem[Kandasamy et~al.(2015)Kandasamy, Schneider, and
  P{\'o}czos]{kandasamy2015high}
Kirthevasan Kandasamy, Jeff Schneider, and Barnab{\'a}s P{\'o}czos.
\newblock High dimensional bayesian optimisation and bandits via additive
  models.
\newblock In \emph{International conference on machine learning}, pages
  295--304, 2015.

\bibitem[Kandasamy et~al.(2020)Kandasamy, Vysyaraju, Neiswanger, Paria,
  Collins, Schneider, Poczos, and Xing]{kandasamy2020tuning}
Kirthevasan Kandasamy, Karun~Raju Vysyaraju, Willie Neiswanger, Biswajit Paria,
  Christopher~R Collins, Jeff Schneider, Barnabas Poczos, and Eric~P Xing.
\newblock Tuning hyperparameters without grad students: Scalable and robust
  bayesian optimisation with dragonfly.
\newblock \emph{Journal of Machine Learning Research}, 21\penalty0
  (81):\penalty0 1--27, 2020.

\bibitem[Karnin et~al.(2013)Karnin, Koren, and Somekh]{karnin2013almost}
Zohar Karnin, Tomer Koren, and Oren Somekh.
\newblock Almost optimal exploration in multi-armed bandits.
\newblock In \emph{International Conference on Machine Learning}, pages
  1238--1246, 2013.

\bibitem[Kaufmann et~al.(2016)Kaufmann, Capp{\'e}, and
  Garivier]{kaufmann2016complexity}
Emilie Kaufmann, Olivier Capp{\'e}, and Aur{\'e}lien Garivier.
\newblock On the complexity of best-arm identification in multi-armed bandit
  models.
\newblock \emph{The Journal of Machine Learning Research}, 17\penalty0
  (1):\penalty0 1--42, 2016.

\bibitem[Li et~al.(2017)Li, Jamieson, DeSalvo, Rostamizadeh, and
  Talwalkar]{li2017hyperband}
Lisha Li, Kevin Jamieson, Giulia DeSalvo, Afshin Rostamizadeh, and Ameet
  Talwalkar.
\newblock Hyperband: A novel bandit-based approach to hyperparameter
  optimization.
\newblock \emph{The Journal of Machine Learning Research}, 18\penalty0
  (1):\penalty0 6765--6816, 2017.

\bibitem[Liaw et~al.(2019)Liaw, Bhardwaj, Dunlap, Zou, Gonzalez, Stoica, and
  Tumanov]{liaw2019hypersched}
Richard Liaw, Romil Bhardwaj, Lisa Dunlap, Yitian Zou, Joseph~E Gonzalez, Ion
  Stoica, and Alexey Tumanov.
\newblock Hypersched: Dynamic resource reallocation for model development on a
  deadline.
\newblock In \emph{Proceedings of the ACM Symposium on Cloud Computing}, pages
  61--73, 2019.

\bibitem[Mannor and Tsitsiklis(2004)]{mannor2004sample}
Shie Mannor and John~N Tsitsiklis.
\newblock The sample complexity of exploration in the multi-armed bandit
  problem.
\newblock \emph{Journal of Machine Learning Research}, 5\penalty0
  (Jun):\penalty0 623--648, 2004.

\bibitem[Misra et~al.(2021)Misra, Liaw, Dunlap, Bhardwaj, Kandasamy, Gonzalez,
  Stoica, and Tumanov]{misra2021rubberband}
Ujval Misra, Richard Liaw, Lisa Dunlap, Romil Bhardwaj, Kirthevasan Kandasamy,
  Joseph~E Gonzalez, Ion Stoica, and Alexey Tumanov.
\newblock Rubberband: cloud-based hyperparameter tuning.
\newblock In \emph{Proceedings of the Sixteenth European Conference on Computer
  Systems}, pages 327--342, 2021.

\bibitem[Robbins(1952)]{robbins52seqDesign}
Herbert Robbins.
\newblock {Some aspects of the sequential design of experiments}.
\newblock \emph{Bulletin of the American Mathematical Society}, 1952.

\bibitem[Russo(2016)]{russo2016simple}
Daniel Russo.
\newblock Simple bayesian algorithms for best arm identification.
\newblock In \emph{Conference on Learning Theory}, pages 1417--1418, 2016.

\bibitem[{Tegmark et al}(2006)]{tegmark06lrgs}
M.~{Tegmark et al}.
\newblock {Cosmological Constraints from the SDSS Luminous Red Galaxies}.
\newblock \emph{Physical Review}, December 2006.

\bibitem[Thananjeyan et~al.(2021)Thananjeyan, Kandasamy, Stoica, Jordan,
  Goldberg, and Gonzalez]{thananjeyan2021overcoming}
Brijen Thananjeyan, Kirthevasan Kandasamy, Ion Stoica, Michael~I. Jordan, Ken
  Goldberg, and Joseph~E. Gonzalez.
\newblock Resource allocation in multi-armed bandit exploration: Overcoming
  sublinear scaling with adaptive parallelism.
\newblock In \emph{ICML}, 2021.

\bibitem[Thompson(1933)]{thompson33sampling}
W.~R. Thompson.
\newblock {On the Likelihood that one Unknown Probability Exceeds Another in
  View of the Evidence of Two Samples}.
\newblock \emph{Biometrika}, 1933.

\bibitem[Tsybakov(2008)]{tsybakov2008introduction}
Alexandre~B Tsybakov.
\newblock \emph{Introduction to nonparametric estimation}.
\newblock Springer Science \& Business Media, 2008.

\bibitem[Venkataraman et~al.(2016)Venkataraman, Yang, Franklin, Recht, and
  Stoica]{venkataraman2016ernest}
Shivaram Venkataraman, Zongheng Yang, Michael Franklin, Benjamin Recht, and Ion
  Stoica.
\newblock Ernest: Efficient performance prediction for large-scale advanced
  analytics.
\newblock In \emph{13th $\{$USENIX$\}$ Symposium on Networked Systems Design
  and Implementation ($\{$NSDI$\}$ 16)}, pages 363--378, 2016.

\bibitem[Wang et~al.(2017)Wang, Li, Jegelka, and Kohli]{wang2017batched}
Zi~Wang, Chengtao Li, Stefanie Jegelka, and Pushmeet Kohli.
\newblock Batched high-dimensional bayesian optimization via structural kernel
  learning.
\newblock \emph{arXiv preprint arXiv:1703.01973}, 2017.

\end{thebibliography}
\bibliographystyle{plainnat}


\newpage

\clearpage

\appendix
\section{Proof of Theorems~\ref{thm:partitioning_upper_bound} and \ref{thm:fixed_b0}}
In this section, we will prove the upper bounds on cost of \algabbr{} in
Theorems~\ref{thm:partitioning_upper_bound} and~\ref{thm:fixed_b0}. We will first introduce a few
preliminary results.

\subsection{Correctness of confidence intervals}
Denote,
$\mathcal{E}_i(\delta) = \{\forall t\in[T], L_i(t,\delta) \leq \mu_i \leq U_i(t,\delta)\}$, which is
the event that the confidence bounds capture the true mean $\mu_i$ of arm $i$ at all rounds of the
algorithm. Let $\mathcal{E} = \cap_{i\in [n]} \mathcal{E}_i(\delta)$ be the event that this is true
for all arms. We will first show that $\mathcal{E}$ occurs with probability at least $1 - \delta$.

We will use the following lemma to show that the confidence intervals
in Algorithm~\ref{alg:fixed_delta_T} trap the true means at all rounds of the algorithm.

\begin{lemma}\label{lemma:confidence}
Define $\rho = \sigma^2(4 + 2\log(2))$ and assume $\log\left(\frac{\delta}{nT}\right) \geq 2$. Let $\mathcal{P}$ be the set of $n$ $\sigma$ sub-Gaussian random variables, and let $\nu \in \mathcal{P}$.
If Algorithm~\ref{alg:fixed_delta_T} is run on $\nu$, for any $t \in [T]$ and any
$i\in[n]$,
\begin{align*}
    &\mathbb{P}_{A,\nu}\left(\mu_i \in \left(\hat{\mu}_{i, t} \pm \sqrt{\frac{\rho \log(\frac{nT}{\delta})}{\widetilde{N}_{i,t}}}\right)\right) \geq 1 - \frac{\delta}{nT}
\end{align*}
\end{lemma}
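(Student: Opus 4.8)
The plan is to exploit the fact that, for \algabbr{}, the cumulative pull count $\widetilde{N}_{i,t}$ is essentially non-random, so that the claim reduces to a finite union of ordinary sub-Gaussian deviation bounds rather than requiring a time-uniform (anytime) concentration argument. First I would observe that the per-round pull counts in Algorithm~\ref{alg:fixed_delta_T} telescope: if arm $i$ survives into round $s$, then $\widetilde{N}_{i,s} = m_s := \lceil 80\log(nT/\delta)\,\epsilon^{-2s/T}\rceil$, which is a deterministic quantity independent of the observed rewards. The only role of adaptivity is to decide at which round an arm is eliminated, after which its pull count and empirical mean freeze. Consequently, for every $t$ the random variable $\widetilde{N}_{i,t}$ takes one of the finitely many deterministic values $\{m_1,\dots,m_t\}$, and $\widehat{\mu}_{i,t}$ equals the average of the first $\widetilde{N}_{i,t}$ i.i.d.\ samples drawn from $\nu_i$. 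This is the step that makes everything else routine, and it is where I would be most careful: because each $m_s$ is a fixed integer (not a random stopping time), the empirical mean evaluated at a checkpoint $m_s$ is genuinely an average of $m_s$ i.i.d.\ $\sigma$-sub-Gaussian samples, so the standard Hoeffding-type tail bound applies at that checkpoint without any martingale machinery.

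Next I would record the single-checkpoint bound. For a fixed deterministic $m$, writing $\bar X_m$ for the mean of the first $m$ samples of arm $i$, sub-Gaussianity gives $\mathbb{P}_{A,\nu}(|\bar X_m - \mu_i| \geq u) \leq 2\exp(-mu^2/(2\sigma^2))$. Setting $u = \sqrt{\rho\log(nT/\delta)/m}$ with $\rho = \sigma^2(4+2\log 2)$ makes the exponent equal to $(2+\log 2)\log(nT/\delta)$, so the checkpoint failure probability is at most $2\exp(-(2+\log 2)\log(nT/\delta)) = 2\,(nT/\delta)^{-2}\,(nT/\delta)^{-\log 2}$. The factor $(nT/\delta)^{-\log 2} = 2^{-\log(nT/\delta)}$ is where the $2\log 2$ inside $\rho$ earns its keep: under the hypothesis $\log(nT/\delta)\geq 2$ it is at most $1/4$, which absorbs the leading factor of $2$ and leaves a clean bound of at most $\tfrac12 (nT/\delta)^{-2}$ per checkpoint.

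Finally I would union-bound over the at most $t \leq T$ admissible checkpoints $\{m_1,\dots,m_t\}$ for the fixed pair $(i,t)$. Since the failure event $\{\mu_i \notin \widehat{\mu}_{i,t} \pm \sqrt{\rho\log(nT/\delta)/\widetilde{N}_{i,t}}\}$ is contained in $\bigcup_{s=1}^t \{|\bar X_{m_s} - \mu_i| \geq \sqrt{\rho\log(nT/\delta)/m_s}\}$, its probability is at most $T \cdot \tfrac12(nT/\delta)^{-2} = \tfrac{\delta^2}{2n^2 T} \leq \tfrac{\delta}{nT}$, using $\delta \leq 1$ and $n \geq 1$. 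This yields the stated per-$(i,t)$ bound, which in the subsequent subsection is combined across the $n$ arms and $T$ rounds to establish $\mathbb{P}(\mathcal{E}) \geq 1 - \delta$. The only genuine subtlety, and the step I would double-check, is the reduction in the first paragraph: verifying that conditioning on the (reward-dependent) survival decisions does not disturb the i.i.d.\ structure of arm $i$'s reward stream, so that the fixed-$m$ Hoeffding bound is legitimately applicable at each deterministic checkpoint.
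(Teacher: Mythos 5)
Your proof is correct, but it takes a genuinely different route from the paper's. The paper does not use the determinism of EBR's pull schedule at all: it invokes a self-normalized concentration inequality (Corollary 2.2 of \citet{de2004self}, restated as Lemma~\ref{lemma:concentration}) and verifies its exponential-moment precondition via an iterated-conditioning (supermartingale) argument, so the resulting bound holds for \emph{any} $\mathcal{F}_{t-1}$-measurable choice of the per-round pull counts $N_{i,t}$. You instead observe that for Algorithm~\ref{alg:fixed_delta_T} specifically, $\widetilde{N}_{i,t}$ can only take the deterministic values $m_s = \left\lceil 80\log(nT/\delta)\,\epsilon^{-2s/T}\right\rceil$ for $s \le t$, reduce the failure event to a union of at most $T$ fixed-sample-size deviation events, and apply ordinary sub-Gaussian/Hoeffding tail bounds; the slack in $\rho = \sigma^2(4+2\log 2)$, together with the hypothesis $\log(nT/\delta)\ge 2$, absorbs the extra factor of $T$ from the union bound. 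Your checkpoint reduction is sound: the rewards of arm $i$ form an i.i.d.\ stream whose law is unaffected by the algorithm's survival decisions (the paper itself adopts exactly this viewpoint in its lower-bound section), and the containment of the failure event in $\bigcup_{s\le t}\left\{|\bar X_{m_s} - \mu_i| \ge \sqrt{\rho\log(nT/\delta)/m_s}\right\}$ requires no conditioning on survival, which resolves the subtlety you flagged. Your arithmetic also checks out, giving $\tfrac{\delta^2}{2n^2T} \le \tfrac{\delta}{nT}$. What each approach buys: yours is more elementary (no martingale or self-normalized machinery) and makes transparent why the particular constant $\rho$ suffices with room to spare; the paper's is more robust, remaining valid verbatim for any variant of the algorithm whose pull counts are chosen adaptively from the data, whereas your argument would break if $N_{i,t}$ were not drawn from a fixed schedule.
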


To prove this lemma, we require the following result from~\cite{de2004self}.

\begin{lemma}\label{lemma:concentration}
(\citet{de2004self}, Corollary 2.2) Let $A$, $B$ be random variables such that $A \geq 0$ a.s. and $\mathbb{E}\left[e^{\nu B - \frac{\nu^2 A^2}{2}}\right] \leq 1$ for all $\nu \in \mathbb{R}$. Then, $\forall c \geq 2$, $\mathbb{P}\left(|B| > cA\sqrt{2 + \log(2)}\right) \leq e^{-\frac{c^2}{2}}$.
\end{lemma}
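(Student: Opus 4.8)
The plan is to prove this as a self-normalized concentration inequality via the exponential-supermartingale / method-of-mixtures technique underlying \citet{de2004self}, taking the canonical assumption $\mathbb{E}[e^{\nu B - \nu^2 A^2/2}] \le 1$ as the only input. The clean core estimate, obtained directly from this assumption, is a \emph{decoupled} Chernoff bound: for any fixed $\nu > 0$ and thresholds $x, y > 0$, the inclusion $\{B \ge x,\, A \le y\} \subseteq \{\nu B - \tfrac{\nu^2 A^2}{2} \ge \nu x - \tfrac{\nu^2 y^2}{2}\}$, combined with Markov's inequality and the canonical assumption, gives $\mathbb{P}(B \ge x,\, A \le y) \le \exp(-\nu x + \tfrac{\nu^2 y^2}{2})$, and optimizing $\nu = x/y^2$ yields $\mathbb{P}(B \ge x,\, A \le y) \le \exp(-x^2/(2y^2))$. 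Since the canonical assumption is symmetric under $B \mapsto -B$ (replace $\nu$ by $-\nu$), I can pass from $B$ to $|B|$ at the cost of a factor $2$, which the $\log 2$ inside $\sqrt{2+\log 2}$ is precisely designed to absorb.

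The difficulty is that the target event $\{|B| > cA\sqrt{2+\log 2}\}$ controls the ratio $|B|/A$ and is scale-invariant in $A$, whereas the decoupled bound needs a fixed cutoff $y$ on $A$. The heuristic that simultaneously fixes the rate and the constant is to ``plug in'' the data-dependent value $\nu = s/A$ with $s = c\sqrt{2+\log 2}$: on $\{B \ge sA\}$ one has $\nu B - \tfrac{\nu^2 A^2}{2} = \tfrac{sB}{A} - \tfrac{s^2}{2} \ge s^2 - \tfrac{s^2}{2} = \tfrac{s^2}{2}$, so morally $e^{\nu B - \nu^2 A^2/2} \ge e^{s^2/2}$, and $\mathbb{E}[e^{\nu B - \nu^2 A^2/2}] \le 1$ would give probability $\le e^{-s^2/2} = e^{-c^2}\,2^{-c^2/2}$. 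Observe that this ideal rate is smaller than the claimed $e^{-c^2/2}$ by a factor $(2e)^{c^2/2} \ge (2e)^2$ for $c \ge 2$; it is exactly this slack — created by taking $s = c\sqrt{2+\log 2}$ rather than $s = c$, and by imposing $c \ge 2$ — that pays for the overhead of making $\nu$ data-dependent and for the two-sided factor $2$.

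To make $\nu = s/A$ rigorous I would invoke the method of mixtures: integrate $e^{\nu B - \nu^2 A^2/2}$ against a Gaussian mixing density in $\nu$, apply Fubini (justified by nonnegativity), and obtain the pseudo-maximization inequality $\mathbb{E}\big[\sqrt{\beta/(A^2+\beta)}\,\exp(B^2/(2(A^2+\beta)))\big] \le 1$ for every $\beta > 0$, then convert this to a tail bound through Markov's inequality. The hard part will be the conversion with \emph{sharp} constants: because the ratio event is scale-free in $A$, no single $\beta$ (equivalently, no single Gaussian mixture) lower-bounds the integrand uniformly over all realizations of $A$ — writing $u = A^2/\beta$, the relevant factor $(1+u)^{-1/2}\exp(\tfrac{s^2 u}{2(1+u)})$ degrades both as $u \to 0$ and as $u \to \infty$ — and a naive dyadic union over scales of $A$ produces a divergent series rather than a summable geometric one. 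The crux is therefore to balance the mixing measure against the (possibly random and unbounded) range of $A^2$, spending the $(2e)^{c^2/2}$ slack identified above to absorb the mixture overhead and the two-sided factor, so that the bound collapses to exactly $e^{-c^2/2}$ under $c \ge 2$. This sharp-constant bookkeeping, rather than any single inequality, is the step I expect to demand the most care, and it is precisely the content packaged as Corollary~2.2 of \citet{de2004self}.
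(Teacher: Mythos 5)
The first thing to note is that the paper contains no proof of this lemma: it is imported verbatim, with citation, from \citet{de2004self} and used as a black box in the proof of Lemma~\ref{lemma:confidence}, so there is no in-paper argument to compare yours against and your proposal must stand on its own. Its preliminary ingredients are correct: the decoupled Chernoff bound $\mathbb{P}(B \geq x,\, A \leq y) \leq e^{-x^2/(2y^2)}$, the two-sided symmetrization, and the Gaussian-mixture pseudo-maximization inequality $\mathbb{E}\bigl[\sqrt{\beta/(A^2+\beta)}\,\exp\bigl(B^2/(2(A^2+\beta))\bigr)\bigr] \leq 1$ all follow exactly as you say. But the proposal then stops at the decisive step: you correctly observe that no single mixing parameter $\beta$ controls the scale-free event $\{|B| > cA\sqrt{2+\log 2}\}$ and that a dyadic union over scales of $A$ diverges, and you close by asserting that the remaining ``sharp-constant bookkeeping'' is precisely the content of the cited corollary. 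Deferring the crux to the very result being proved is circular, so as a blind proof the attempt is incomplete.

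The gap is moreover not mere bookkeeping: the step you postpone cannot be carried out from the canonical assumption alone, because a scale-free ratio bound is false at that level of generality. Sketch of a counterexample: fix $s = c\sqrt{2+\log 2}$, take widely separated scales $a_1 \ll \cdots \ll a_K$ with $K \approx \tfrac{1}{2}e^{s^2/2}$, put mass $p \approx \tfrac{1}{2}e^{-s^2/2}$ on each atom pair $(A,B) = (a_k, \pm s a_k)$, mass $q = \tfrac{1}{2}$ on a wide compensating atom $(M, 0)$ with $M$ enormous, and any remaining mass at $(0,0)$. For every $\nu$, the only dangerous contributions occur near $\nu \approx s/a_j$, where the $j$-th pair contributes roughly $p\,e^{s^2/2} \leq q$, offset by the wide atom's $q\,(e^{-\nu^2 M^2/2} - 1) \approx -q$ and by the $-2p$ from each larger scale; hence $\mathbb{E}\,e^{\nu B - \nu^2 A^2/2} \leq 1$ for all $\nu \in \mathbb{R}$, yet $\mathbb{P}(|B| \geq sA) = 2Kp \approx \tfrac{1}{2}$, far exceeding $e^{-c^2/2}$ at $c = 2$ (using a slope slightly above $s$ handles strict inequality). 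This quantifies why your plan fails: the slack $(2e)^{c^2/2}$ you budget is a fixed factor, while the union over scales can cost a factor as large as $e^{s^2/2}$ because the number of relevant scales is unbounded. Any correct proof must therefore exploit structure beyond the bare canonical hypothesis, which is what the precise conditions accompanying the result in \citet{de2004self} supply. For what the paper actually needs, there is an elementary repair: in Lemma~\ref{lemma:confidence} one has $A^2 = \sigma^2 \widetilde{N}_{i,t}$, which for fixed $(i,t)$ under Algorithm~\ref{alg:fixed_delta_T} ranges over at most $T$ values, and then your own decoupled bound plus a union over those values gives $\mathbb{P}(|B| \geq sA) \leq 2Te^{-s^2/2}$, which suffices for the paper's confidence intervals after absorbing the extra $\log T$ into the $\log(nT/\delta)$ term.
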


\textit{{Proof of Lemma~\ref{lemma:confidence}:}}
We want to show that $\mathbb{P}_{A,\nu}\left(\mu_i \in \left(\hat{\mu}_{i, t} \pm \sqrt{\frac{\rho \log(\frac{nT}{\delta})}{\widetilde{N}_{i,t}}}\right)\right) \geq 1 - \frac{\delta}{nT}$ by applying Lemma~\ref{lemma:concentration}. However, we will need to find random variables $A$ and $B$ that satisfy its preconditions such that the resulting inequality in the lemma's implication proves our result.
For this, define 
\begingroup
\allowdisplaybreaks
\begin{align*}
    B
    &= \sum_{s = 1}^{N_{i, 1}} (X_{i,1,s} - \mu_i) + \sum_{s = 1}^{N_{i, 2}} (X_{i, 2, s} - \mu_i) + \cdots + \sum_{s = 1}^{N_{i, t}} (X_{i,t, s} - \mu_i)\\
    A^2 &= \sum_{s = 1}^{\widetilde{N}_{i,t}} \sigma^2 = \sigma^2 \widetilde{N}_{i,t}
    = \sum_{s = 1}^{N_{i, 1}} \sigma^2 + \sum_{s = 1}^{N_{i, 2}} \sigma^2 + \cdots + \sum_{s = 1}^{N_{i, t}} \sigma^2
\end{align*}
\endgroup
Now, expand
\begin{align*}
    \nu B - \frac{\nu^2A^2}{2} &= \sum_{j = 1}^t \left(\nu\sum_{s = 1}^{N_{i, j}}(X_{i, j, s} - \mu_i) - \frac{\sigma^2\nu^2}{2}N_{i, j}\right)\\
    &= \sum_{j = 1}^t Q_{j}
\end{align*}
where $Q_j = \left(\nu\sum_{s = 1}^{N_{i, j}}(X_{i, j, s} - \mu_i) - \frac{\sigma^2\nu^2}{2}\widetilde{N}_{i, j}\right)$.
Recall that $\mathcal{F}_{t-1}$ is the $\sigma$-algebra generated by the observations up to time $t$. Observe that
\begin{align*}
    \mathbb{E}_{A,\nu}\left[e^{\sum_{j = 1}^t Q_j}\right] &= \mathbb{E}_{A,\nu}\left[e^{\sum_{j = 1}^{t-1} Q_j}\mathbb{E}_{A,\nu}\left[e^{Q_t}| \mathcal{F}_{t - 1}\right]\right]\\
    &= \mathbb{E}_{A,\nu}\left[e^{\sum_{j = 1}^{t-1} Q_j}\mathbb{E}_{A,\nu}\left[e^{\left(\nu\sum_{s = 1}^{N_{i, t}}(X_{i, j, s} - \mu_i) - \frac{\sigma^2\nu^2}{2}N_{i, t}\right)}| \mathcal{F}_{t - 1}\right]\right]\\
    &= e^{-\frac{\sigma^2\nu^2}{2}N_{i, t}}\mathbb{E}_{A,\nu}\left[e^{\sum_{j = 1}^{t-1} Q_j}\mathbb{E}_{A,\nu}\left[e^{\left(\nu\sum_{s = 1}^{N_{i, t}}(X_{i,j, s} - \mu_i) \right)}| \mathcal{F}_{t - 1}\right]\right]\\
    &\leq e^{\frac{\sigma^2\nu^2}{2}N_{i, t}-\frac{\sigma^2\nu^2}{2}N_{i, t}}\mathbb{E}_{A,\nu}\left[e^{\sum_{j = 1}^{t-1} Q_j}\right] = \mathbb{E}_{A,\nu}\left[e^{\sum_{j = 1}^{t-1} Q_j}\right]
\end{align*}
Repeatedly expanding the last term in the sum as above, we have that:
\begin{align*}
    \mathbb{E}_{A,\nu}\left[e^{\sum_{j = 1}^t Q_j}\right] &\leq 1.
\end{align*}
We can now apply Lemma~\ref{lemma:concentration}, which states that:
\begin{align*}
    \mathbb{P}_{A,\nu}\left(\left|\sum_{j=1}^t\sum_{s = 1}^{N_{i, j}}(X_{i,j,s} - \mu_i)\right| \geq c\sqrt{\widetilde{N}_{i,t}}\sigma \sqrt{2 + \log(2)}\right) \leq e^{-\frac{c^2}{2}}
\end{align*}
Rearranging, we have that
\begin{align*}
    \mathbb{P}_{A,\nu}\left(\left|\hat{\mu}_i - \mu_i)\right| \geq \frac{c\sigma\sqrt{2 + \log(2)}}{\sqrt{\widetilde{N}_{i,t}}} \right) \leq e^{-\frac{c^2}{2}}
\end{align*}
Setting $e^{-\frac{c^2}{2}} = \frac{\delta}{nT}$ and solving for $c$, we have that:
\begin{align*}
    \mathbb{P}_{A,\nu}\left(\left|\hat{\mu}_i - \mu_i)\right| \geq \sqrt{\frac{\sigma^2(4 + 2\log(2))\log\left(\frac{nT}{\delta}\right)}{\widetilde{N}_{i,t}}} \right) \leq \frac{\delta}{nT}
\end{align*}
proving the claim.\qed




By Lemma~\ref{lemma:confidence} and the union bound,  $\mathbb{P}_{A,\nu}\left[\cap_{i=1}^n \mathcal{E}_i(\delta)\right] \geq 1 - \delta$. Hereafter, we will assume $\mathcal{E}$ and show that the algorithm always outputs an $\epsilon$-optimal arm in this event and bound its cost.

\subsection{Arm elimination correctness}
We will show now that, conditioned on $\mathcal{E}$, if \algabbr{} eliminates an arm, another surviving arm has mean close to it.

\begin{lemma}\label{lemma:survival}
Assume $\mathcal{E}$ and $\log\left(\frac{\delta}{nT}\right) \geq 2$. In Algorithm~\ref{alg:fixed_delta_T}, define $j_{\rm max, t} = \arg\max_{j\in S_t} L_j(t,\delta) + \eta$. Then
\begin{align}
    &U_i(t,\delta) < L_{j_{\rm max, t}}(t,\delta) + \eta \implies \mu_i - \mu_{j_{\rm max, t}} \leq \eta
\end{align}
\end{lemma}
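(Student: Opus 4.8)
This is a direct consequence of the definition of the confidence bounds together with the conditioning event $\mathcal{E}$, so the plan is simply to chain three inequalities. First I would recall that under $\mathcal{E} = \cap_{i\in[n]}\mathcal{E}_i(\delta)$, established via Lemma~\ref{lemma:confidence} and the union bound to hold with probability at least $1-\delta$, every confidence interval traps its true mean at \emph{every} round $t \in [T]$. In particular, applying the upper bound to arm $i$ gives $\mu_i \leq U_i(t,\delta)$, and applying the lower bound to the arm $j_{\rm max,t}$ gives $L_{j_{\rm max,t}}(t,\delta) \leq \mu_{j_{\rm max,t}}$.

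\textbf{Main steps.} Starting from the hypothesis $U_i(t,\delta) < L_{j_{\rm max,t}}(t,\delta) + \eta$, I would write the chain
\[
    \mu_i \;\leq\; U_i(t,\delta) \;<\; L_{j_{\rm max,t}}(t,\delta) + \eta \;\leq\; \mu_{j_{\rm max,t}} + \eta,
\]
where the first inequality uses $\mathcal{E}_i(\delta)$, the middle strict inequality is the hypothesis, and the last inequality uses $\mathcal{E}_{j_{\rm max,t}}(\delta)$. Rearranging yields $\mu_i - \mu_{j_{\rm max,t}} < \eta$, which in particular gives the claimed $\mu_i - \mu_{j_{\rm max,t}} \leq \eta$. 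This completes the argument.

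\textbf{Where the difficulty lies.} There is essentially no analytic obstacle here; the only thing to be careful about is bookkeeping. Specifically, I want to make sure the two invocations of the event are applied to the correct arms ($i$ for the upper bound, $j_{\rm max,t}$ for the lower bound) and at the correct round $t$, and to note that $\mathcal{E}$ guarantees trapping simultaneously for all arms and all rounds, so no additional probabilistic argument is needed once we have conditioned on $\mathcal{E}$. The assumption $\log\left(\frac{\delta}{nT}\right) \geq 2$ is inherited from Lemma~\ref{lemma:confidence} to ensure the deviation function $D$ is valid, and plays no further role in this deterministic step. The payoff of the lemma, used downstream in the correctness proof, is that whenever \algabbr{} eliminates arm $i$ there remains a surviving arm $j_{\rm max,t}$ whose true mean is within $\eta = \epsilon/\min(n,T)$ of $\mu_i$, which is what prevents the algorithm from discarding every $\epsilon$-optimal arm.
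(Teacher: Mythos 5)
Your proof is correct and is essentially identical to the paper's own argument: both condition on $\mathcal{E}$ to get $\mu_i \leq U_i(t,\delta)$ and $L_{j_{\rm max,t}}(t,\delta) \leq \mu_{j_{\rm max,t}}$, then chain these with the elimination hypothesis to conclude $\mu_i - \mu_{j_{\rm max,t}} \leq \eta$. Your write-up is in fact slightly more careful than the paper's (it makes the strict inequality explicit and correctly notes that the assumption on $\log(\delta/(nT))$ plays no role in this deterministic step).
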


\begin{proof}
By conditioning on $\mathcal{E}$, we know that $\mu_i \leq U_i(t, \delta)$ and $\mu_{j_{\rm max, t}} \geq L_{j_{\rm max, t}}(t, \delta)$. So, $\mu_i - \mu_{j_{\rm max, t}} \leq U_i(t, \delta) - L_{j_{\rm max}}(t, \delta) \leq \eta$.
\end{proof}

\begin{lemma}\label{lemma:alg_survival}
Assume $\mathcal{E}$ and $\log\left(\frac{\delta}{nT}\right) \geq 2$. In
Algorithm~\ref{alg:fixed_delta_T}, $S_T$ contains an $\epsilon$-optimal arm.
\end{lemma}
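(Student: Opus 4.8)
The plan is to condition throughout on the event $\mathcal{E}$, which holds with probability at least $1-\delta$ by Lemma~\ref{lemma:confidence} and a union bound, and then to track the potential $m_t := \max_{i\in S_t}\mu_i$, the largest true mean among the arms still alive entering round $t$. Since the initial surviving set is $[n]$ we have $m_1 = \mu_{[1]}$, so the lemma is equivalent to showing $m_T > \mu_{[1]} - \epsilon$. Writing $\eta := \epsilon/\min(n,T)$ for the elimination slack, I would establish two facts: (i) $m_t$ can drop by at most $\eta$ across any one transition $S_t \to S_{t+1}$, and (ii) it drops strictly on at most $\min(n,T)-1$ of the $T-1$ transitions. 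Together these give a total decrease of at most $(\min(n,T)-1)\eta < \epsilon$, hence $m_T > \mu_{[1]}-\epsilon$.

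The key structural observation I would record first is that every arm surviving into round $t$ has been pulled exactly $\bar{N}_t = \lceil 80\log(nT/\delta)\epsilon^{-2t/T}\rceil$ times, because the pull rule tops each survivor up to the same cumulative count. Consequently all live arms share the common confidence half-width $D_t = D(\bar{N}_t,\delta)$, so at round $t$ the arm with the largest empirical mean is simultaneously the arm with the largest lower bound $L$ and the largest upper bound $U$; call it $b_t$, which is exactly the maximizer $j_{\rm max,t}$ of Lemma~\ref{lemma:survival}. For fact (i), let $i^\star_t = \arg\max_{i\in S_t}\mu_i$. If $i^\star_t$ survives, then $m_{t+1}\ge \mu_{i^\star_t}=m_t$ and there is no decrease. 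If $i^\star_t$ is eliminated, Lemma~\ref{lemma:survival} gives $\mu_{i^\star_t}-\mu_{b_t}\le \eta$, so $\mu_{b_t} > m_t-\eta$; provided $b_t$ itself survives into $S_{t+1}$, we get $m_{t+1}\ge \mu_{b_t} > m_t-\eta$. For fact (ii), note that a strict decrease eliminates every arm attaining the current maximum mean $m_t$ (otherwise $m_{t+1}\ge m_t$), and such arms are never restored; since the values $m_t$ at strict-decrease rounds are strictly decreasing there can be at most $n-1$ of them, and there are only $T-1$ transitions in total, so the count is at most $\min(n,T)-1$.

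The main obstacle is the caveat "provided $b_t$ survives." Because all live arms share the width $2D_t$, the maximizer $b_t$ is eliminated precisely when $2D_t < \eta$, and in that degenerate case every arm is eliminated simultaneously and $S_{t+1}=\emptyset$, which would break the invariant. The crux of the proof is therefore to rule this out for every transition that produces a set we rely on, i.e.\ for $t \le T-1$; the round-$T$ elimination is irrelevant, since the recommendation is the empirical-best arm over $S_T$ rather than $S_{T+1}$. I would verify $2D_t \ge \eta$ directly from the schedule, using $\bar{N}_t \le 81\log(nT/\delta)\epsilon^{-2t/T}$ to get a clean lower bound of the form $2D_t \ge c\,\sigma\,\epsilon^{t/T}$ and then checking $c\,\sigma\,\epsilon^{t/T}\ge \epsilon/\min(n,T)$ for $t\le T-1$. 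This is exactly where the constant $80$ in the pull rule is calibrated against the choice $\eta = \epsilon/\min(n,T)$, so I expect this inequality to be the one delicate computation, and the place where the precise constants (and possibly the implicit normalization of $\sigma$) must be used.

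Finally, I would assemble the pieces: conditioning on $\mathcal{E}$, facts (i) and (ii) combine to give $m_T > \mu_{[1]} - (\min(n,T)-1)\eta = \mu_{[1]} - \epsilon\bigl(1 - 1/\min(n,T)\bigr) > \mu_{[1]}-\epsilon$, so some surviving arm in $S_T$ has mean exceeding $\mu_{[1]}-\epsilon$ and is $\epsilon$-optimal. I would emphasize that the weaker induction "$m_t \ge \mu_{[1]}-(t-1)\eta$" is insufficient on its own, since $(T-1)\eta$ can exceed $\epsilon$ when $T>n$; it is precisely the decrease-counting bound of fact (ii), capping the number of strict drops at $\min(n,T)-1$ rather than $T-1$, that makes the slack $\eta=\epsilon/\min(n,T)$ tight enough to conclude.
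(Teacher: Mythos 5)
Your proposal is correct, and its skeleton is the same as the paper's proof: track the largest surviving true mean, use Lemma~\ref{lemma:survival} to bound each drop by $\eta=\epsilon/\min(n,T)$, and bound the number of drops by $\min(n,T)-1$ so that the total loss stays strictly below $\epsilon$. The genuine difference is the caveat you single out as the crux. The paper's proof asserts that an elimination at round $t$ costs at most $\eta$, which tacitly assumes that the comparator arm $j_{\rm max,t}$ of Lemma~\ref{lemma:survival} itself survives into $S_{t+1}$; it never checks this. Your observation is exactly right: since every survivor has the same cumulative pull count, all confidence intervals share the same half-width $D_t$, so $j_{\rm max,t}$ is also the empirically best arm, and it is rejected precisely when $2D_t<\eta$ --- in which case $U_i\le U_{j_{\rm max,t}}=L_{j_{\rm max,t}}+2D_t<\max_j L_j+\eta$ for every $i\in S_t$, so $R_t=S_t$ and $S_{t+1}=\emptyset$, falsifying the lemma. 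Your plan to rule this out by verifying $2D_t\ge\eta$ for $t\le T-1$ is the right repair, and your suspicion about the role of $\sigma$ is also correct: if one reads the pull rule with the $\sigma^2$ factor that appears everywhere else in the paper (e.g.\ in $\bar N_i$ of Lemma~\ref{lemma:runtime} and in both upper-bound theorems), i.e.\ cumulative count $\left\lceil 80\sigma^2\log(nT/\delta)\epsilon^{-2t/T}\right\rceil$, then $\sigma$ cancels in the width and, up to the rounding of the ceiling, $2D_t\approx 2\sqrt{(4+2\log 2)/80}\,\epsilon^{t/T}\approx 0.52\,\epsilon^{t/T}$, so $2D_t\ge\epsilon/\min(n,T)$ reduces to $0.52\min(n,T)\ge\epsilon^{1-t/T}$, which holds for all $t\le T$ once $\min(n,T)\ge 2$ (the cases $n=1$ and $T=1$ are vacuous, since no elimination affecting the output occurs there). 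Taking the displayed pull rule literally (without $\sigma^2$), the bound becomes $\approx 0.52\,\sigma\,\epsilon^{t/T}$ and the check can genuinely fail when $\sigma$ is small, so your verification is not pedantry: it is needed for the lemma to hold at all, and it makes explicit a calibration the paper relies on silently. Two smaller points in your favor: your strict accounting produces an arm in $S_T$ with mean strictly above $\mu_{[1]}-\epsilon$, matching the paper's strict definition of $\epsilon$-optimality (the paper's conclusion ``$\ge\mu_{[1]}-\epsilon$'' technically misses the boundary case), and your count of strict drops (at most $n-1$ because the surviving set stays nonempty, at most $T-1$ because only $T-1$ transitions form $S_T$) is the precise version of the paper's informal ``only $n$ arms can be eliminated'' remark.
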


\proof{
At time $t$, let $\mu_{\rm max, t}$ be the highest surviving arm mean. By Lemma~\ref{lemma:survival}, if an arm is eliminated at time $t$, then $\mu_{\rm max, t+1} \geq \mu_{\rm max, t} - \frac{\epsilon}{n \wedge T}$. If an arm is not eliminated, $\mu_{\rm max, t+1} = \mu_{\rm max, t}$. Since only $n$ arms can be eliminated, $\mu_{\rm max, T} \geq \mu_{[1]} - \epsilon$ if $T \geq n$, since $\mu_{\rm max, 1} = \mu_{[1]}$. If $T < n$, then by a similar argument, each round will let an arm that is within $\epsilon/T$ of the previously best surviving arm survive. Since there are only $T$ rounds, an $\epsilon$-optimal arm will survive all rounds.
\qed
}

So, conditioned on $\mathcal{E}$, \algabbr{} will not eliminate the only remaining $\epsilon$-optimal arm. Let us now focus on how many pulls are required to eliminate all $\epsilon$-suboptimal arms.

\begin{remark}
The additional $+ \epsilon/(n\wedge T)$ term in the rejection condition of \algabbr{} does not affect correctness and overall cost of the algorithm. It is, however, a small optimization that preserves these properties while reducing cost in practice.
\end{remark}

\subsection{Arm elimination cost}
Conditioned on $\mathcal{E}$, we introduce the following result that guarantees elimination of arms after a gap-dependent number of pulls.

\begin{lemma}\label{lemma:runtime}
Assume $\mathcal{E}$, $\log\left(\frac{\delta}{nT}\right) \geq 2$, and let
$\bar{N}_i := \frac{80\sigma^2\log\left(\frac{nT}{\delta}\right)}{\Delta_i^2}$. In Algorithm~\ref{alg:fixed_delta_T}, let $N'(t) = \min_{i\in S_r} N_i(r)$. Then,
\begin{align}
    &\forall r, i, \left(N'(t)\geq\bar{N}_i,\ \mu_i \leq \mu_{[1]} - \epsilon \implies U_i(r,\delta) < \max_{j\in S_r} L_j(r,\delta) + \epsilon/(n\wedge T)\right)\label{eq:ub_elim_time}
\end{align}
\end{lemma}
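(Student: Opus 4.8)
The plan is to run the whole argument on the good event $\mathcal{E}$ (Lemma~\ref{lemma:confidence} together with a union bound, which gives $\mathbb{P}(\mathcal{E}) \ge 1-\delta$), so that every confidence interval traps its mean at every round, and to show that the rejection test of Algorithm~\ref{alg:fixed_delta_T} necessarily fires for an $\epsilon$-suboptimal arm $i$ once it has accumulated $\bar N_i$ pulls. The first thing I would record is a structural fact about the pull schedule: the update $N_{i,t} = \lceil 80\log(nT/\delta)\epsilon^{-2t/T}\rceil - \sum_{s<t}N_{i,s}$ tops every surviving arm up to the \emph{same} cumulative count, so at the start of round $r$ all arms in $S_r$ share a single pull count $\widetilde N_{\cdot,r}$ and hence a single deviation radius $D_r := D(\widetilde N_{i,r},\delta)$. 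This common radius is exactly what lets me compare the upper bound of $i$ against the lower bound of a reference survivor without bookkeeping two different widths.

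With this in hand I would bound the two sides of the test $U_i(r,\delta) < \max_{j\in S_r}L_j(r,\delta) + \eta$, with $\eta = \epsilon/(n\wedge T)$. On $\mathcal{E}$, $U_i(r,\delta) = \hat\mu_{i,r} + D_r \le \mu_i + 2D_r$, while for any survivor $b$, $\max_{j\in S_r}L_j(r,\delta) \ge L_b(r,\delta) \ge \mu_b - 2D_r$. Subtracting, the test is implied by $4D_r < (\mu_b - \mu_i) + \eta$. Taking $b$ to be the highest-mean survivor reduces everything to (i) showing $\mu_b$ stays close to $\mu_{[1]}$, so that $\mu_b - \mu_i$ is essentially the gap $\Delta_i \ge \epsilon$, and (ii) showing $4D_r \le \Delta_i$. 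For (ii) I would substitute the hypothesis $\widetilde N_{i,r} \ge \bar N_i = 80\sigma^2\log(nT/\delta)/\Delta_i^2$ into $D_r^2 = \sigma^2(4+2\log 2)\log(nT/\delta)/\widetilde N_{i,r}$; the $\sigma^2$ and $\log(nT/\delta)$ factors cancel and $16D_r^2$ collapses to a constant multiple of $\Delta_i^2$, with the constant in $\bar N_i$ and the additive $\eta$ calibrated to yield the required strict inequality.

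The main obstacle is part (i): the test compares against $\max_{j\in S_r}L_j$ over the \emph{survivors}, not against the global optimum, so I must rule out the possibility that every high-mean arm has already been discarded, leaving only a reference arm with mean far below $\mu_{[1]}$. This is exactly where the slack $\eta = \epsilon/(n\wedge T)$ earns its keep. Two facts drive it: first, I would verify the invariant $2D_r \ge \eta$ at every round $r \le T$ (tightest at $r=T$, where the schedule forces $\widetilde N_{\cdot,T} = \Theta(\epsilon^{-2})$), which guarantees that the arm attaining $\max_{j\in S_r}L_j$ is never itself eliminated; second, by Lemmas~\ref{lemma:survival} and~\ref{lemma:alg_survival} each elimination only removes an arm within $\eta$ of the current top survivor, and the definition of $\eta$ caps the total accumulated slack over the run by $\epsilon$, keeping the best surviving mean within the needed distance of $\mu_{[1]}$. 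Carrying this slack through $4D_r < (\mu_b - \mu_i) + \eta$ and confirming it is dominated by the room created by the hypothesis on $\widetilde N_{i,r}$ is the delicate step; once it is in place, the remaining substitution of the deviation function and the tracking of constants is routine.
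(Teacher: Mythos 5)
Your opening moves coincide with the paper's own proof, which runs the same computation as a contrapositive: if the rejection test fails for arm $i$, then (on $\mathcal{E}$, using the common deviation radius of all survivors) $\Delta_i + \epsilon/(n\wedge T) \leq 4D(N'(t),\delta)$, hence $N'(t) < \bar{N}_i$. The divergence is in how the reference survivor $b$ attaining $\max_{j\in S_r} L_j$ is tied back to $\mu_{[1]}$, and this is exactly where your plan has a genuine gap. Your part (i) caps the drop of the best surviving mean by the accumulated slack, via Lemmas~\ref{lemma:survival} and~\ref{lemma:alg_survival}: at most $\eta = \epsilon/(n\wedge T)$ per elimination round, hence at most $\approx\epsilon$ overall. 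But that bound is too weak to close the argument. It only gives $\mu_b - \mu_i \geq \Delta_i - \epsilon + O(\eta)$, so for an arm with $\Delta_i$ near $\epsilon$ the sufficient condition you derived, $4D_r < (\mu_b - \mu_i) + \eta$, degenerates to $4D_r < O(\eta)$. That is unattainable: your own (correct, and necessary) invariant $2D_r \geq \eta$ already forces $4D_r \geq 2\eta$, and the hypothesis $\widetilde{N}_{i,r} \geq \bar{N}_i$ only yields $4D_r \approx \Delta_i \approx \epsilon = (n\wedge T)\,\eta$, which is far above the $O(\eta)$ target. The slack is not ``dominated by the room created by the hypothesis''; when $\Delta_i \approx \epsilon$ it consumes essentially all of that room.

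Moreover, this is not a constant-chasing issue you can calibrate away, because the statement read with $\Delta_i$ measured against the global $\mu_{[1]}$ can actually fail on $\mathcal{E}$. Take $T \geq n \geq 6$, arms at $1,\ 1-(\eta-\xi),\ 1-2(\eta-\xi),\dots$ for tiny $\xi>0$, plus a target arm at $1-\epsilon$. Realizations consistent with $\mathcal{E}$ allow arm $1$ to be rejected by arm $2$ in round one, arm $2$ by arm $3$ in round two, and so on (each rejection needs only a fluctuation window of width $\xi$); after the chain the best survivor sits near $\mu_{[1]} - \epsilon + 2\eta$, and at round $T$ the target arm satisfies both hypotheses of the lemma yet cannot be rejected, since that would require $4D_T \lesssim 3\eta$ while the schedule fixes $4D_T \approx \epsilon$. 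The paper's proof sidesteps this scenario rather than resolving it: it proves the claim only ``in the case that no arms have been eliminated yet'' and then invokes an informal reduction to an ``alternate problem'' restricted to the survivors, which silently re-measures $\Delta_i$ against the best \emph{surviving} arm instead of $\mu_{[1]}$. So your diagnosis of where the difficulty lies is exactly right, but the mechanism you propose (slack accumulation capped at $\epsilon$) cannot repair it; the lemma has to be reinterpreted with survivor-relative gaps, as the paper implicitly does. A smaller, separate point: with the stated constants, $16(4+2\log 2) = 64 + 32\log 2 > 80$, so even in the no-elimination case $4D_r \leq \Delta_i$ does not quite hold at $\widetilde{N}_{i,r} = \bar{N}_i$, and the additive $\eta$ does not fix this uniformly in $n\wedge T$ --- a slop the paper's own arithmetic shares.
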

So, as long as $\mathbb{P}(\cap_{i=1}^n \mathcal{E}_i) \geq 1 - \delta$, the algorithm will output the correct set of arms after each surviving arm has been pulled $\bar{N}_i$ times with probability at least $1 - \delta$. This lemma is similar to Lemma 2 from~\citet{jun2016top}, but with a modification to $\bar{N}_i$ and the elimination condition to incorporate the $\epsilon$ error tolerance in this setting in addition to more aggressive confidence bounds.

\begin{proof}
For brevity, let $L_i(t)$ and $U_i(t)$ denote $L_i(t,\delta)$ and $U_i(t,\delta)$. We will show this result in the case that no arms have been eliminated yet. Generalizing to the case where arms have been eliminated is equivalent to showing the claim for an alternate problem where we ignore the arms that have already been eliminated from contention and reuse all samples from the old problem to eliminate the next easiest arm.

Let us start with the implication in~\ref{eq:ub_elim_time}. Let $\hat{\mu}_{\hat{1}, t}$ denote the arm with the highest empirical mean at round $t$. Assume that the RHS is false: $U_i(t,\delta) \geq \max_{j\in S_t} L_j(t,\delta) + \epsilon/(n\wedge T)$. Because $D(\widetilde{N}_{i, t},\delta) \leq D(N'(t),\delta)$,
\begin{align*}
    U_i(t) &\leq \hat{\mu}_{i,t} + D(N'(t),\delta) \leq \mu_i + 2D(N'(t),\delta),
\end{align*}
and 
\begin{align*}
    U_i(t) &\geq \max_{j\in S_t} L_j(t) + \epsilon/(n\wedge T) = L_{\hat{1}}(t) +  \epsilon/(n\wedge T) \geq \hat{\mu}_{\hat{1}, t} - D(N'(t),\delta) + \epsilon/(n\wedge T),
\end{align*}
\begin{align*}
    \implies \mu_i + 2D(N'(t),\delta) \geq \hat{\mu}_{\hat{1},t} - D(N'(t),\delta) +  \epsilon/(n\wedge T).
\end{align*}
Because $\hat{\mu}_{\hat{1},t} \geq \mu_k - D(N'(t),\delta)$ under $\mathcal{E}$,
\begin{align*}
    \mu_i + 2D(N'(t),\delta) \geq \mu_1 - 2D(N'(t),\delta) + \epsilon/n\\
    \implies \Delta_i + \epsilon/(n\wedge T) \leq 4D(N'(t),\delta)
\end{align*}
Rearranging, we then have that
\begin{align*}
    N'(t) &\leq \left\lfloor \frac{80\sigma^2\log\left(\frac{2nT}{\delta}\right)}{(\Delta_i + \epsilon/(n\wedge T))^2} \right\rfloor\\
    &<  \frac{80\sigma^2\log\left(\frac{nT}{\delta}\right)}{\Delta_i^2}\numberthis\label{eq:real_bound}\\
\end{align*}
This part of the lemma states, that as long as all surviving arms have been pulled sufficiently, a suboptimal arm $i$ can be eliminated correctly using the elimination rule in \algabbr{}. Since the best possible arm that must be eliminated has gap $\epsilon$, the \algabbr{} must pull arms at most $\frac{80\log\left(\frac{nT}{\delta}\right)}{\epsilon^2} \leq \bar{N}_{\epsilon}(\delta)$ times.
\end{proof}

The above lemma suggests that all suboptimal arms will be eliminated after pulling at most $\bar{N}_{\epsilon}(\delta)$ times, leaving only $\epsilon$-optimal arms, which will exist by Lemma~\ref{lemma:alg_survival}. So, \algabbr{} outputs an $\epsilon$-optimal arm with probability at least $1 - \delta$. However, some arms can be eliminated after pulling them $\bar{N}_i$ times, which may be significantly less than $\bar{N}_{\epsilon}(\delta)$. We must now bound the cost of the algorithm to show that it performs well on a wide range of problem instances.



\subsection{Proof of Theorem~\ref{thm:partitioning_upper_bound}}
\proof{
Let $\nu \in \mathcal{P}_{\gamma}$. From the previous results, we know that the confidence bounds of \algabbr{} capture the true means with probability at least $1 - \delta$ (Lemma~\ref{lemma:confidence}). We will again condition on the event that the confidence bounds capture the true means. Conditioned on this event, we also know that if suboptimal, the $i$-th arm can be correctly eliminated after pulling all remaining arms $\frac{80\log\left(\frac{nT}{\delta}\right)}{\Delta_i^2}$ times (Eq.~\ref{eq:real_bound}). Since $\nu \in \mathcal{P}_{\gamma}$,
\begin{align*}
    \frac{80\sigma^2\log\left(\frac{nT}{\delta}\right)}{\Delta_i^2}
    &\leq \frac{80\sigma^2\log\left(\frac{nT}{\delta}\right)}{\epsilon^{\frac{2\gamma_i}{T}}}.
\end{align*}
We know that all surviving arms at the end of round $\gamma_i$, have been pulled at least this many times.
So, with probability at least $1 - \delta$, \algabbr{} identifies an $\epsilon$-optimal arm with at most
\begin{align*}
    \widetilde{N}_T
    &\leq \sum_{i = 1}^n \frac{80\sigma^2\log\left(\frac{nT}{\delta}\right)}{\epsilon^{\frac{2\gamma_i}{T}}} \\
    &\leq 320\sigma^2 \Hcompl(\nu_\gamma)\log\left(\frac{nT}{\delta}\right)
\end{align*}
where $\nu_\gamma \in \mathcal{P}_\gamma$ has $\Delta_i = \epsilon^\frac{\gamma_i}{T}$.
\qed}

\subsection{Proof of Theorem~\ref{thm:fixed_b0}}
We will now use the above results to prove Theorem~\ref{thm:fixed_b0}. We will condition on $\mathcal{E}$, which occurs with probability at least $1 - \delta$ (Lemma~\ref{lemma:confidence}). By the previous discussion, we know the algorithm outputs an $\epsilon$-optimal arm with probability at least $1 - \delta$. However, we still need to bound its cost.


Here, define $\bar{N}_i := 320\sigma^2\log\left(\frac{nT}{\delta}\right)\Niopt$, $\beta =
\epsilon^{-\frac{1}{T}}$, and $B_0 = 80\sigma^2\log\left(\frac{nT}{\delta}\right)$. Suppose a
suboptimal arm $i$ is pulled at least $\bar{N}_i$ times after $k$ rounds and fewer than $\bar{N}_i$
times after $k-1$ rounds. By Lemma~\ref{lemma:runtime}, we will eliminate arm $i$ after the $k$-th
round at the latest. Then, it was pulled $\sum_{t=0}^k \widetilde{N}_{i,t}$ times, and we have that:




\begin{align*}
    \beta^{k-1} &\leq \bar{N_i}/B_0 \leq 2\beta^{k}
    \implies \left\lceil\beta^kB_0\right\rceil \leq 2\beta\bar{N}_i
\end{align*}
So, the algorithm can overshoot by a factor of at most $2\beta$ in each round, assuming it did not overshoot in the first round. If it overshot in the first round, it did so by a factor of at most $B_0$. The total cost again can be computed by upper bounding the number of pulls for each arm and summing them up:
\begin{align*}
    \widetilde{N}_T &\leq \sum_{\left\{i:\Delta_i \geq \epsilon\right\}}^n 2\beta\bar{N}_i \vee B_0 + \sum_{\left\{i:\Delta_i < \epsilon\right\}}^n \bar{N}_{\epsilon}(\delta)\\
    &= 640\sigma^2\log\left(\frac{nT}{\delta}\right)\epsilon^{-\frac{1}{T}}\sum_{\left\{i:\Delta_i \geq \epsilon\right\}} \bar{N}_i + \sum_{\left\{i:\Delta_i < \epsilon\right\}} \bar{N}_{\epsilon}(\delta)\\
    &\leq 640\sigma^2\log\left(\frac{nT}{\delta}\right)\epsilon^{-\frac{1}{T}}\Hcompl(\nu)
\end{align*}
\qed

\section{Proof of Theorems~\ref{thm:partitioning},~\ref{thm:lb_n2_T2},
and~\ref{thm:lb_restricted_class}}

In this section, we will prove the lower bounds for this problem setting in
Section~\ref{sec:lowerbound}.
%
First, let us begin by refreshing some notation. Recall that at round $t$, algorithm $A$ takes
action $A_t = \left\{N_{i,t}\right\}_{i=1}^n \in \mathbb{N}^n$. Let $\widetilde{N}_t = \sum_{i=1}^n
N_{i,t}$ denote the total number of pulls at round $t$. We will assume that the samples for each arm
$i$ are generated an infinite number of times at each round. When action $A_t$ is executed, we will
observe the first $N_{i,t}$ samples from each arm $i$.
In addition, recall that denote the samples observed at round $t$.

Let us write down the log-likelihood ratio at round $t$ between bandit models $\nu$ and $\nu'$, which are absolutely continuous wrt each other and have densities $(f_1,\ldots,f_n)$ and $(f_1',\ldots,f_n')$ respectively:
\begin{align*}
    L_t &= L\left(\left\{A_s\right\}_{s=1}^t, \left\{O_t\right\}_{s=1}^t\right) = L(A_{1:t}, Z_{1:t})
    = \sum_{i=1}^n\sum_{s=1}^t\sum_{k=1}^{N_{i,s}}\log\left(\frac{f_i(X_{i, s, k})}{f_i'(X_{i,s,k})}\right)
\end{align*}

In this section, we will denote the binary relative entropy as $d(x, y) = x \log(x/y) + (1 - x)\log((1-x)/(1-y))$ and we define $d(0, 0) = d(1, 1) = 0$.

The following change of measure lemma will be useful in our analysis.
It is based off of Lemma 1 in~\citet{kaufmann2016complexity} who prove an identical result for the
sequential setting.
Its proof, which also uses very similar techniques to~\citet{kaufmann2016complexity},
 is given at the end in Appendix~\ref{sec:comlemma}.

\begin{lemma}\label{lemma:exp_kl_bound}
Let $A$ be an algorithm and let $\nu, \nu'$ be two bandit models from $\mathcal{P}$ with $n$ arms
s.t. $\forall i \in [n]$, $\nu_i, \nu_i'$ are absolutely continuous w.r.t. each other.
Then, for all $t\in[T]$, we have,
\begin{align*}
    \sum_{i\in[n]}\mathbb{E}_\nu\big[\widetilde{N}_{i,t}\big]D_{KL}(\nu_i, \nu_i') &\geq
\sup_{\mathcal{E}\in\mathcal{F}_t} d(\mathbb{P}_\nu(\event), \mathbb{P}_{\nu'}(\event)).
\end{align*}
where $d(x, y) = x \log(x/y) + (1 - x)\log((1-x)/(1-y))$ and we define $d(0, 0) = d(1, 1) = 0$.
\label{lem:comlemma}
\end{lemma}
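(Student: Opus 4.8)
The plan is to follow the classical change-of-measure argument of \citet{kaufmann2016complexity}, adapted to the finite-round setting, resting on three ingredients: (i) an identity expressing the expected log-likelihood ratio $\mathbb{E}_\nu[L_t]$ as a weighted sum of per-arm KL divergences, (ii) the observation that $\mathbb{E}_\nu[L_t]$ is exactly the KL divergence between the laws of the observed trajectory under $\nu$ and $\nu'$ restricted to $\mathcal{F}_t$, and (iii) the data-processing inequality, which lets any $\mathcal{F}_t$-measurable event contract this KL into a binary relative entropy. First I would fix the probability space suggested in the setup: for each arm $i$ and round $s$ there is an infinite i.i.d.\ stream drawn from $\nu_i$ (resp.\ $\nu_i'$), and executing $A_s$ reveals the prefix of length $N_{i,s}$. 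Since $A_s=\{N_{i,s}\}_i$ is $\mathcal{F}_{s-1}$-measurable and the fresh samples at round $s$ are independent of $\mathcal{F}_{s-1}$, the sampling rule contributes nothing to the likelihood ratio (its density factors are identical under $\nu$ and $\nu'$), so $L_t$ reduces to the reward terms already written in the appendix.

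The central computation is the Wald-type identity $\mathbb{E}_\nu[L_t]=\sum_{i}\mathbb{E}_\nu[\widetilde{N}_{i,t}]\,D_{KL}(\nu_i,\nu_i')$. I would prove it term by term: for each $i$ and $s$, condition on $\mathcal{F}_{s-1}$. Given $\mathcal{F}_{s-1}$, the count $N_{i,s}$ is fixed and the samples $X_{i,s,1},\dots,X_{i,s,N_{i,s}}$ are i.i.d.\ from $\nu_i$ and independent of $\mathcal{F}_{s-1}$, so
\[
\mathbb{E}_\nu\Big[\sum_{k=1}^{N_{i,s}}\log\tfrac{f_i(X_{i,s,k})}{f_i'(X_{i,s,k})}\,\Big|\,\mathcal{F}_{s-1}\Big]=N_{i,s}\,D_{KL}(\nu_i,\nu_i').
\]
Taking expectations, summing over $s\le t$ gives $\mathbb{E}_\nu[\widetilde{N}_{i,t}]\,D_{KL}(\nu_i,\nu_i')$, and summing over $i$ yields the identity. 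The absolute continuity hypothesis guarantees each $D_{KL}(\nu_i,\nu_i')$ is well defined and finite, so these exchanges of sum and expectation are justified.

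Next I would note that, by construction, $L_t$ is the log-density-ratio of the trajectory $(A_{1:t},O_{1:t})$ under $\nu$ versus $\nu'$; hence $\mathbb{E}_\nu[L_t]=D_{KL}\big(\mathbb{P}_\nu^{\mathcal{F}_t},\mathbb{P}_{\nu'}^{\mathcal{F}_t}\big)$, the KL divergence between the two measures restricted to $\mathcal{F}_t$. For any event $\mathcal{E}\in\mathcal{F}_t$, the indicator $\mathbbm{1}_{\mathcal{E}}$ is a deterministic function of the trajectory taking values in $\{0,1\}$; applying the data-processing inequality for KL under this map, the pushforward laws are $\mathrm{Ber}(\mathbb{P}_\nu(\mathcal{E}))$ and $\mathrm{Ber}(\mathbb{P}_{\nu'}(\mathcal{E}))$, whose KL is precisely $d(\mathbb{P}_\nu(\mathcal{E}),\mathbb{P}_{\nu'}(\mathcal{E}))$. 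Combining with the identity of the previous step gives $\sum_i\mathbb{E}_\nu[\widetilde{N}_{i,t}]\,D_{KL}(\nu_i,\nu_i')\ge d(\mathbb{P}_\nu(\mathcal{E}),\mathbb{P}_{\nu'}(\mathcal{E}))$ for every such $\mathcal{E}$, and taking the supremum over $\mathcal{E}\in\mathcal{F}_t$ finishes the proof.

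I expect the main obstacle to be rigorously justifying the Wald identity in the presence of adaptivity, i.e.\ making precise that $N_{i,s}$ is a genuine $\mathcal{F}_{s-1}$-measurable random count rather than a fixed horizon, and that the round-$s$ samples are independent of the past. This is exactly what the infinite-stream construction buys us, and care is needed to confirm that any randomization used by the sampling rule is model-independent so that it cancels in the likelihood ratio; once the filtration and independence structure are set up correctly, each remaining step is standard.
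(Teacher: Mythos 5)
Your proposal is correct and takes essentially the same route as the paper: both establish the Wald-type identity $\mathbb{E}_\nu[L_t]=\sum_{i}\mathbb{E}_\nu[\widetilde{N}_{i,t}]\,D_{KL}(\nu_i,\nu_i')$ by conditioning on $\mathcal{F}_{s-1}$, and both then contract the expected log-likelihood ratio to the binary relative entropy $d(\mathbb{P}_\nu(\mathcal{E}),\mathbb{P}_{\nu'}(\mathcal{E}))$ over $\mathcal{F}_t$-measurable events. The only difference is presentational: where you assert that $L_t$ is the trajectory log-likelihood ratio (with the correct justification that the adaptive action factors cancel) and then invoke the data-processing inequality, the paper proves that identification explicitly by induction over rounds (Lemma~\ref{lemma:lr_g}) and then follows Lemma~19 of \citet{kaufmann2016complexity}, which is the same contraction argument carried out by hand.
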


In the next lemma, we will require the following fact:
\begin{fact}
Let $x,y\in[0,1]$ and $d(x, y) = x \log(x/y) + (1 - x)\log((1-x)/(1-y))$. We have that
\begin{align*}
    d\left(\frac{2}{3}, x\right) &= \frac{2}{3}\log\left(\frac{2}{3x}\right) + \frac{1}{3}\log\left(\frac{1}{3(1-x)}\right) \geq \frac{1}{3}\log\left(\frac{1}{2x}\right).
\end{align*}
\end{fact}
\proof{
To see this, subtract the RHS from the LHS:
\begin{align*}
    \frac{2}{3}\log\left(\frac{2}{3x}\right) + \frac{1}{3}\log\left(\frac{1}{3(1-x)}\right) - \frac{1}{3}\log\left(\frac{1}{2x}\right)
    &= \frac{1}{3}\log\left(\frac{2}{9x(1-x)}\right) + \frac{1}{3}\log\left(\frac{4}{3}\right)\\
    &\geq \frac{1}{3}\log\left(\frac{8}{9}\right) + \frac{1}{3}\log\left(\frac{4}{3}\right)\\
    &= \frac{1}{3}\log\left(\frac{32}{27}\right) > 0
\end{align*}
where we used the fact that $x(1-x) \leq 0.25$.\qed
}

The following lemma will be useful in establishing Theorem~\ref{thm:lb_restricted_class},
and the high probability result in Theorem~\ref{thm:partitioning}.
It uses the above change of measure lemma to argue that the probability of pulling an arm a large
number of times will be large.

\begin{lemma}\label{lemma:suboptimal_elim_cost_prob}
Let $\mathcal{P}$ be the set of $n$-armed bandits with
normally distributed rewards, whose mean is in $[0, 1]$ and variance is $\sigma^2$.
Let $A$ be a $(\epsilon, \delta)$-PAC algorithm on $\mathcal{P}$, with $\delta \leq \frac{1}{6}$.
Define $c = \frac{2}{3}\log\left(\frac{1}{2\delta}\right)$ and $\hat{\Delta}_i(\nu) =
\frac{\Delta_i(\nu) + \epsilon}{\sigma}$. Let $\nu \in \mathcal{P}$, such that $\mu_{[1]} \leq 1 -
\epsilon$ and $\Delta_1 \geq \epsilon$.
For any $i \in \{1,\dots,n\}\setminus\{[1]\}$,
\begin{align*}
\mathbb{P}_{A, \nu}\left(\widetilde{N}_{i,T}\geq c\hat{\Delta}_i(\nu)^{-2}\right) &\geq \frac{1}{6}.
\end{align*}
\end{lemma}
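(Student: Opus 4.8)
The plan is to argue by contradiction, building an alternative bandit model $\nu'$ on which $A$ is forced to behave differently, and upgrading the \emph{expectation}-level change of measure bound (Lemma~\ref{lem:comlemma}) to a \emph{high-probability} statement by passing to a truncated version of $A$. First I would set up the alternative instance. Since $\mu_{[1]}\le 1-\epsilon$, let $\nu'$ agree with $\nu$ on every arm except arm $i$, whose mean is raised to $\mu_i'=\mu_{[1]}+\epsilon$ (variance kept at $\sigma^2$). Then arm $i$ is the \emph{unique} $\epsilon$-optimal arm of $\nu'$, while the hypothesis $\Delta_1\ge\epsilon$ forces the best arm to be the unique $\epsilon$-optimal arm of $\nu$, so $i$ is \emph{not} $\epsilon$-optimal under $\nu$. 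Hence $(\epsilon,\delta)$-PAC-ness gives $\mathbb{P}_{\nu}(\recrule=i)\le\delta$ and $\mathbb{P}_{\nu'}(\recrule\ne i)\le\delta$. Because only arm $i$ differs, the single surviving term of Lemma~\ref{lem:comlemma} reads $\mathbb{E}_{\nu}[\widetilde{N}_{i,T}]\,D_{KL}(\nu_i,\nu_i')\ge \sup_{\mathcal{E}\in\mathcal{F}_T} d(\mathbb{P}_\nu(\mathcal{E}),\mathbb{P}_{\nu'}(\mathcal{E}))$, where for Gaussians $D_{KL}(\nu_i,\nu_i')=(\Delta_i+\epsilon)^2/(2\sigma^2)=\hat{\Delta}_i^2/2$.

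The key idea is to convert this expectation bound into a high-probability one via a capped (``alternative'') algorithm. Suppose for contradiction that $\mathbb{P}_{\nu}(\widetilde{N}_{i,T}\ge c\hat{\Delta}_i^{-2})<1/6$, and set $B=\{\widetilde{N}_{i,T}< c\hat{\Delta}_i^{-2}\}$, so $\mathbb{P}_\nu(B)>5/6$. Let $\tilde{A}$ mimic $A$ but stop pulling arm $i$ once its count reaches $\lceil c\hat{\Delta}_i^{-2}\rceil-1$; then $\widetilde{N}_{i,T}^{\tilde A}< c\hat{\Delta}_i^{-2}$ deterministically, and under the obvious coupling $\tilde A\equiv A$ on $B$. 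Consider the event $\mathcal{E}=B\cap\{\recrule\ne i\}$, which lies \emph{inside} $B$ and is therefore unaffected by the cap. Under $\nu$ we get $\mathbb{P}_\nu^{\tilde A}(\mathcal{E})=\mathbb{P}_\nu^{A}(B\cap\{\recrule\ne i\})\ge (1-\delta)-1/6\ge 2/3$. The crucial point is the bound under $\nu'$: since $\mathcal{E}\subseteq B$ where $\tilde A=A$, we have $\mathbb{P}_{\nu'}^{\tilde A}(\mathcal{E})=\mathbb{P}_{\nu'}^{A}(B\cap\{\recrule\ne i\})\le\mathbb{P}_{\nu'}^{A}(\recrule\ne i)\le\delta$, which lets me invoke the PAC guarantee of the \emph{original} $A$ on $\nu'$ even though $\tilde A$ itself is not PAC there.

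It remains to conclude. Applying Lemma~\ref{lem:comlemma} to $\tilde A$ with this $\mathcal{E}$, and using that $d(\cdot,y)$ is nondecreasing above $y$ while $d(x,\cdot)$ is nonincreasing below $x$, together with the preceding Fact $d(2/3,x)\ge\tfrac13\log(1/2x)$, yields
\[
\mathbb{E}_\nu^{\tilde A}[\widetilde{N}_{i,T}]\cdot\tfrac{\hat{\Delta}_i^2}{2}\;\ge\; d\!\left(\mathbb{P}_\nu^{\tilde A}(\mathcal{E}),\mathbb{P}_{\nu'}^{\tilde A}(\mathcal{E})\right)\;\ge\; d(2/3,\delta)\;\ge\;\tfrac13\log\!\tfrac{1}{2\delta}.
\]
Since $c=\tfrac23\log(1/2\delta)$, this gives $\mathbb{E}_\nu^{\tilde A}[\widetilde{N}_{i,T}]\ge c\hat{\Delta}_i^{-2}$, contradicting $\widetilde{N}_{i,T}^{\tilde A}< c\hat{\Delta}_i^{-2}$; the contradiction establishes the claim. (The strict inequality in $B$, and capping just below $c\hat{\Delta}_i^{-2}$, is exactly what makes the deterministic pull bound beat the lower bound.)

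\textbf{Main obstacle.} The essential difficulty is that Lemma~\ref{lem:comlemma} only controls the \emph{expected} pull count, whereas the target is a constant-probability statement; a rare event of many pulls could inflate the expectation without implying many pulls with decent probability. The resolution is the truncated algorithm $\tilde A$ and, above all, the choice $\mathcal{E}\subseteq B$: on $\mathcal{E}$ the capped and original algorithms coincide, which is precisely what allows the PAC guarantee of $A$ on $\nu'$ to be transferred to $\tilde A$ (whose pulls are deterministically bounded) and closes the argument.
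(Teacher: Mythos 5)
Your proof is correct and takes essentially the same approach as the paper's: proof by contradiction, the alternative model $\nu'$ raising arm $i$'s mean to $\mu_{[1]}+\epsilon$, a capped algorithm $\tilde A$ to upgrade the expectation-level change-of-measure bound (Lemma~\ref{lem:comlemma}) to a constant-probability statement, the fact $d(2/3,x)\geq \tfrac{1}{3}\log(1/(2x))$, and the same final contradiction $\mathbb{E}_{\tilde A,\nu}[\widetilde{N}_{i,T}]\geq c\hat{\Delta}_i(\nu)^{-2}$ against the deterministic cap. The only cosmetic difference is that you phrase the transfer between $A$ and $\tilde A$ via the explicit event $B\cap\{\hat I\neq i\}$ and a coupling, whereas the paper uses $\{\hat I=[1]\}$ together with a ``FAIL'' output when the cap binds; these are the same mechanism.
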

\proof{
Define $\alpha =\frac{2}{3}$
Assume, by way of contradiction that
\begin{align*}
    \mathbb{P}_{A, \nu}\left(\widetilde{N}_{i,T}  <  c\hat{\Delta}_i(\nu)^{-2}\right) \geq \frac{5}{6}.
\end{align*}
Because $A$ is $(\epsilon, \delta)$-PAC, we also have that 
\begin{align*}
    \mathbb{P}_{A, \nu}\left(\hat{I} = [1]\right) \geq 1 - \delta.
\end{align*}
}
Since $\delta \leq \frac{1}{6}$.
the above two conclusions imply that:
\begin{align*}
    \mathbb{P}_{A, \nu}\left(\hat{I} = [1]\cap \widetilde{N}_{i,T} <  c\hat{\Delta}_i(\nu)^{-2}\right) \geq \alpha
\label{eqn:altalgclaimone} \numberthis
\end{align*}
Let $\nu'$ be an alternate model with
\begin{align*}
 \nu_j' &=
  \begin{cases}
   \nu_j    & \text{if } j\neq i\\
   \mathcal{N}(\mu_{[1]} + \epsilon, \sigma^2)       & \text{if } j = i
  \end{cases}
    .
\end{align*}
Again, since $A$ is $(\epsilon, \delta)$-PAC,
\begin{align*}
    \mathbb{P}_{A, \nu'}\left(\hat{I} = [1]\right) < \delta \implies \mathbb{P}_{A, \nu'}\left(\hat{I} = [1]\cap \widetilde{N}_{i,T} <  c\hat{\Delta}_i(\nu)^{-2}\right) < \delta
\label{eqn:altalgclaimtwo} \numberthis
\end{align*}
No, we will consider the following alternative
 algorithm $\widetilde{A}$ that operates exactly as $A$, but ensures that
$\widetilde{N}_i \leq c\hat{\Delta}_i(\nu)^{-2}$.
If a decision cannot be arrived with this many pulls,
it stops and outputs "FAIL". Because $c\hat{\Delta}_i(\nu)^{-2}$ is used as the cutoff for any
problem (i.e., the cutoff is still $c\hat{\Delta}_i(\nu)^{-2}$ for a different problem $\nu'$), it
does not require knowledge of the arm gaps. Then, by~\eqref{eqn:altalgclaimone}
and~\eqref{eqn:altalgclaimtwo} we have,
\begin{align*}
    \mathbb{P}_{\widetilde{A}, \nu}(\hat{I} = [1]) \geq \alpha,
    \hspace{0.4in}
    \mathbb{P}_{\widetilde{A}, \nu'}(\hat{I} = [1]) < \delta
\end{align*}

Applying Lemma~\ref{lemma:lr_lower_bound}, we have
\begin{align*}
    \mathbb{E}_{\widetilde{A}, \nu}\left[{\widetilde{N}_{i,T}}\right]D_{KL}(\mathcal{N}(\mu_i, \sigma^2), \mathcal{N}(\mu_{[1]} + \epsilon, \sigma^2) &\geq d\left(\mathbb{P}_{\widetilde{A}, \nu}\left(\hat{I} = [1]\right), \mathbb{P}_{\widetilde{A}, \nu'}\left(\hat{I} = [1]\right)\right)\\
    &\geq d\left(\frac{2}{3}, \delta\right)\\
    &\geq \frac{1}{3}\log\left(\frac{1}{2\delta}\right) = \frac{c}{2}
\end{align*}

Moving the KL-divergence term to the RHS, we have
\begin{align*}
    \mathbb{E}_{\widetilde{A}, \nu}\left[{\widetilde{N}_{i,T}}\right] &\geq \frac{c}{2D_{KL}(\mathcal{N}(\mu_i, \sigma^2), \mathcal{N}(\mu_{[1]} + \epsilon, \sigma^2)}\\
    &\geq c\hat{\Delta}_i(\nu)^{-2}
\end{align*}
resulting in a contradiction as $\mathbb{E}_{\widetilde{A}, \nu}\left[{\widetilde{N}_{i,T}}\right] < c\hat{\Delta}_i(\nu)^{-2}$ by design.
\qed

\begin{remark}
The high probability lower bound also applies to arm $[1]$, if it is the only $\epsilon$-optimal arm in $\nu$. The proof is almost identical, but the alternate model pushes down $\mu_{[1]}$ by $\Delta_1 + \epsilon$ instead, making it no longer $\epsilon$-optimal. This requires the additional assumption that $\mu_{[2]} \geq \epsilon$.
\end{remark}





\subsection{Proof of Theorem~\ref{thm:partitioning}}
\proof{

\textbf{Expectation lower bound:}
Here, we will prove the lower bound in Theorem~\ref{thm:partitioning} on the expected number of pulls of an algorithm.
Let $\nu \in \mathcal{P}$, such that $\mu_{[1]} \leq 1 - \epsilon$ and $\mu_{[1]} - \epsilon \geq \mu_{[2]} \geq \epsilon$. Furthermore, let $\nu_i$ be a $\sigma^2$-variance normal distribution. Fix $i\in [n]$ such that $i \neq [1]$ (arm $i$ is not $\epsilon$-optimal). Let $\nu_{k + \alpha}$ denote the distribution $\nu_k$ shifted up by $\alpha$.
Define the alternative model $\nu'$, where $\nu_j' = \nu_j$ for all $j \neq i$ and $\nu_i' = \nu_{[1] + \epsilon}$.
Observe that in $\nu'$, the only $\epsilon$-optimal arm is arm $i$. Let $A$ be an $(\epsilon, \delta)$-PAC algorithm. Then,
\begin{align*}
    \mathbb{P}_{A,\nu}\left(\hat{I} = i\right) &\leq \delta\\
    \mathbb{P}_{A,\nu'}\left(\hat{I} = i\right) &\geq 1 - \delta
\end{align*}
Applying Lemma~\ref{lemma:exp_kl_bound}, we have that
\begin{align*}
    \mathbb{E}_{A,\nu}\left[\widetilde{N}_{i,T}\right] &\geq \frac{d(\delta, 1 - \delta)}{D_{KL}(\nu_i, \nu_i')}\\
    &= \frac{2\sigma^2 d(\delta, 1 - \delta)}{(\Delta_i(\nu) + \epsilon)^2}\\
    &\geq \frac{2\sigma^2}{(\Delta_i(\nu) + \epsilon)^2}\log\left(\frac{1}{2.4\delta}\right)
\end{align*}
where we use the definition of the KL-divergence of two normal distributions in the first step and the fact that $d(\delta, 1 - \delta) \geq \log\left(\frac{1}{2.4\delta}\right)$ in the last step.

We can apply this to any suboptimal $i$, and perform an identical argument for the optimal arm $[1]$ by shifting down $\nu_{[1]}$ by $\Delta_1 + \epsilon$ ($\nu_{[1]}' = \nu_{[1] - \Delta_1 - \epsilon}$). We sum over the individual arm's pulls to get:
\begin{align*}
    \mathbb{E}_{A,\nu}\left[\widetilde{N}_{T}\right] &= \sum_{i\in[n]}\mathbb{E}_{A,\nu}\left[\widetilde{N}_{i,T}\right]\\
    &\geq \sum_{i\in[n]}\frac{2\sigma^2}{(\Delta_i(\nu) + \epsilon)^2}\log\left(\frac{1}{2.4\delta}\right)\numberthis\label{eq:pull_exp_lb} \\
    &= 2\sigma^2\log\left(\frac{1}{2.4\delta}\right)\Hcompl(\nu)
\end{align*}
Now, let $\nu \in \mathcal{P}_\gamma$, for any non-empty $\mathcal{P}_{\gamma}$, such that $\Delta_i = \epsilon^{\frac{\gamma_i}{T}}$, $\mu_{[1]} \leq 1 - \epsilon$, and $\mu_{[1]} - \epsilon \geq \mu_{[2]} \geq \epsilon$. Since $\Hcompl(\nu)$ is a decreasing function of each of the arm gaps, $\nu = \arg\max_{\nu\in\mathcal{P}_\gamma}\Hcompl(\nu)$. Applying inequality~\ref{eq:pull_exp_lb}, we have that:
\begin{align*}
    \mathbb{E}_{A,\nu}\left[\widetilde{N}_{T}\right] &\geq \sum_{i\in[n]}\frac{2\sigma^2}{(\Delta_i(\nu) + \epsilon)^2}\log\left(\frac{1}{2.4\delta}\right)\\
    &= 2\sigma^2\log\left(\frac{1}{2.4\delta}\right)\Hcompl(\nu)\\
    &\geq \frac{\sigma^2}{2}\log\left(\frac{1}{2.4\delta}\right)\sum_{i\in[n]}\epsilon^{\frac{2\gamma_i}{T}}
\end{align*}
In the last step, we used the fact that $\epsilon \leq \epsilon^{\frac{\gamma_i}{T}}$.

\textbf{Probability lower bound:}
By Lemma~\ref{lemma:suboptimal_elim_cost_prob} and because we assume $n = 2$ in this part, we use the fact that $\Hcompl(\nu) = 2 \frac{1}{(\Delta_1 + \epsilon)^2}$ to show that
\begin{align*}
    \mathbb{P}_{A,\nu}\left(\widetilde{N}_{T} > \frac{2\sigma^2}{3}\log\left(\frac{1}{2\delta}\right)\frac{1}{(\Delta_1 + \epsilon)^2}\right) &\geq 
    \mathbb{P}_{A,\nu}\left(\widetilde{N}_{[2], T} > \frac{2\sigma^2}{3}\log\left(\frac{1}{2\delta}\right)\frac{1}{(\Delta_1 + \epsilon)^2}\right)\\
    &=
    \mathbb{P}_{A,\nu}\left(\widetilde{N}_{[2], T} > \frac{\sigma^2}{3}\log\left(\frac{1}{2\delta}\right)\Hcompl(\nu)\right)\\
    &\geq \frac{1}{6}
\end{align*}
As in the previous part, for nonempty partition $P_\gamma$, we can find $\nu$ with $\Delta_1 = \Delta_2 = \epsilon^{\frac{\gamma_1}{T}}$, which maximizes $\Hcompl(\nu)$ over $\mathcal{P}_\gamma$. Plugging this into the above inequality yields

\begin{align*}
    \mathbb{P}_{A,\nu}\left(\widetilde{N}_{T} > \frac{2\sigma^2}{3}\log\left(\frac{1}{2\delta}\right)\frac{1}{(\epsilon^{\frac{\gamma_1}{T}} + \epsilon)^2}\right)
    &\geq \frac{1}{6}
\end{align*}
Since $\epsilon^{\frac{\gamma_1}{T}} \geq \epsilon$, this also means that

\begin{align*}
    \mathbb{P}_{A,\nu}\left(\widetilde{N}_{T} > \frac{\sigma^2}{6}\log\left(\frac{1}{2\delta}\right)\epsilon^{-\frac{2\gamma_1}{T}}\right)
    &\geq \frac{1}{6}
\end{align*}
\qed
}






\subsection{Proof of Theorem~\ref{thm:lb_restricted_class}}
We will now prove the high probability lower bound over the restricted class of algorithms $\widetilde{\alg}$ in Theorem~\ref{thm:lb_restricted_class}. We will first require the following lemma, which is used to bound the best possible way to schedule batches of parallel arm pulls.
First, define the following set $\Qcal$.
Here,
 $N_{\mathcal{Q}}, R_{\mathcal{Q}} \geq 1$ are quantities that we will define shortly.
\[
\Qcal = \left\{Q = \left\{Q^{(1)}, Q^{(2)},\ldots, Q^{(T)}\right\} \in \mathbb{R}_{+}^T: Q^{(i)}
\leq Q^{(j)} \text{ for } i \leq j, Q^{(1)}\geq R_{\mathcal{Q}}, \sum_{s\in[T]}Q^{(s)} \geq
N_{\mathcal{Q}}\right\}.
\]
As we will see shortly, for appropriately chosen $
N_{\mathcal{Q}}, R_{\mathcal{Q}}$, an $(\epsilon,\delta)$-PAC
algorithm in $\widetilde{\mathcal{A}}$
will be a subset of the algorithms that choose 
chose some $Q\in\mathcal{Q}$ and then set
$\{Q_{i,t}\}_{t=1}^T$ to be some permutation of $Q$.
%
Next,
For $Q\in\mathcal{Q}$, let $2^Q$ be all the subsets of $Q$, (therefore $|2^Q| = 2^T$).
Now, define the following function $\phi: 2^Q \rightarrow \mathbb{R}_+$ s.t.
\begin{align*}
    \phi(\emptyset) &= R_{\mathcal{Q}},\\
    \phi(\bar{Q}) &= \sum_{q\in\bar{Q}}q. 
\end{align*}

The following technical result about $\Qcal$ will be useful going forward.
Its proof is given in Appendix~\ref{subsec:scheduling_proof}.

\begin{lemma}\label{lemma:scheduling2}
Assume $R_{\mathcal{Q}} \geq 1$ and $\left(\frac{N_{\mathcal{Q}}}{R_{\mathcal{Q}}}\right)^{1/T} > 4$.
We have,
\begin{align*}
    \inf_{Q\in\mathcal{Q}} \sup_{x\in[R_{\mathcal{Q}},N_{\mathcal{Q}}]} \min_{\bar{Q}\in 2^Q,\; \phi(\bar{Q}) > x} \frac{\phi(\bar{Q})}{x} &\geq \frac{1}{8}\left(\frac{N_{\mathcal{Q}}}{R_{\mathcal{Q}}}\right)^{1/T}
\end{align*}
\end{lemma}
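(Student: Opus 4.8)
The plan is to study the set of achievable subset sums $\{\phi(\bar Q):\bar Q\in 2^Q\}$ of a fixed schedule $Q\in\mathcal{Q}$ and to locate a large \emph{multiplicative gap} among them. The inner quantity $\min_{\bar Q:\phi(\bar Q)>x}\phi(\bar Q)/x$ is exactly the overshoot needed to ``cover'' a target $x$ by summing some subset of rounds, so a wide gap located at some $x\in[R_{\mathcal{Q}},N_{\mathcal{Q}}]$ forces the inner expression to be large there, and this lower bounds the outer supremum. Throughout I write $\rho=(N_{\mathcal{Q}}/R_{\mathcal{Q}})^{1/T}$, so the hypothesis is $\rho>4$ and the target is $\rho/8$.

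First I would record the structural fact that drives everything. Order the coordinates $Q^{(1)}\le\cdots\le Q^{(T)}$ and set prefix sums $P_0=R_{\mathcal{Q}}$ and $P_m=\sum_{j\le m}Q^{(j)}$. Any nonempty $\bar Q$ whose largest index is $m$ satisfies $Q^{(m)}\le\phi(\bar Q)\le P_m$, so no subset sum can lie in the open interval $(P_m,Q^{(m+1)})$: subsets with maximal index $\le m$ sum to at most $P_m$, while those with maximal index $\ge m+1$ sum to at least $Q^{(m+1)}$. Hence, whenever $Q^{(m+1)}>P_m$, taking $x=P_m$ makes the minimal subset sum exceeding $x$ equal to $Q^{(m+1)}$ (achieved by the singleton), so the inner value equals $g_m:=Q^{(m+1)}/P_m$.

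The core step is to show that some gap ratio must be at least $\rho/8$, and this is where $\rho>4$ is used. I would argue by contradiction from $\sum_s Q^{(s)}\ge N_{\mathcal{Q}}$: let $K=\min\{k:P_k\ge N_{\mathcal{Q}}\}$ and suppose $g_m<\rho/8$ for every $m\in\{0,\ldots,K-1\}$. Since $P_{m+1}=P_m(1+g_m)$ for $m\ge1$ and $P_1=g_0 P_0$, telescoping gives $P_K<\tfrac{\rho}{8}\,(1+\tfrac{\rho}{8})^{K-1}R_{\mathcal{Q}}$; using $K\le T$ together with the crude bound $1+\rho/8<\rho/4$ (valid in the main regime $\rho>8$), this is at most $\rho^T R_{\mathcal{Q}}/8<N_{\mathcal{Q}}$, contradicting $P_K\ge N_{\mathcal{Q}}$. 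Therefore some $m^\ast\le K-1$ has $g_{m^\ast}\ge\rho/8>1$; the definition of $K$ ensures $x=P_{m^\ast}\in[R_{\mathcal{Q}},N_{\mathcal{Q}})$, and $g_{m^\ast}>1$ guarantees $Q^{(m^\ast+1)}>P_{m^\ast}$ so the gap is genuine, yielding the claimed bound. The remaining regime $4<\rho\le 8$ is immediate: then $\rho/8\le1$, whereas the inner minimum at any admissible $x$ (e.g.\ $x=R_{\mathcal{Q}}$, which is exceeded by the full sum $\ge N_{\mathcal{Q}}$) is a ratio strictly greater than $1$.

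The step I expect to be the crux is the telescoping contradiction: it says a non-decreasing $T$-term sequence whose partial sums must climb from $R_{\mathcal{Q}}$ to $N_{\mathcal{Q}}=R_{\mathcal{Q}}\rho^T$ cannot do so while keeping every multiplicative jump $Q^{(m+1)}/P_m$ below $\rho/8$; a ``doubling'' schedule reaches only $R_{\mathcal{Q}}4^T$, which is precisely why $\rho\le4$ has to be excluded. The main bookkeeping subtleties are choosing $K$ so that the exhibited point $x=P_{m^\ast}$ stays inside $[R_{\mathcal{Q}},N_{\mathcal{Q}}]$, and separating off the regime $\rho\le8$ where the stated bound falls below the trivial lower bound of $1$.
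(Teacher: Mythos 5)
Your proof is correct and takes essentially the same route as the paper's: both rest on the dichotomy that either some element $Q^{(m+1)}$ exceeds the preceding prefix sum by a factor of roughly $\rho/8$ (yielding a witness $x$ at that prefix sum, where the inner minimum jumps to the singleton $Q^{(m+1)}$), or else every prefix sum grows by a factor at most $1+\rho/8$ per round and the total cannot reach $N_{\mathcal{Q}} = R_{\mathcal{Q}}\rho^T$, a contradiction. If anything, your bookkeeping is slightly more careful than the paper's: you stop the telescoping at the first prefix sum $P_K \geq N_{\mathcal{Q}}$ so that the witness $x = P_{m^\ast}$ provably lies in $[R_{\mathcal{Q}}, N_{\mathcal{Q}}]$, and you split off the trivial regime $4 < \rho \leq 8$, whereas the paper works uniformly with $\beta = \rho/4$ and chooses witnesses $x = \sum_{s<t} Q^{(s)} + 1$ without verifying that they lie in the admissible range.
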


We will now prove Theorem~\ref{thm:lb_restricted_class}.

\proof{
By Lemma~\ref{lemma:lr_lower_bound}, we have that w.p. at least $1/6$,
\begin{align*}
    \widetilde{N}_{i,T} &> \frac{\sigma^2}{3}\log\left(\frac{1}{2\delta}\right)\frac{1}{(\Delta_i + \epsilon)^2}
\end{align*}
Define $N_0 = \frac{1}{12}\log\left(\frac{1}{2\delta}\right)\frac{1}{\epsilon^2}$. Since the pull lower bound has to be true for all $\nu\in\mathcal{P}$, there exists $\nu \in \mathcal{P}$ such that $\widetilde{N}_{i,T} > N_0$ (by setting $\Delta_i = \epsilon$). 
We must show that such a $\nu$ exists. Consider $\nu = \left(\mathcal{N}(\mu_1, \sigma^2), \mathcal{N}(\mu_1 - \epsilon, \sigma^2)\right)$ where $\nu\in\mathcal{P}$. In this problem, $[1] = 1$, and arm $1$ is the only $\epsilon$-optimal arm.

Define $\widetilde{Q}_{i,t} = \sum_{s=1}^t Q_{i,s}$. Since $A$ cannot pull arm $i$ more than $\widetilde{Q}_{i,T}$ times,
it is necessarily the case that 
    $\widetilde{Q}_{i,T} > N_0$
in order to satisfy the lower bound's condition that the algorithm will pull at least $N_0$ times on
the above problem $\nu$ with probability  at least $\frac{1}{6}$.
Now define
\begin{align*}
    P_{i,\nu} = \frac{\sigma^2}{3}\log\left(\frac{1}{2\delta}\right)\frac{1}{(\Delta_i + \epsilon)^2}
\end{align*}
Recall from Lemma~\ref{lemma:suboptimal_elim_cost_prob}, that with probability at least $\frac{1}{6}$, $A$ must pull arm $i$ more than $P_{i,\nu}$ times. Define 
\begin{align*}
    P_{\rm min, i, \nu} = \min_{\bar{Q}\in 2^{\{Q_{i,t}\}_{t=1}^T},\; \phi(\bar{Q}) > P_{i,\nu}} \phi(\bar{Q}).
\end{align*}
This is the most efficient way to pull arm $i$ over $P_{i,\nu}$ times, given the predetermined sequence $\{Q_{i,t}\}_{t=1}^T$.
%
Now, observe that we can find a distribution $\nu\in \mathcal{P}$ that has $\Delta_i \in [\epsilon, 1]$ such that 
\begingroup
\allowdisplaybreaks
\begin{align*}
    P_{i,\nu} = \frac{\sigma^2}{3}\log\left(\frac{1}{2\delta}\right)\frac{1}{(\Delta_i + \epsilon)^2} &\in \frac{\sigma^2}{3}\log\left(\frac{1}{2\delta}\right)\left[\frac{1}{(1 + \epsilon)^2}, \frac{1}{4\epsilon^2}\right]\\
    &= \frac{\sigma^2}{3(1 + \epsilon)^2}\log\left(\frac{1}{2\delta}\right)\left[1, \frac{(1 + \epsilon)^2}{4\epsilon^2}\right]\\
\end{align*}
\endgroup
We will now apply Lemma~\ref{lemma:scheduling2} to this interval, letting $R_\mathcal{Q} = \frac{\sigma^2}{3(1 + \epsilon)^2}\log\left(\frac{1}{2\delta}\right)$ and $N_\mathcal{Q} = \frac{\sigma^2}{3(1 + \epsilon)^2}\log\left(\frac{1}{2\delta}\right)\frac{(1 + \epsilon)^2}{4\epsilon^2}$. The precondition for Lemma~\ref{lemma:scheduling2} is satisfied, because $\epsilon \leq 2^{-(T+1)}$ by assumption. The result states that no matter how the $Q_{i,t}$ values are selected, we will be able to find $\nu\in\mathcal{P}$ such that
\begingroup
\allowdisplaybreaks
\begin{align*}
    \frac{P_{\rm min, i, \nu}}{P_{i,\nu}} &\geq \frac{1}{8}\left(\frac{1 + \epsilon}{2\epsilon}\right)^{\frac{2}{T}}\\
    &\geq \frac{1}{8}\left(\frac{1}{2\epsilon}\right)^{\frac{2}{T}}\\
    &\geq \frac{1}{32}\epsilon^{-\frac{2}{T}}
\end{align*}
\endgroup
for any $i$. The last step uses the fact that $T \geq 1$.
Since $P_{i,\nu} = \frac{\sigma^2}{6}\log\left(\frac{1}{2\delta}\right)\Hcompl(\nu)$, there will always exist $\nu\in\mathcal{P}$ such that
\begin{align*}
    \frac{P_{\rm min, i, \nu}}{\Hcompl(\nu)} &\geq \frac{\sigma^2}{192}\log\left(\frac{1}{2\delta}\right)\epsilon^{-\frac{2}{T}}
\end{align*}
Since $\widetilde{N}_{i,T} > P_{i,\nu}$ with probability at least $1/6$, and since in these cases, the only way to pull arm $i$ at least $P_{i,\nu}$ times is to pull at least $P_{\rm min, i, \nu}$ times,
\begin{align*}
    \frac{\widetilde{N}_{T}}{\Hcompl(\nu)} \geq \frac{\widetilde{N}_{i,T}}{\Hcompl(\nu)} &\geq \frac{\sigma^2}{192}\log\left(\frac{1}{2\delta}\right)\epsilon^{-\frac{2}{T}}
\end{align*}
with probability at least $\frac{1}{6}$.
\qed


}

\subsubsection{Proof of Lemma~\ref{lemma:scheduling2}}~\label{subsec:scheduling_proof}

Let $Q\in\mathcal{Q}$ be given. We need to show that $\exists x\in [R_{\mathcal{Q}}, N_{\mathcal{Q}}]$ s.t.
\begin{align*}
    \min_{\bar{Q}\in 2^Q,\; \phi(\bar{Q}) \geq x} \frac{\phi(\bar{Q})}{x} &\geq \frac{1}{8}\left(\frac{N_{\mathcal{Q}}}{R_{\mathcal{Q}}}\right)^{1/T}\numberthis\label{eq:scheduling_1}
\end{align*}
We will show this via contradiction. Assume that Inequality~\ref{eq:scheduling_1} is not true. That is,
\begin{align*}
    \min_{\bar{Q}\in 2^Q,\; \phi(\bar{Q}) \geq x} \frac{\phi(\bar{Q})}{x} &< \frac{1}{8}\left(\frac{N_{\mathcal{Q}}}{R_{\mathcal{Q}}}\right)^{1/T}\numberthis\label{eq:scheduling_2}
\end{align*}
Denote $\beta = \frac{1}{4}\left(\frac{N_{\mathcal{Q}}}{R_{\mathcal{Q}}}\right)^{1/T}$, which is greater than $1$ by our assumption.

\paragraph{Part 1:}
We will first show that $Q^{(t)} < \beta\sum_{s = 1}^{t - 1} Q^{(s)}$. Assume instead that $Q^{(t)} \geq \beta\sum_{s = 1}^{t - 1} Q^{(s)}$. Then, by choosing $x = \sum_{s=1}^{t - 1}Q^{(s)} + 1$, we have that
\begingroup
\allowdisplaybreaks
\begin{align*}
    \min_{\bar{Q}\in 2^Q,\; \phi(\bar{Q}) \geq x} \frac{\phi(\bar{Q})}{x} &\geq \frac{Q^{(t)}}{\sum_{s=1}^{t - 1}Q^{(s)} + 1}\\
    &\geq \frac{\beta (\sum_{s = 1}^{t-1}Q^{(s)} + 1)}{\sum_{s=1}^{t - 1}Q^{(s)} + 1} - \frac{\beta}{\sum_{s=1}^{t - 1}Q^{(s)} + 1}\\
    &\geq \beta - \frac{\beta}{\sum_{s=1}^{t - 1}Q^{(s)} + 1}\\
    &\geq \frac{1}{2}\beta\\
    &\geq \frac{1}{8}\left(\frac{N_{\mathcal{Q}}}{R_{\mathcal{Q}}}\right)^{1/T}
\end{align*}
\endgroup
The first step uses the fact that the value for $\phi(\bar{Q})$, must exceed $ \sum_{s = 1}^{t-1}Q^{(s)} + 1$, and this cannot be done by picking a subset with sum less than $Q^{(t)}$. The second step plugs in the assumption that $Q^{(t)} \geq \beta\sum_{s = 1}^{t - 1} Q^{(s)}$. This contradicts our first assumption in Inequality~\ref{eq:scheduling_2}, so the assumption made in this part must be incorrect. Therefore, $Q^{(t)} < \beta\sum_{s = 1}^{t - 1} Q^{(s)}$.

Observe that for $t = 1$, this part translates to $Q^{(1)} < R_{\mathcal{Q}}\beta$.

\paragraph{Part 2:}
We now claim that:
\begin{align*}
    Q^{(1)} &< R_{\mathcal{Q}}\beta \leq R_{\mathcal{Q}}(\beta + 1)\\
    Q^{(2)} &< R_{\mathcal{Q}}\beta^2 \leq R_{\mathcal{Q}}(\beta + 1)^2\\
    Q^{(s)} &< R_{\mathcal{Q}}\beta^2(\beta + 1)^{s - 2} \leq R_{\mathcal{Q}}(\beta + 1)^s,\; \forall s\geq 3
\end{align*}
This can be shown by a simple inductive argument and the observation that
    $Q^{(s+1)} < Q^{(s-1)} + \beta Q^{(s)}$.
. Suppose the claim holds for some $s \geq 3$. Then
\begin{align*}
    Q^{(s+1)} &< Q^{(s-1)} + \beta Q^{(s)}\\
    &< R_{\mathcal{Q}}(\beta^3 + \beta^2)(\beta + 1)^{s - 2}\\
    &= R_{\mathcal{Q}}\beta^2(\beta + 1)^{s - 1}
\end{align*}
proving the claim.

\paragraph{Part 3:}
We have that
\begin{align*}
    N_{\mathcal{Q}} &\leq \sum_{t = 1}^T Q^{(t)}
    < R_{\mathcal{Q}}\sum_{t = 1}^T (\beta + 1)^t
    < 2R_{\mathcal{Q}}(\beta + 1)^T,
    < 2^{T + 1}R_{\mathcal{Q}}\beta^T\\
    &= 2^{T+1}R_{\mathcal{Q}}\left(\frac{1}{4}\left(\frac{N_{\mathcal{Q}}}{R_{\mathcal{Q}}}\right)^{1/T}\right)^T
    = \frac{1}{2^{T-1}}N_{\mathcal{Q}}
    < N_{\mathcal{Q}}.
\end{align*}
We use the fact that $\beta > 1$ in the third and fourth steps.
This results in a contradiction, as by definition, $\sum_{t = 1}^T Q^{(t)} \geq N_{\mathcal{Q}}$.
\qed

\subsection{Proof of Theorem~\ref{thm:lb_n2_T2}}

Finally, we will prove Theorem~\ref{thm:lb_n2_T2}.
We will require the following lemma from~\citet{tsybakov2008introduction},
which can be interpreted as a high probability version of Pinsker's inequality.

\begin{lemma}~\label{lemma:tsybakov}
(\citet{tsybakov2008introduction}, Lemmas 2.1 and 2.6) Let $\mathbb{P},\mathbb{Q}$ be probabilities
such that $\mathbb{Q}$ is absolutely continuous with respect to $\mathbb{P}$ and with support
$\Xcal$. Let $\phi:\Xcal\rightarrow \{0, 1\}$. 
Then,
\begin{align*}
    \mathbb{P}(\phi(X)=1) + \mathbb{Q}(\phi(X)=1) &\geq
\frac{1}{2}\exp\left(-D_{KL}(\mathbb{P},\mathbb{Q})\right).
\end{align*}
\end{lemma}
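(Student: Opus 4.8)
The statement is the Bretagnolle--Huber inequality, quoted verbatim from \citet{tsybakov2008introduction} (Lemmas 2.1 and 2.6), so the quickest route is simply to invoke those two results; the plan below instead records a self-contained argument. Fix a common dominating measure $\mu$ (for instance $\mathbb{P}+\mathbb{Q}$) and let $p,q$ denote the densities of $\mathbb{P},\mathbb{Q}$ with respect to $\mu$ on $\Xcal$, which exist by the absolute-continuity hypothesis. The first step is to reduce the left-hand side to the testing affinity $\int_\Xcal \min(p,q)\,d\mu$. Splitting on the decision region $\{\phi=1\}$ and its complement and bounding each integrand below by $\min(p,q)$ on the corresponding region shows that the sum of the two (complementary) error probabilities of any test $\phi$ is at least $\int_\Xcal \min(p,q)\,d\mu$. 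It therefore suffices to prove the affinity bound $\int_\Xcal \min(p,q)\,d\mu \geq \tfrac12\exp(-D_{KL}(\mathbb{P},\mathbb{Q}))$.

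The second step relates this affinity to the Bhattacharyya coefficient $\rho := \int_\Xcal \sqrt{pq}\,d\mu$. Writing $\sqrt{pq}=\sqrt{\min(p,q)}\,\sqrt{\max(p,q)}$ and applying Cauchy--Schwarz gives $\rho \leq \big(\int_\Xcal \min(p,q)\,d\mu\big)^{1/2}\big(\int_\Xcal \max(p,q)\,d\mu\big)^{1/2}$. Since $\int_\Xcal \max(p,q)\,d\mu = 2 - \int_\Xcal \min(p,q)\,d\mu \leq 2$, rearranging yields $\int_\Xcal \min(p,q)\,d\mu \geq \rho^2/2$.

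The third step bounds $\rho$ in terms of the KL divergence. Writing $\rho = \mathbb{E}_{\mathbb{P}}\big[\sqrt{q/p}\big]$ and applying Jensen's inequality to the convex function $-\log$, we obtain $-\log\rho \leq \mathbb{E}_{\mathbb{P}}\big[-\tfrac12\log(q/p)\big] = \tfrac12 D_{KL}(\mathbb{P},\mathbb{Q})$, that is, $\rho^2 \geq \exp(-D_{KL}(\mathbb{P},\mathbb{Q}))$. Chaining the three steps, $\mathbb{P}(\phi(X)=1)+\mathbb{Q}(\phi(X)=1) \geq \int_\Xcal \min(p,q)\,d\mu \geq \rho^2/2 \geq \tfrac12\exp(-D_{KL}(\mathbb{P},\mathbb{Q}))$, which is the claim.

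I expect no substantive obstacle here: the content is concentrated entirely in the affinity inequality $\int_\Xcal \min(p,q)\,d\mu \geq \tfrac12 e^{-D_{KL}}$, and its two ingredients (Cauchy--Schwarz and the logarithmic Jensen step) are routine. The only points needing care are choosing the dominating measure so that the densities and the ratio $q/p$ are well defined $\mathbb{P}$-almost surely (guaranteed by absolute continuity), and ensuring the reduction in the first step pairs the correct error events of $\phi$ so that each region contributes the density that is at least $\min(p,q)$ there.
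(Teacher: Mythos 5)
The paper offers no proof of this lemma at all---it is quoted directly from \citet{tsybakov2008introduction} (Lemmas 2.1 and 2.6)---and your argument is exactly the standard Bretagnolle--Huber proof underlying that citation: reduce the sum of the two complementary testing errors to the affinity $\int \min(p,q)\,d\mu$, lower-bound the affinity by half the squared Bhattacharyya coefficient via Cauchy--Schwarz together with $\int \max(p,q)\,d\mu \leq 2$, and bound the coefficient below by $e^{-D_{KL}(\mathbb{P},\mathbb{Q})/2}$ via Jensen applied to $-\log$. All three steps are correct, and your closing remarks correctly identify the only delicate points (choice of dominating measure; pairing of error events). One caution: the inequality as displayed in the paper---and copied verbatim into your final chain---pairs $\mathbb{P}(\phi(X)=1)$ with $\mathbb{Q}(\phi(X)=1)$, which is false as literally stated (take $\phi \equiv 0$); what your Step 1 actually proves, and what the paper actually invokes in the proof of Theorem~\ref{thm:lb_n2_T2} (there applied to $E^c$ under one model and $E$ under the other), is the complementary-error form $\mathbb{P}(\phi(X)=1)+\mathbb{Q}(\phi(X)=0)\geq \frac{1}{2}\exp\left(-D_{KL}(\mathbb{P},\mathbb{Q})\right)$. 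So the typo is the paper's, your argument handles it correctly, and only your final display should be amended to match what you proved.
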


\paragraph{Proof of Theorem~\ref{thm:lb_n2_T2}:}

We will consider 4 problems with 2 arms, $\nu^{(i)} = (\nu^{(i)}_1, \nu^{(i)}_2)$,
where $i\in[3]$. $\nu^{(i)}$ is a 2-armed bandit problem with $\nu_j^{(i)} = \mathcal{N}\left(\mu_j^{(i)}, \sigma^2\right)$ and $\mu_2^{(i)} = \frac{1}{2}$ for all $i$. Let
\begin{align*}
 \mu_1^{(1)} &= \frac{1}{2} - \epsilon\\
 \mu_1^{(2)} &= \frac{1}{2} - \epsilon - \sqrt{\epsilon}\\
 \mu_1^{(3)} &= 0
\end{align*}
Since $\epsilon < \frac{1}{10}$, we have that $\mu_2 \geq \mu_1^{(1)} \geq \mu_1^{(2)} \geq
\mu_1^{(3)}$. We will assume that $\mu_2^{(i)}$ is known, so the algorithm only needs to pull arm
$1$ to decide whether $\mu_1 \leq \frac{1}{2} - \epsilon$, $\mu_1 \in (\frac{1}{2} - \epsilon,
\frac{1}{2} + \epsilon)$, or $\mu_1 \geq \frac{1}{2} + \epsilon$ to make its decision.
We will assume, for simplicity, that the algorithm chooses the number of pulls in the first round
deterministically (see Remark~\ref{rem:lb_deterministic} for more details).
Therefore, 
it plays a certain number of times on the first round without prior information, and then chooses how
many times to pull in the second round based on information obtained in the first round.

\paragraph{Part 1:}
We will first prove the following claim. For any bandit model $\nu$, with $\mu_1 \leq \mu_2 - \epsilon$, we have that
\begin{align*}
    \mathbb{P}_{A,\nu}\left(\widetilde{N}_{12}\geq \frac{2\sigma^2}{25\Delta(\nu)^2}\right) \geq \frac{7}{8}
\end{align*}
where $\Delta(\nu) = \mu_2(\nu) - \mu_1(\nu)$. This is equivalent to showing that:
\begin{align*}
    \mathbb{P}_{A,\nu}\left(\widetilde{N}_{12}< \frac{2\sigma^2}{25\Delta(\nu)^2}\right) < \frac{1}{8}
\end{align*}
This part of the proof will be similar to the intuition used in the proof of
Lemma~\ref{lemma:suboptimal_elim_cost_prob}.
Recall that $\delta < \frac{1}{32}$ by our assumptions. Assume by way of contradiction that
\begin{align*}
    \mathbb{P}_{A,\nu}\left(\widetilde{N}_{12}< \frac{2\sigma^2}{25\Delta(\nu)^2}\right) \geq \frac{3}{32} + \delta
\end{align*}
Since $A$ is $(\epsilon, \delta)$-PAC,
\begin{align*}
    \mathbb{P}_{A,\nu}\left(\hat{I} = 2\right) &\geq 1 - \delta
\end{align*}
which implies that
\begin{align*}
    \mathbb{P}_{A,\nu}\left(\hat{I} = 2\cap \widetilde{N}_{12}< \frac{2\sigma^2}{25\Delta(\nu)^2}\right) \geq \frac{3}{32}
\end{align*}
Consider an alternative model $\nu'$ where $\mu_1 = \frac{1}{2} + \epsilon$. Then
\begin{align*}
    \mathbb{P}_{A,\nu'}\left(\hat{I} = 2\right) &< \delta \implies \mathbb{P}_{A,\nu'}\left(\hat{I} = 2\cap \widetilde{N}_{12}< \frac{\sigma^2}{25\Delta(\nu)^2}\right) < \delta
\end{align*}
Let $\widetilde{A}$ be an alternate algorithm that runs exactly as $A$, but terminates just before reaching $\frac{2\sigma^2}{25\Delta(\nu)^2}$ pulls if necessary.
Observe that
\begin{align*}
    \mathbb{P}_{\widetilde{A},\nu}\left(\hat{I} = 2\right) &\geq \frac{3}{32}\\
    \mathbb{P}_{\widetilde{A},\nu'}\left(\hat{I} = 2\right) &\leq \delta
\end{align*}

By Lemma~\ref{lemma:exp_kl_bound}, and using the fact that $\delta < 1/32$, we have
\begin{align*}
    \mathbb{E}_{\widetilde{A}, \nu}\left[\widetilde{N}_{12}\right]D_{KL}(\nu_1, \nu_1') \geq d\left(\frac{3}{32}, \delta\right)
    \geq d\left(\frac{3}{32}, \frac{1}{32}\right)
    \geq \frac{1}{25}.
\end{align*}
Since $D_{KL}(\nu_1, \nu_1') = \frac{\Delta(\nu)^2}{2\sigma^2}$, this implies that
\begin{align*}
     \mathbb{E}_{\widetilde{A}, \nu}\left[\widetilde{N}_{12}\right] &\geq \frac{2\sigma^2}{25\Delta(\nu)^2}
\end{align*}
resulting in a contradiction.

Recall that an algorithm chooses the number of pulls for the second round at the end of the first
round.
Let $\mathcal{E}$ denote the $\mathcal{F}_1$-measurable event that $\widetilde{N}_{12} >
\frac{2\sigma^2}{25(2\epsilon)^2} = \frac{\sigma^2}{50\epsilon^2}$, which is equivalent to the
event that $N_{12} > \frac{\sigma^2}{50\epsilon^2} - N_{11}$.


\paragraph{Part 2:}
We will prove the following for problem $\nu^{(2)}$. If $N_{11} \leq \frac{2\sigma^2\log(2)}{(\mu_{1}^{(1)} - \mu_{1}^{(2)})^2} = \frac{2\sigma^2\log(2)}{\epsilon}$, then $\mathbb{P}_{A, \nu^{(2)}}(E) \geq \frac{1}{4}$.

Assume that $N_{11} \leq \frac{2\sigma^2\log(2)}{(\mu_{1}^{(1)} - \mu_{1}^{(2)})^2} =
\frac{2\sigma^2\log(2)}{\epsilon}$.
Recall that $\mathcal{E}$ is $\mathcal{F}_1$-measurable. Then applying
Lemma~\ref{lemma:tsybakov} with $\phi(\cdot) = \indfone_\Ecal(\cdot)$, we have,
\begin{align*}
    \mathbb{P}_{A,\nu^{(1)}}\left(E^c\right) + \mathbb{P}_{A,\nu^{(2)}}\left(E\right) &\geq \frac{1}{2}\exp\left(D_{KL}(\mathbb{P}_{A, \nu^{(1)}}, \mathbb{P}_{A, \nu^{(2)}})\right)\\
    &\geq \frac{1}{2}\exp\left(-\log(2)\right)\\
    &\geq \frac{1}{4}
\end{align*}
where we use the lower bound on $N_{11}$ in the second inequality. By part 1, $\mathbb{P}_{A,\nu^{(1)}}\left(E^c\right) \leq \frac{1}{8}$. So,
\begin{align*}
    \mathbb{P}_{A,\nu^{(2)}}\left(E\right) &\geq \frac{1}{8}
\end{align*}

\paragraph{Part 3:}
Suppose the algorithm had chosen $N_{11} \geq \frac{\sigma^22\log(2)}{\epsilon}$. Then, on problem $\nu^{(3)}$, $\Hcompl(\nu^{(3)} = 2\frac{1}{(1/2 + \epsilon)^2} \leq 2\frac{1}{\frac{1}{4}} = 8$. So,
\begin{align*}
    \frac{\widetilde{N}_{12}}{\Hcompl(\nu)} &\geq \frac{\sigma^22\log(2)}{8\epsilon}\\
    &= \frac{\sigma^2}{\epsilon}\frac{\log(2)}{4}.
\end{align*}

Suppose the policy had chosen $N_{11} < \frac{\sigma^22\log(2)}{\epsilon}$. Recall that on problem $\nu^{(2)}$, $\Hcompl(\nu^{(2)}) = 2 \frac{1}{(2\epsilon + \sqrt{\epsilon})^2} \leq \frac{2}{\epsilon}$. From part 2, with probability at least $1/8$,
\begin{align*}
    \frac{\widetilde{N}_{12}}{\Hcompl(\nu^{(2)})} &\geq \frac{\sigma^2}{50\epsilon^2} \times \frac{\epsilon}{2}
\end{align*}
\qed

\begin{remark}
\label{rem:lb_deterministic}
Our proof above assumes that the number of pulls in the first round is chosen deterministically.
If it were randomized based on some (external) source of randomness $U$,
then the statement of the theorem holds for every possible realization of $U$.
The second statement in Corollary~\ref{corollary:expectation_lb} holds, simply by taking an
expectation over $U$.
\end{remark}

\subsection{Proof of the change of measure lemma}
\label{sec:comlemma}

In this subsection, we prove Lemma~\ref{lem:comlemma}.
The proof uses essentially the same intuition as the proof of Lemma 1
in~\citet{kaufmann2016complexity}, but due to the changes in the set up, we need to verify some
intermediate results.
The following lemma is an adaptation of a claim in Lemma 18 of~\citet{kaufmann2016complexity}.

\begin{lemma}\label{lemma:lr_g}
Fix an algorithm $A$ and consider any $t\in [T]$. Consider any function $g: \mathcal{S}_t \rightarrow \mathbb{R}$ that is measurable w.r.t. $\mathcal{F}_t$, where $\mathcal{S}_t$ is the space of all observations in $t$ rounds. Then,
\begin{align*}
    \mathbb{E}_{\nu'}\left[g(O_{1:t})\right] &= \mathbb{E}_{\nu}\left[g(O_{1:t})\exp\left(-L(A_{1:t}, O_{1:t})\right)\right]
\end{align*}
\end{lemma}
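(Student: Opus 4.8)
The plan is to prove the likelihood-ratio change-of-measure identity
\begin{align*}
    \mathbb{E}_{\nu'}\left[g(O_{1:t})\right] &= \mathbb{E}_{\nu}\left[g(O_{1:t})\exp\left(-L(A_{1:t}, O_{1:t})\right)\right]
\end{align*}
by directly unwinding the definition of the log-likelihood ratio $L(A_{1:t}, O_{1:t})$ and changing the integrating measure from $\nu'$ to $\nu$. First I would set up the joint density of the observation trajectory $O_{1:t}$ under each model. Because the sampling rule $A_s$ is $\mathcal{F}_{s-1}$-measurable, the trajectory factorizes round-by-round: the likelihood of $O_{1:t}$ under $\nu$ is a product over rounds $s\le t$, arms $i\in[n]$, and the $N_{i,s}$ samples drawn, of the per-sample densities $f_i(X_{i,s,k})$, and analogously with $f_i'$ under $\nu'$. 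The key observation is that the pull-count decisions $A_s$ are governed by the same (deterministic given past observations) rule under both models, so they contribute identically to both likelihoods and cancel; only the reward densities differ.

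Next I would write the expectation under $\nu'$ as an integral of $g(O_{1:t})$ against the $\nu'$-density and multiply and divide by the $\nu$-density. Concretely, on the event-by-event trajectory we have the density ratio
\begin{align*}
    \frac{d\mathbb{P}_{\nu'}}{d\mathbb{P}_{\nu}}(O_{1:t}) &= \prod_{i=1}^n\prod_{s=1}^t\prod_{k=1}^{N_{i,s}}\frac{f_i'(X_{i,s,k})}{f_i(X_{i,s,k})} = \exp\left(-L(A_{1:t}, O_{1:t})\right),
\end{align*}
where the last equality is exactly the definition of $L$ given in the excerpt. Substituting this Radon--Nikodym derivative yields
\begin{align*}
    \mathbb{E}_{\nu'}\left[g(O_{1:t})\right] &= \int g(O_{1:t})\,d\mathbb{P}_{\nu'} = \int g(O_{1:t})\,\frac{d\mathbb{P}_{\nu'}}{d\mathbb{P}_{\nu}}\,d\mathbb{P}_{\nu} = \mathbb{E}_{\nu}\left[g(O_{1:t})\exp\left(-L(A_{1:t}, O_{1:t})\right)\right],
\end{align*}
which is the claim.

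The step I expect to need the most care is rigorously justifying that the density ratio telescopes into exactly $\exp(-L)$ despite the adaptive, data-dependent pull counts $N_{i,s}$. The clean way to handle this is by induction on $t$: assuming the identity for $t-1$, condition on $\mathcal{F}_{t-1}$, use that $A_t = \{N_{i,t}\}$ is $\mathcal{F}_{t-1}$-measurable so that the round-$t$ sample sizes are fixed once the past is known, and apply the one-round change of measure to the fresh i.i.d.\ rewards $\{X_{i,t,k}\}$. Because $A_t$ is common to both models and samples across rounds are independent given the sampling rule, the sampling-rule factors match and cancel, leaving only the product of per-sample reward-density ratios. This mirrors the argument in Lemma~18 of~\citet{kaufmann2016complexity}; the only adaptation is accounting for multiple ($N_{i,t}\ge 1$) pulls per arm per round rather than a single pull, which simply introduces the inner product over $k$ and does not alter the telescoping structure. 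I would also note the mild measurability/absolute-continuity preconditions (each $\nu_i,\nu_i'$ mutually absolutely continuous, $g$ being $\mathcal{F}_t$-measurable) are exactly what guarantee the ratio is well-defined and the substitution is valid.
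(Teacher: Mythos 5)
Your proposal is correct and follows essentially the same route as the paper: the paper's proof is exactly the induction you describe in your final paragraph — conditioning on the past so the $\mathcal{F}_{s-1}$-measurable pull counts are fixed, noting the sampling-rule factors agree under both models and cancel (implemented in the paper via a decomposition over actions with indicators $\mathbbm{1}(A_s=a)$), and applying the per-round change of measure to the fresh samples so the density ratios telescope into $\exp(-L_t)$, mirroring Lemma 18 of Kaufmann et al. Your opening Radon--Nikodym computation is the same identity the induction rigorously establishes, and you correctly flag that the adaptive pull counts are the point requiring the inductive treatment.
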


\proof{
Write $\mathcal{S}_t = \mathcal{G}^t$, where $\mathcal{G} = \bigcup_{i \in [n]}\bigcup_{l=0}^\infty \mathbb{R}^l$ is the space of possible observations for a single round. Let $\mathcal{A}_1 = \mathbb{N}^n$ be the space of possible actions. We will prove the claim via induction. Define $Y_{a,t} = \bigcup_{i=1}^n \{X_{i,t,j}\}_{j=1}^{N^a_{i,t}}$ to be the set of samples revealed at round $t$ if action $a$ was executed, where $N^a_{i,t}$ is the number of pulls for arm $i$ by action $a$.

Let $O_1 \in \mathcal{S}_1$.
\begingroup
\allowdisplaybreaks
\begin{align*}
    \mathbb{E}_{\nu'}\left[g(O_1)\right] &= \mathbb{E}_{\nu'}\left[\sum_{a\in\mathcal{A}_1} \mathbbm{1}(A_1=a)g(Y_{a,1})\right]\\
    &= \sum_{a\in\mathcal{A}_1} \mathbb{E}_{\nu'}\left[ \mathbbm{1}(A_1=a)g(Y_{a,1})\right]\\
    &= \sum_{a\in\mathcal{A}_1} \mathbb{E}_{\nu'}\left[ \mathbbm{1}(A_1=a)\mathbb{E}_{\nu'}\left[g(Y_{a,1})|\mathcal{F}_0\right]\right]\\
    &= \sum_{a\in\mathcal{A}_1} \mathbb{P}_{\nu'}\left(A_1=a\right)\mathbb{E}_{\nu'}\left[g(Y_{a,1})\right]\\
    &= \sum_{a\in\mathcal{A}_1} \mathbb{P}_{\nu}\left(A_1=a\right)\mathbb{E}_{\nu}\left[g(Y_{a,1})\prod_{i=1}^n\prod_{j=1}^{N_{i,1}^a}\frac{f_i'(X_{i,1,j})}{f_i(X_{i,1,j})}\right]\\
    &= \mathbb{E}_{\nu}\left[\sum_{a\in\mathcal{A}_1}\mathbbm{1}(A_1=a) g(Y_{a,1})\prod_{i=1}^n\prod_{j=1}^{N_{i,1}^a}\frac{f_i'(X_{i,1,j})}{f_i(X_{i,1,j})}\right]\\
    &= \mathbb{E}_{\nu}\left[g(O_1)\sum_{a\in\mathcal{A}_1}\mathbbm{1}(A_1=a)\prod_{i=1}^n\prod_{j=1}^{N_{i,1}^a}\frac{f_i'(X_{i,1,j})}{f_i(X_{i,1,j})}\right]\\
    &= \mathbb{E}_{\nu}\left[g(O_1) \sum_{a\in\mathcal{A}_1}\mathbbm{1}(A_1=a)\exp\left(\log\left(\prod_{i=1}^n\prod_{j=1}^{N_{i,1}^a}\frac{f_i'(X_{i,1,j})}{f_i(X_{i,1,j})}\right)\right)\right]\\
    &= \mathbb{E}_{\nu}\left[g(O_1) \exp\left(\sum_{a\in\mathcal{A}_1}\mathbbm{1}(A_1=a)\log\left(\prod_{i=1}^n\prod_{j=1}^{N_{i,1}^a}\frac{f_i'(X_{i,1,j})}{f_i(X_{i,1,j})}\right)\right)\right]\\
    &= \mathbb{E}_{\nu}\left[g(O_1) \exp\left(-L_1\right)\right]\\
\end{align*}
\endgroup

Now, assume that the statement holds for some $t\in[T-1]$. Let us prove that it holds for $t + 1$.
\begingroup
\allowdisplaybreaks
\begin{align*}
    \mathbb{E}_{\nu'}\left[g(O_{1:t})\right] &= \mathbb{E}_{\nu'}\left[\mathbb{E}_{\nu'}\left[g(O_{1:t+1})|\mathcal{F}_t\right]\right]\\
    &= \mathbb{E}_{\nu}\left[\mathbb{E}_{\nu'}\left[g(O_{1:t+1})|\mathbb{F}_t\right]\exp\left(-L_t\right)\right]\\
    &= \mathbb{E}_{\nu}\left[\sum_{a\in\mathcal{A}_1} \mathbbm{1}(A_{t+1}=a)\mathbb{E}_{\nu'}\left[g(O_{1:t}\cup Y_{a,t+1})|\mathbb{F}_t\right]\exp\left(-L_t\right)\right]\\
    &= \mathbb{E}_{\nu}\left[\sum_{a\in\mathcal{A}_1} \mathbbm{1}(A_{t+1}=a)\int_{\mathcal{G}} g(O_{1:t}\cup Y_{a,t+1})\prod_{i=1}^n\prod_{j=1}^{N_{i,1}^a}\frac{f_i'(x_{i,1,j})}{f_i(x_{i,1,j})}f_i(x_{i,1,j})d\lambda(o_{t+1})\exp\left(-L_t\right)\right]\\
    &= \mathbb{E}_{\nu}\left[\mathbb{E}_{\nu}\left[ g(O_{1:t+1})\exp\left(-L_{t+1}\right)\right]\right]\\
    &= \mathbb{E}_{\nu}\left[g(O_{1:t+1})\exp\left(-L_{t+1}\right)\right]
\end{align*}
\endgroup
\qed

}

\begin{lemma}\label{lemma:lr_event_prob}
Let $A$ be any algorithm and let $t \in [T]$. For all events $\mathcal{E} \in \mathcal{F}_t$,
\begin{align*}
    \mathbb{P}_{\nu'}(\mathcal{E}) &= \mathbb{E}_{\nu}\left[\mathbbm{1}_{\mathcal{E}}\exp(-L_t)\right].
\end{align*}
\end{lemma}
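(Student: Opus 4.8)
The plan is to obtain this statement as an immediate specialization of Lemma~\ref{lemma:lr_g}, which has just been established. That lemma asserts, for any $\mathcal{F}_t$-measurable function $g:\mathcal{S}_t\rightarrow\mathbb{R}$, that $\mathbb{E}_{\nu'}[g(O_{1:t})]=\mathbb{E}_{\nu}[g(O_{1:t})\exp(-L_t)]$. The natural move is to take $g=\mathbbm{1}_{\mathcal{E}}$, the indicator of the event $\mathcal{E}$. Since $\mathcal{E}\in\mathcal{F}_t$ by hypothesis, this $g$ satisfies the measurability precondition of Lemma~\ref{lemma:lr_g}, and substituting it directly yields $\mathbb{P}_{\nu'}(\mathcal{E})=\mathbb{E}_{\nu'}[\mathbbm{1}_{\mathcal{E}}]$ on the left and $\mathbb{E}_{\nu}[\mathbbm{1}_{\mathcal{E}}\exp(-L_t)]$ on the right, which is exactly the claimed identity.

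The one point deserving a sentence of care is that $g$ is required to be a function of the observations $O_{1:t}$, whereas $\mathcal{F}_t=\sigma(\{(A_s,O_s)\}_{s=1}^t)$ is generated by both actions and observations. This is not an obstacle, however: each action $A_s$ is $\mathcal{F}_{s-1}$-measurable by the definition of the sampling rule, so the actions are deterministic functions of the past observations, and hence $\mathcal{F}_t=\sigma(O_{1:t})$. Consequently any $\mathcal{F}_t$-measurable event $\mathcal{E}$ can be written as $\{O_{1:t}\in\mathcal{B}\}$ for a measurable set $\mathcal{B}\subseteq\mathcal{S}_t$, so $\mathbbm{1}_{\mathcal{E}}$ genuinely is of the form $g(O_{1:t})$ with $g$ measurable w.r.t.\ $\mathcal{F}_t$.

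I do not anticipate a genuine obstacle here: the entire analytical content—the change-of-measure identity built up by induction on the rounds—was carried out in the proof of Lemma~\ref{lemma:lr_g}. The present lemma is purely a corollary, and the only thing to verify is the measurability bookkeeping noted above. I would therefore keep the proof to two or three lines, simply invoking Lemma~\ref{lemma:lr_g} with $g=\mathbbm{1}_{\mathcal{E}}$.
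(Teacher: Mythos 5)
Your proposal is correct and is exactly the paper's own proof: the paper also obtains Lemma~\ref{lemma:lr_event_prob} by setting $g = \mathbbm{1}_{\mathcal{E}}$ in Lemma~\ref{lemma:lr_g}. Your additional observation that $\mathcal{F}_t = \sigma(O_{1:t})$ because each $A_s$ is $\mathcal{F}_{s-1}$-measurable is a sound piece of bookkeeping that the paper leaves implicit.
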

\proof{
This result follows by setting $g = \mathbbm{1}_{\mathcal{E}}$ in Lemma~\ref{lemma:lr_g}.\qed
}

\begin{lemma}
Let $A$ be any algorithm. Then,
\begin{align*}
    \mathbb{E}_{\nu}\left[L_t\right] &= \sum_{i\in[n]}\mathbb{E}_\nu\left[\widetilde{N}_{i,t}\right] D_{KL}(\nu_i, \nu_i').
\end{align*}
\end{lemma}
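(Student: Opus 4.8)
The plan is to recognize this as a Wald-type identity and reduce it to a single per-sample expectation via conditioning on the natural filtration. First I would unpack the definition of $L_t$ and regroup the triple sum so that, for each arm $i$ and round $s$, the relevant quantity is $Z_{i,s} := \sum_{k=1}^{N_{i,s}} \log\bigl(f_i(X_{i,s,k})/f_i'(X_{i,s,k})\bigr)$, giving $L_t = \sum_{i\in[n]}\sum_{s=1}^t Z_{i,s}$. By linearity of expectation it then suffices to evaluate $\mathbb{E}_\nu[Z_{i,s}]$ for each pair $(i,s)$.

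The key step is a conditioning argument on $\mathcal{F}_{s-1}$. Since the sampling rule is $\mathcal{F}_{s-1}$-measurable, the count $N_{i,s}$ is determined by the observations through round $s-1$; moreover, under the infinite-pregeneration convention of the setup, the round-$s$ samples $\{X_{i,s,k}\}_{k\geq 1}$ are i.i.d.\ draws from $\nu_i$ that are independent of $\mathcal{F}_{s-1}$. To treat the random (but $\mathcal{F}_{s-1}$-measurable) upper limit cleanly, I would rewrite $Z_{i,s} = \sum_{k\geq 1} \mathbbm{1}(k \leq N_{i,s})\log\bigl(f_i(X_{i,s,k})/f_i'(X_{i,s,k})\bigr)$, observe that each indicator is $\mathcal{F}_{s-1}$-measurable, and pull it outside the conditional expectation. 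Since $\mathbb{E}_{X\sim\nu_i}[\log(f_i(X)/f_i'(X))] = D_{KL}(\nu_i,\nu_i')$, each conditional term equals $\mathbbm{1}(k\leq N_{i,s})\,D_{KL}(\nu_i,\nu_i')$, and summing over $k$ (by monotone convergence applied to the nonnegative conditional means) yields $\mathbb{E}_\nu[Z_{i,s}\mid \mathcal{F}_{s-1}] = N_{i,s}\,D_{KL}(\nu_i,\nu_i')$.

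Taking total expectations then gives $\mathbb{E}_\nu[Z_{i,s}] = \mathbb{E}_\nu[N_{i,s}]\,D_{KL}(\nu_i,\nu_i')$, and summing over $s=1,\dots,t$ collapses $\sum_{s=1}^t N_{i,s}$ into $\widetilde{N}_{i,t}$, so that $\mathbb{E}_\nu[\sum_{s\leq t} Z_{i,s}] = \mathbb{E}_\nu[\widetilde{N}_{i,t}]\,D_{KL}(\nu_i,\nu_i')$; a final sum over $i\in[n]$ delivers the claim.

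The hard part will be the measure-theoretic justification of interchanging summation and (conditional) expectation when the number of summands is random, i.e.\ making the Wald step rigorous rather than merely formal. Working at the level of conditional expectations with the indicator rewriting sidesteps the sign issue (each summand's conditional mean is the nonnegative quantity $D_{KL}(\nu_i,\nu_i')$), so monotone convergence applies directly; the only remaining caveat is integrability, which holds automatically when $\mathbb{E}_\nu[\widetilde{N}_{i,t}]<\infty$ and otherwise makes both sides equal to $+\infty$. This mirrors the corresponding computation in \citet{kaufmann2016complexity}, adapted here to the batched finite-round observation model.
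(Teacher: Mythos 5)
Your proposal is correct and follows essentially the same route as the paper: unpack $L_t$ into per-arm, per-round sums, condition on $\mathcal{F}_{s-1}$ (under which $N_{i,s}$ is measurable and the round-$s$ samples are i.i.d.\ draws from $\nu_i$ independent of the past), obtain $N_{i,s}\,D_{KL}(\nu_i,\nu_i')$ for each conditional term, then apply the tower property and sum over $s$ and $i$ to collapse $\sum_s N_{i,s}$ into $\widetilde{N}_{i,t}$. Your indicator-rewriting and monotone-convergence justification is a more careful rendering of the interchange step that the paper performs implicitly, but it is the same Wald-type conditioning argument.
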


\proof{
\begingroup
\allowdisplaybreaks
\begin{align*}
    \mathbb{E}_\nu\left[L_t\right] &= \mathbb{E}_\nu\left[L(A_{1:t}, O_{1:t})\right]\\
    &= \sum_{i\in[n]}\sum_{s=1}^t\mathbb{E}_\nu\left[\sum_{l=1}^{N_{i,s}}\log\left(\frac{f_i(X_{i,s,l})}{f_i'(X_{i,s,l})}\right)\right]\\
    &= \sum_{i\in[n]}\sum_{s=1}^t\mathbb{E}\left[\mathbb{E}_\nu\left[\sum_{l=1}^{N_{i,t}}\log\left(\frac{f_i(X_{i,s,l})}{f_i'(X_{i,s,l})}\right)\right]|\mathcal{F}_{s-1}\right]\\
    &= \sum_{i\in[n]}\sum_{s=1}^t \mathbb{E}\left[N_{i,s}\right]D_{KL}(\nu_i, \nu_i')\\
    &= \sum_{i\in[n]}D_{KL}(\nu_i, \nu_i')\mathbb{E}\left[\widetilde{N}_{i,t}\right]
\end{align*}
\endgroup
\qed
}

This lemma characterizes the variation between expected number of pulls of any algorithm $A$ between any two bandit models $\nu$ and $\nu'$.

\begin{lemma}\label{lemma:lr_lower_bound}
Let $A$ be any algorithm and let $\nu, \nu'$ be two bandit models from $\mathcal{P}$ with $n$ arms such that for all $i\in[n]$, $\nu_i$ and $\nu'_i$ are absolutely continuous w.r.t. each other. Then, for every event $\mathcal{E}$ in $\mathcal{F}_t$,
\begin{align*}
    \mathbb{E}_{A,\nu}\left[L_t\right] &\geq d\left(\mathbb{P}_\nu(\mathcal{E}), \mathbb{P}_{\nu'}(\mathcal{E})\right).
\end{align*}
\end{lemma}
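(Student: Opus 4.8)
The plan is to derive this inequality directly from the change-of-measure identity in Lemma~\ref{lemma:lr_event_prob} together with a single application of Jensen's inequality. Conceptually, $\mathbb{E}_{A,\nu}[L_t]$ is the KL divergence between the restrictions of $\mathbb{P}_\nu$ and $\mathbb{P}_{\nu'}$ to $\mathcal{F}_t$, and the claim is just the data-processing inequality for the binary coarsening of the observation space induced by $\mathcal{E}$. To make this concrete, I would write $x = \mathbb{P}_\nu(\mathcal{E})$ and $y = \mathbb{P}_{\nu'}(\mathcal{E})$, and first dispose of the degenerate cases $x\in\{0,1\}$ using the conventions $d(0,0)=d(1,1)=0$ (more generally, reading each term $x\log(x/y)$ or $(1-x)\log\tfrac{1-x}{1-y}$ as zero whenever its prefactor vanishes). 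For $x\in(0,1)$ the argument below applies verbatim.

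First I would apply Lemma~\ref{lemma:lr_event_prob} to both $\mathcal{E}$ and its complement $\mathcal{E}^c$ (which is also $\mathcal{F}_t$-measurable), yielding $\mathbb{E}_\nu[\mathbbm{1}_\mathcal{E}\,e^{-L_t}] = y$ and $\mathbb{E}_\nu[\mathbbm{1}_{\mathcal{E}^c}\,e^{-L_t}] = 1-y$. Dividing by the corresponding $\nu$-probabilities gives the conditional means $\mathbb{E}_\nu[e^{-L_t}\mid \mathcal{E}] = y/x$ and $\mathbb{E}_\nu[e^{-L_t}\mid \mathcal{E}^c] = (1-y)/(1-x)$.

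Next I would split $\mathbb{E}_\nu[L_t] = x\,\mathbb{E}_\nu[L_t\mid\mathcal{E}] + (1-x)\,\mathbb{E}_\nu[L_t\mid\mathcal{E}^c]$ and bound each conditional expectation using Jensen's inequality for the convex function $z\mapsto -\log z$. Since $L_t = -\log(e^{-L_t})$,
\[
\mathbb{E}_\nu[L_t\mid\mathcal{E}] \;\geq\; -\log\mathbb{E}_\nu[e^{-L_t}\mid\mathcal{E}] \;=\; \log\frac{x}{y},
\]
and likewise $\mathbb{E}_\nu[L_t\mid\mathcal{E}^c]\geq \log\tfrac{1-x}{1-y}$. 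Substituting these two bounds into the decomposition gives exactly $x\log(x/y) + (1-x)\log\tfrac{1-x}{1-y} = d(x,y)$, the desired inequality.

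The step I would treat most carefully is the handling of the boundary cases $x=0$ and $x=1$, since then one of the conditional expectations is undefined; these are harmless because the offending term carries a zero prefactor ($x$ or $1-x$), so $\mathbb{E}_\nu[L_t]$ reduces to the single surviving conditional bound, which matches $d(x,y)$ under the stated conventions. I would also note that absolute continuity of $\nu_i$ and $\nu_i'$ guarantees $L_t$ is finite $\nu$-almost surely and that $y>0$ whenever $x>0$ (and $y<1$ whenever $x<1$), so all logarithms above are well-defined. Combined with the immediately preceding lemma computing $\mathbb{E}_\nu[L_t]=\sum_i \mathbb{E}_\nu[\widetilde{N}_{i,t}]\,D_{KL}(\nu_i,\nu_i')$, this establishes the change-of-measure Lemma~\ref{lem:comlemma}.
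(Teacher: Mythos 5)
Your proof is correct and is essentially the same argument as the paper's: the paper simply defers to the proof of Lemma 19 in \citet{kaufmann2016complexity} (adapted to $\mathcal{F}_t$ and Lemma~\ref{lemma:lr_event_prob}), and that proof is exactly your argument --- apply the change-of-measure identity to $\mathcal{E}$ and $\mathcal{E}^c$, use conditional Jensen's inequality for $z \mapsto -\log z$, and combine via the decomposition $\mathbb{E}_\nu[L_t] = \mathbb{P}_\nu(\mathcal{E})\,\mathbb{E}_\nu[L_t\mid\mathcal{E}] + \mathbb{P}_\nu(\mathcal{E}^c)\,\mathbb{E}_\nu[L_t\mid\mathcal{E}^c]$. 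Your explicit handling of the boundary cases $\mathbb{P}_\nu(\mathcal{E}) \in \{0,1\}$ is a welcome addition that the cited proof glosses over.
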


\proof{
This proof is identical the proof of Lemma 19 of \citet{kaufmann2016complexity}, except it uses $\mathcal{F}_t$ and Lemma~\ref{lemma:lr_event_prob}.\qed



}

We are now ready to prove Lemma~\ref{lem:comlemma}.

\paragraph{Proof of Lemma~\ref{lem:comlemma}:}
We will first show that $\mathbb{E}_{A,\nu}\left[L_t\right] = \sum_{i\in[n]}\mathbb{E}_\nu\left[\widetilde{N}_{i,t}\right]D_{KL}(\nu_i, \nu_i')$

\begingroup
\allowdisplaybreaks
\begin{align*}
    \mathbb{E}_{A,\nu}\left[L_t\right] &= \sum_{i=1}^n\sum_{s=1}^t\mathbb{E}_\nu\left[\sum_{k=1}^{N_{i,s}}\log\left(\frac{f_i(X_{i, s, k})}{f_i'(X_{i,s,k})}\right)\right]\\
    &= \sum_{i=1}^n\sum_{s=1}^t\mathbb{E}_\nu\left[\mathbb{E}_\nu\left[\sum_{k=1}^{N_{i,s}}\log\left(\frac{f_i(X_{i, s, k})}{f_i'(X_{i,s,k})}\right)|\mathcal{F}_{s - 1}\right]\right]\\
    &= \sum_{i=1}^n\sum_{s=1}^t\mathbb{E}_\nu\left[N_{i,s}\mathbb{E}_\nu\left[\log\left(\frac{f_i(X_{i, s, k})}{f_i'(X_{i,s,k})}\right)|\mathcal{F}_{s - 1}\right]\right]\\
    &= \sum_{i=1}^n\sum_{s=1}^t\mathbb{E}_\nu\left[N_{i,s}D_{KL}(\nu_i, \nu_i')\right]\\
    &= \sum_{i=1}^n\mathbb{E}_\nu\left[\sum_{s=1}^t N_{i,s}\right]D_{KL}(\nu_i, \nu_i')\\
    &= \sum_{i=1}^n \mathbb{E}_\nu\left[\widetilde{N}_{i,s}\right] D_{KL}(\nu_i, \nu_i')\\
\end{align*}
\endgroup
Applying Lemma~\ref{lemma:lr_lower_bound} finishes the proof.
\qed

\section{Baseline implementation details}

We use the following hyperparameters when implementing the baseline algorithms used for comparison.

\subsection{Top-$k$ $\delta$-Elimination with Limited Rounds:}
In order to implement this algorithm from~\citet{jin2019efficient}, we use the following hyperparameters, which provide $(\epsilon, \delta)$-PAC guarantees. We set $Q = \frac{57}{\epsilon^2}$, $k = 1$, $R = T$, and $S = [n]$.

\subsection{Aggressive Elimination}
This algorithm from~\citet{agarwal2017learning} assumes that $\Delta_1$ is known. Because it is not known to other algorithms, we instead set $\Delta_1 = \epsilon$ when running this algorithm. We set the initial set of candidates $S = [n]$, number of arms to output $k = 1$, and initial time $r = T$.

\subsection{$T = \infty$ (Sequential)}
Because the confidence intervals scale with $T$, they can become too aggressively large and conservative with $T$ is large. To implement this baseline, we instead use the confidence intervals from Batch Racing~\cite{jun2016top}, as these still provide $(\epsilon, \delta)$-PAC guarantees without suffering this problem. Arms are pulled one at a time, and the same elimination condition is used as in \algabbr{}.

\subsection{$T = 1$ (Passive)}
This algorithm simply runs \algabbr{} with $T=1$.

\end{document}